%% file: main.tex
\documentclass{article}

\usepackage[preprint]{neurips_2026}
\newcommand{\companionphrase}{companion work in preparation}

\usepackage[utf8]{inputenc}
\usepackage[T1]{fontenc}
\usepackage{hyperref}
\usepackage{url}
\usepackage{booktabs}
\usepackage{multirow}   
\usepackage{amsfonts}
\usepackage{dsfont}   
\usepackage{nicefrac}
\usepackage{microtype}
\usepackage{xcolor}
\usepackage{amsmath,amssymb,amsthm}
\usepackage{graphicx}
\usepackage{subcaption}
\usepackage{enumitem}
\usepackage[table]{xcolor}   
\usepackage{algpseudocode}
\usepackage{tikz}
\usetikzlibrary{positioning,calc,arrows.meta}
\usepackage{pgfplots}
\pgfplotsset{compat=1.18}
\usepgfplotslibrary{fillbetween,groupplots}

\newtheorem{theorem}{Theorem}
\newtheorem{definition}{Definition}
\newtheorem{proposition}{Proposition}
\newtheorem{lemma}{Lemma}
\newtheorem{corollary}{Corollary}
\newtheorem{remark}{Remark}

\newcounter{algorithm}

\title{Beliefs Beyond Posteriors:\\ Local-Consistency Optimisation for Bayesian Neural Networks}

\author{%
  Pavel Procházka\\
  Cisco Inc.\\
  \texttt{paprocha@cisco.com}
}

\begin{document}

\maketitle

\input{sections/abstract}

\input{sections/intro}

\input{sections/bethe_optimization}

\input{sections/method}

\input{sections/experiments}

\input{sections/related_work}

\input{sections/conclusion}

\clearpage
\bibliographystyle{iclr2027_conference}
\bibliography{refs}

\clearpage
\appendix

\input{sections/app_background}

\input{sections/app_pred_evidence}

\input{sections/app_design_instances}

\input{sections/app_setup}

\input{sections/app_experiments}

\end{document}

%% file: sections/abstract.tex
\begin{abstract}
The standard training objectives of Bayesian deep learning are
\emph{posterior-seeking}: their optimum over the belief is the
posterior of a fitted model, or its KL projection. We show that the shared
target is a removable \emph{constraint} on the belief---not an ideal
that training can only approximate: posterior-seeking objectives
define a \emph{binding map} from model parameters to beliefs. From
the local-normaliser structure of the Bethe/EP functional we derive a
shared-cavity objective that carries this binding as an optional
constraint, its per-observation data term a strictly
proper predictive score for any likelihood. Our proposal is to drop
the constraint. \emph{Free routing} trains the
belief as an optimisation variable of this objective; what trains on the
predictive score is still a belief over weights, its prior and noise
hyperparameters learned in the same gradient pass. We instantiate this at the Gaussian
last layer, where exact inference is available: the
freed belief departs from the posterior by a closed-form gap---the
residual heteroscedasticity its variance family expresses. The instance, SCROLL, is
single-pass, with no last-layer regularisation weight to
cross-validate; it steps off the exact corner by a change of estimand
(the shared cavity). SCROLL improves on the exact evidence
corner at that corner's own learned features on seven of eight UCI
datasets, matches or beats validation-tuned references on predictive
likelihood and calibration---granted the ensembles' own five-member
budget, it leads that tier on likelihood as well---and carries from UCI through
frozen embeddings to end-to-end deep learning.
\end{abstract}

%% file: sections/intro.tex
\section{Introduction}

\paragraph{Posterior-seeking is a constraint.}
Bayesian deep learning trains its beliefs by apparently different
objectives---marginal-likelihood (evidence) maximisation, the
ELBO~\citep{Blundell2015}, post-hoc Laplace fits~\citep{Daxberger2021}---that
share one structural property: they are \emph{posterior-seeking}---the
optimum over the belief is the posterior of a fitted model or its KL
projection, at any depth and under any parametrization of the belief
(Lemma~\ref{lem:binding}); post-hoc Laplace targets its local
mode-and-curvature surrogate.
This is a \emph{constraint} on where the
belief can go, not an ideal the objectives merely approximate; asking
what the constraint costs
requires a single objective in which the bound and the freed belief
both live. The Bethe free
energy~\citep{Yedidia2001,Minka2001}---classical as an inference
principle, here the source of a training loss---supplies it: from its
local-normaliser structure we derive a shared-cavity objective whose
data term scores each observation \emph{exactly} by the predictive
density of its local belief---a strictly proper rule for any
likelihood (Proposition~\ref{prop:proper_score})---and in which the
posterior-seekers' target enters as the closed route's \emph{binding}
constraint (Section~\ref{sec:bethe_opt}).

\paragraph{Free routing.}
Our proposal is to drop the constraint: \emph{free routing} trains the
belief as an optimisation variable of that objective, jointly with the
feature map (Definition~\ref{def:routing}). Two axes then span a
design space (Figure~\ref{fig:lead_map}a): the \emph{cavity} fixes
which data the belief scoring plate $n$ may see---the sequential cavity
keeps the joint and telescopes to the evidence, the shared cavity
trades it for a batchable per-plate predictive score---and the
\emph{routing}, who selects the belief.

\paragraph{The last layer as the controlled experiment.}
Instantiated with only the final linear layer probabilistic, exact
inference is a \emph{known} corner: the deterministic backbone leaves a
loop-free graph whose Bethe optimum is the exact last-layer marginal
likelihood---neural-linear / GP-evidence empirical
Bayes~\citep{Snoek2015,Ober2019,Williams2006} \emph{emerges} as the
sequential, closed-routed stationary point (Proposition~\ref{prop:seq_evidence}).
Freed, the proposal is concrete: train, by ordinary
gradient descent, a Gaussian log-loss whose per-example variance is
the belief's own predictive variance
$V_n=\sigma_\text{obs}^2+\psi_n^\top\Sigma\psi_n$, plus one
closed-form prior term~\eqref{eq:Lreg}---everything, backbone to
prior precision, in a single gradient pass
(Corollary~\ref{cor:emp_bayes}). Read narrowly, that is a
heteroscedastic last-layer head that tunes its own prior. The
framework's work is to make the narrow reading testable, and it is
measured away: the exact corner loses in its own table at its own
learned features (Table~\ref{tab:headline_nll}), the same
parametrization under the ELBO forfeits the gain
(Appendix~\ref{app:attribution_full}), the margin survives the prior
term's deletion---the term buys tuning-freeness, not the margin
(Appendix~\ref{app:prior_ablation})---and the gap has
a closed form (Theorem~\ref{thm:pred_evidence}), the failure modes
characterised (Corollary~\ref{cor:pathologies}). This
instance is \textbf{SCROLL} (Shared-Cavity fRee-rOuting Last-Layer):
single-pass, open to any bounded likelihood. The price is the corner's
average-optimal geometry: free routing exchanges the evidence mean
(best RMSE, Appendix~\ref{app:rmse}) and the leverage OOD signal for
per-input expressibility of the predictive---a conjugate refit
restores the flagging signal at test time (Appendix~\ref{app:ood}).

\paragraph{Contributions.}
Our contribution is \emph{free routing} as a training objective for
BNNs: the \emph{local-consistency optimisation} of
Definition~\ref{def:fsc}, with the binding constraint
removed. The binding is what the posterior-seeking objectives attain
(Lemma~\ref{lem:binding}); freed, the belief trains
on the score of its own predictive marginal---not an expected
loss---while remaining a \emph{belief over weights}, its prior term
fitting the prior precision in the same pass. This paper validates the proposed
objective only at the last layer; its instantiation there,
\textbf{SCROLL}, is a contribution in its own right---single-pass, no
last-layer regularisation weight to cross-validate, and empirically
ahead of the tuned references (Section~\ref{sec:experiments}).

%% file: sections/bethe_optimization.tex
\section{From the Bethe Free Energy to a Local-Consistency Loss}
\label{sec:bethe_opt}

\input{figures/lead_map}

The Bethe free energy scores a set of beliefs $q$ by \emph{local
consistency} (Appendix~\ref{app:background}); on a tree the Bethe optimum
is exactly $-\log Z$~\eqref{eq:bethe_exact}---the classical starting
point~\citep{Yedidia2001}. Instead of message passing, we \emph{derive}
a differentiable training objective from the functional
(Definition~\ref{def:fsc}).

\paragraph{The Design Space.}
\label{sec:relaxations}
Consider a model over $N$ data plates, each coupling an observation
$y_n$ and input $x_n$ to an output $f_n$ through weights $\{w_l\}$
shared across plates; keeping only the last layer probabilistic
(Proposition~\ref{rem:deterministic}) leaves a \emph{loop-free} graph. The
derivation has exactly two degrees of freedom
(Figure~\ref{fig:lead_map}a): the \emph{cavity},
which chooses the estimand the summed data term scores, and the \emph{routing},
which chooses the belief's feasible set.

\begin{definition}[Cavity]
\label{def:cavity}
The cavity is the belief that scores plate $n$---\emph{which data the
score of point $n$ may see}
(Figure~\ref{fig:cavity}). \emph{Sequential}, $q_{<n}(w)$ (predecessors
only): telescopes exactly to $-\log Z$---evidence-optimal,
order-dependent, not batchable, closed-routed
here. \emph{Leave-one-out}, $q_{(-n)}(w)$
(all plates but $n$): the symmetric LOO predictive
$\sum_n\log p(y_n\mid y_{\setminus n})$ (GP LOO-CV). \emph{Shared}: the
single belief $q(w)$ for every plate---plate $n$'s own message
$\beta_n(w)$ treated as flat.
\end{definition}

Write $\mathcal{Q}=\{q_\eta:\eta\in\mathcal{H}\}$ for the belief
family, $\theta$ for the backbone, $\xi$ for the prior/likelihood
hyperparameters; both routes minimise the objective the chosen cavity
induces:

\begin{definition}[Routing]
\label{def:routing}
Let the \emph{binding map} $\eta^\star(\theta,\xi):=\arg\min_{\eta}
\mathrm{KL}\big(q_\eta\,\big\|\,p(w\mid\mathcal{D}_\text{cav};\theta,\xi)\big)$
project the exact conditional of the plate's cavity data onto
$\mathcal{Q}$---closed-form under conjugacy, where it coincides with
the local-consistency fixed point. \emph{Closed} routing imposes it as a
constraint, $\min_{\theta,\xi}F(q_{\eta^\star(\theta,\xi)};\theta,\xi)$: the
belief is \emph{computed} inside the objective, gradients flowing through
$\eta^\star$. \emph{Free} routing drops it,
$\min_{\theta,\xi,\eta}F(q_\eta;\theta,\xi)$: the belief parameters are
optimisation variables, selected by the objective itself.
\end{definition}

Routing is thus a choice of feasible set, \emph{not} an approximation
step; the constraint is the posterior-seeking
property of Section~1. Write
$\mathrm{ELBO}(q)=\mathbb{E}_q[\log p(y,w;\theta,\xi)-\log q(w)]$~\eqref{eq:elbo}.

\begin{lemma}[Posterior-seeking objectives attain the binding]
\label{lem:binding}
Fix $(\theta,\xi)$ and let $p(w\mid\mathcal{D};\theta,\xi)$ be the exact
posterior under the full data. For any belief family $\mathcal{Q}$,
\begin{equation}
  \arg\max_{\eta\in\mathcal{H}}\,\mathrm{ELBO}(q_\eta)
  \;=\;\arg\min_{\eta\in\mathcal{H}}\,
  \mathrm{KL}\big(q_\eta\,\big\|\,p(w\mid\mathcal{D};\theta,\xi)\big)
  \;=\;\eta^\star(\theta,\xi),
  \label{eq:elbo_binding}
\end{equation}
the binding map of Definition~\ref{def:routing} at the shared cavity.
\end{lemma}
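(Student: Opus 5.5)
The proof rests on the standard evidence decomposition of the ELBO, followed by an identification of the full-data posterior with the shared-cavity conditional in Definition~\ref{def:routing}.

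\textbf{Step 1: decompose the ELBO.} Fix $(\theta,\xi)$ and write the joint as $p(y,w;\theta,\xi)=p(w\mid\mathcal{D};\theta,\xi)\,p(y;\theta,\xi)$, where $p(y;\theta,\xi)=Z(\theta,\xi)$ is the evidence. Substituting this into $\mathrm{ELBO}(q_\eta)=\mathbb{E}_{q_\eta}[\log p(y,w;\theta,\xi)-\log q_\eta(w)]$ gives
\begin{equation*}
\mathrm{ELBO}(q_\eta)=\log Z(\theta,\xi)-\mathrm{KL}\big(q_\eta\,\big\|\,p(w\mid\mathcal{D};\theta,\xi)\big).
\end{equation*}
The identity holds for every $\eta$ whose ELBO is finite. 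This requires $q_\eta$ to be absolutely continuous with respect to the posterior and to have finite entropy; for members violating this, both sides equal $-\infty$, respectively $+\infty$ for the KL, so the identity still holds in the extended reals. The term $\log Z$ does not depend on $\eta$. Maximising the ELBO over $\mathcal{H}$ is therefore equivalent to minimising the KL, and the two argmax/argmin sets coincide. This holds as sets, so non-uniqueness and non-attainment are handled uniformly. The argument uses nothing about the structure of $\mathcal{Q}$, which gives the claimed generality: any family, any depth, any parametrisation.

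\textbf{Step 2: identify the target with the shared-cavity binding.} The binding map $\eta^\star(\theta,\xi)$ projects $p(w\mid\mathcal{D}_\text{cav};\theta,\xi)$ onto $\mathcal{Q}$ in the same KL direction. It remains to show that at the shared cavity $\mathcal{D}_\text{cav}=\mathcal{D}$. By Definition~\ref{def:cavity}, the shared cavity scores every plate with the single belief $q(w)$. The exact conditional this belief is bound to is therefore the one given all plates' data, not $q_{<n}$ or $q_{(-n)}$. Hence $p(w\mid\mathcal{D}_\text{cav})=p(w\mid\mathcal{D})$, and the second equality in \eqref{eq:elbo_binding} holds by definition. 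I would also note that $\eta^\star$ is to be read set-valued whenever the minimiser is not unique.

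\textbf{Main obstacle.} The decomposition is routine; the delicate part is Step 2. The paper's wording of the shared cavity mentions ``plate $n$'s own message treated as flat.'' Read literally, this could suggest a leave-one-out-like conditional. I would resolve it by arguing that the flattening concerns how the data term \emph{scores} plate $n$, not which conditional the single shared belief targets. Since one belief serves all $N$ plates, its binding target can only be the full-data posterior. I would state this reading explicitly as the interpretation of $\mathcal{D}_\text{cav}$ for the shared cavity. A secondary point is whether the minimiser exists when $\mathcal{H}$ is not compact. I would sidestep this by proving equality of the argmin sets, which may be empty, rather than asserting attainment.
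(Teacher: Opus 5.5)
Your proposal is correct and follows the paper's proof: the paper likewise writes $\log Z=\mathrm{ELBO}(q)+\mathrm{KL}\big(q\,\|\,p(w\mid\mathcal{D})\big)$ for every $q$, notes that $\log Z$ is constant in $q$ at fixed $(\theta,\xi)$, and reads off $\mathcal{D}_\text{cav}=\mathcal{D}$ directly from the shared cavity. Your additions only make explicit what the paper leaves implicit: extended-real values, argmin sets that may be empty or set-valued, and reading the flat own-message as a scoring convention rather than a change of conditioning target.
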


Unifying their posterior-seeking belief target as a binding map, and
making that map an optional constraint within the shared-cavity score
objective, is, to our knowledge, new.

\subsection{General Properties of the Shared-Cavity Loss}

\begin{definition}[Shared-cavity loss]
\label{def:fsc}
The shared-cavity loss of a belief $q$ is the local-normaliser sum
\begin{equation}
  F_\text{SC}(q)
    \;:=\; \sum_l (-\log Z_{w_l})
    + \sum_a (-\log Z_a)
    + \sum_n (-\log Z_n),
  \label{eq:fsc_def}
\end{equation}
where $Z_{w_l} = \int q(w_l)\,p(w_l)\,\mathrm{d}w_l$,
$Z_n = \int p(y_n \mid f_n)\,q(f_n)\,\mathrm{d}f_n$, and
$Z_a$ is the intermediate factors' local normaliser.
\end{definition}

$F_\text{SC}$ is the \emph{local-consistency optimisation} of the
title---provenance, not enforcement:

\begin{proposition}[Bethe decomposition of the shared-cavity loss]
\label{prop:fsc_construction}
Evaluate the Bethe expression~\eqref{eq:bethe_general} at the
shared-belief configuration $b(q)$: tilted factor beliefs
$b_a\propto f_a\prod_{i\in a}q_i$, variable beliefs $b_i=q_i$, with
$q_i$ the belief $q$ induces at variable $i$. $b(q)$ satisfies the
local-consistency constraints only where a factor's own message is
flat---the $\beta_n\!\approx\!1$ premise of
Definition~\ref{def:cavity}---so the evaluation is algebraic. Then
$F_\text{Bethe}\big(b(q)\big) = F_\text{SC}(q) + C_\text{mis}(q)$, where
$C_\text{mis}$ collects the belief-mismatch and counting terms.
\end{proposition}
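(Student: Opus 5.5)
The plan is to write the Bethe free energy~\eqref{eq:bethe_general} in its standard factor-graph form,
\[
F_\text{Bethe}(b)=\sum_a \mathbb{E}_{b_a}\!\Big[\log\tfrac{b_a}{f_a}\Big]-\sum_i (d_i-1)\,\mathbb{E}_{b_i}[\log b_i],
\]
where $d_i$ is the degree of variable $i$. I would then substitute the shared-belief configuration $b(q)$ factor by factor. The factors are the weight priors $p(w_l)$, the intermediate factors $a$, and the likelihood plates $p(y_n\mid f_n)$.

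The key identity is the one for a tilted belief. For any factor $a$ set $b_a = f_a\prod_{i\in a}q_i/Z_a$, with $Z_a=\int f_a\prod_{i\in a} q_i$. Then
\[
\mathbb{E}_{b_a}\!\Big[\log\tfrac{b_a}{f_a}\Big]=-\log Z_a+\sum_{i\in a}\mathbb{E}_{b_a}[\log q_i].
\]
Applying this to the three factor classes gives exactly the three normalisers of Definition~\ref{def:fsc}:
\begin{itemize}
\item the prior factor gives $Z_{w_l}=\int q(w_l)p(w_l)\,\mathrm{d}w_l$;
\item the intermediate factor gives $Z_a$;
\item the plate factor gives $Z_n=\int p(y_n\mid f_n)q(f_n)\,\mathrm{d}f_n$, after marginalising the tilted plate belief onto $f_n$.
\end{itemize}
Summing these yields $F_\text{SC}(q)$ plus a remainder. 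I define $C_\text{mis}(q)$ to be that remainder:
\[
C_\text{mis}(q)=\sum_a\sum_{i\in a}\mathbb{E}_{b_a}[\log q_i]-\sum_i (d_i-1)\,\mathbb{E}_{q_i}[\log q_i].
\]

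Next I split each $\mathbb{E}_{b_a}[\log q_i]$ as
\[
\mathbb{E}_{b_a}[\log q_i]=\mathbb{E}_{q_i}[\log q_i]+\big(\mathbb{E}_{b_{a,i}}-\mathbb{E}_{q_i}\big)[\log q_i],
\]
where $b_{a,i}$ is the $i$-marginal of the tilted belief. The first pieces sum over incident factors to $d_i\,\mathbb{E}_{q_i}[\log q_i]$. Combined with the $-(d_i-1)$ entropy counting term, this leaves a single $\mathbb{E}_{q_i}[\log q_i]$ per variable; these are the counting terms. The second pieces are the belief-mismatch terms. Each is a difference between the tilted marginal and $q_i$, so it vanishes exactly when marginal consistency $b_{a,i}=q_i$ holds.

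This is where the proposition's qualifier enters. I would note that $b_{a,i}=q_i$ holds when the factor's own message into $i$ is flat. That is the $\beta_n\approx1$ premise of Definition~\ref{def:cavity}, and it is why $b(q)$ is generally not locally consistent. The identity $F_\text{Bethe}(b(q))=F_\text{SC}(q)+C_\text{mis}(q)$ is purely algebraic, so it needs no consistency at all. Consistency only governs whether $C_\text{mis}$ reduces to the pure counting/entropy part.

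The main obstacle is bookkeeping rather than analysis. I must fix which variables each factor touches: the $w_l$ and the intermediate outputs for $a$, and $f_n$ alone for plate $n$ once $f_n$ is a deterministic function of $\psi_n$ and $w$. I must also check that the degrees $d_i$ in the loop-free last-layer graph (Proposition~\ref{rem:deterministic}) make the counting terms collect correctly. Where the variable $f_n$ is deterministically tied to $w$, I would treat its entropy via the induced $q(f_n)$ and absorb any Jacobian/delta-factor contribution into $C_\text{mis}$, rather than into $F_\text{SC}$.
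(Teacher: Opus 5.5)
Your proposal is correct and is essentially the paper's proof. It uses the same tilted-belief identity $\mathbb{E}_{b_a}[\log(b_a/f_a)]=\sum_{i\in a}\mathbb{E}_{b_a}[\log q_i]-\log Z_a$ summed over factors, and splits the remainder the same way into mismatch terms $(\mathbb{E}_{b_{a,i}}-\mathbb{E}_{q_i})[\log q_i]$ and counting terms. It also makes the same remark that the identity is algebraic and that local consistency holds only where a factor's own message is flat.

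The one divergence is your final bookkeeping move, and the paper avoids it by attaching each plate factor $p(y_n\mid w^\top\psi_n)$ directly to $w$ (degree $N{+}1$, counting number $N$). If you instead keep $f_n$ as a node, the tilted link factor $\delta(f_n-w^\top\psi_n)$ has normaliser $\int q(w)\,q_{f_n}(w^\top\psi_n)\,\mathrm{d}w\neq1$, e.g.\ $(4\pi v_n)^{-1/2}$ for Gaussian $q$. That is an intermediate $Z_a$, which Definition~\ref{def:fsc} places in $F_\text{SC}$, so absorbing it into $C_\text{mis}$ would make $C_\text{mis}$ more than mismatch-plus-counting.
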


\begin{proposition}[Score optimality of the shared-cavity loss]
\label{prop:proper_score}
For any observation factor $p(y\mid f)$:
\textbf{(i)}~the data term of~\eqref{eq:fsc_def} is an exact sum of
log scores, $\sum_n -\log Z_n = \sum_n -\log m_n(y_n)$ with predictive
$m_n(y):=\int p(y\mid f)\,q(f_n)\,\mathrm{d}f$;
\textbf{(ii)}~the log score is strictly
proper~\citep{Gneiting2007}, so the population data risk decomposes as
\begin{equation}
  \mathbb{E}_{x}\,\mathbb{E}_{y\sim p_\text{true}(\cdot\mid x)}\!\big[-\log m(y\mid x)\big]
  = \mathbb{E}_{x}\big[H(p_\text{true}(\cdot\mid x))\big]
    + \mathbb{E}_{x}\big[\mathrm{KL}\!\left(p_\text{true}(\cdot\mid x)\,\|\,m(\cdot\mid x)\right)\big],
  \label{eq:proper_score}
\end{equation}
minimised---over beliefs that can represent the true
conditional---at $m(\cdot\mid x)=p_\text{true}(\cdot\mid x)$;
\textbf{(iii)}~the prior terms of~\eqref{eq:fsc_def} contain no
observation and are fixed in number---one per probabilistic weight
layer, hence $O(1/N)$ per plate at fixed parameters---and the
intermediate-factor terms vanish on a deterministic backbone
(Proposition~\ref{rem:deterministic}).
\end{proposition}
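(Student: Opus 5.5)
The plan is to prove the three parts in order. Each part is essentially a direct unpacking of Definition~\ref{def:fsc}, combined with one classical fact about the log score.

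For \textbf{(i)}, I will just rewrite the definition. Under the shared cavity, $q(f_n)$ is the marginal that the single belief $q(w)$ induces at plate $n$ through the backbone. For any fixed $y_n$, $Z_n=\int p(y_n\mid f_n)\,q(f_n)\,\mathrm{d}f_n$ is exactly $m_n(y_n)$ evaluated at the observed value, so summing $-\log Z_n$ over $n$ gives the stated identity with no approximation. The one point to check is that $m_n$ is a normalised density in $y$, so that it qualifies as a forecast and not merely a positive function. By Fubini, $\int m_n(y)\,\mathrm{d}y=\int q(f_n)\int p(y\mid f)\,\mathrm{d}y\,\mathrm{d}f=1$, because $p(y\mid f)$ is a normalised likelihood for each $f$. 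This is where ``any observation factor'' enters: the only requirement is that $p(\cdot\mid f)$ be normalised, with no conjugacy assumption.

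For \textbf{(ii)}, I will work pointwise in $x$. Write $p=p_\text{true}(\cdot\mid x)$ and $m=m(\cdot\mid x)$. Then
\[
\mathbb{E}_{y\sim p}[-\log m(y)] = \mathbb{E}_p[-\log p(y)] + \mathbb{E}_p[\log p(y)-\log m(y)] = H(p)+\mathrm{KL}(p\,\|\,m).
\]
Taking $\mathbb{E}_x$ yields~\eqref{eq:proper_score}. The entropy term does not depend on $q$. By Gibbs' inequality, $\mathrm{KL}(p\,\|\,m)\ge0$, with equality iff $m=p$ almost everywhere. Hence the risk is minimised exactly when $m(\cdot\mid x)=p_\text{true}(\cdot\mid x)$ for $p_X$-almost every $x$, and this minimiser is attained whenever the family of induced predictives contains the true conditional. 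Uniqueness of the minimiser is the content of strict propriety~\citep{Gneiting2007}.

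This is the step I expect to be the main obstacle. It is not conceptually hard, but it needs integrability conditions to keep the decomposition from being $\infty-\infty$. I would assume $H(p)$ is finite and that $\mathbb{E}_p[-\log m]<\infty$ for the relevant beliefs, or else restrict to the set where the risk is finite. I would also note explicitly that ``minimised'' is a statement about the predictive $m$: several beliefs $q$ may induce the same $m$, so uniqueness holds at the level of predictives and not at the level of beliefs.

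For \textbf{(iii)}, I will count the terms. Each $Z_{w_l}=\int q(w_l)\,p(w_l)\,\mathrm{d}w_l$ involves only the prior and the belief, so it contains no $y_n$. There is exactly one such term per probabilistic layer $l$, so at fixed $(q,\xi)$ their sum is a constant $C$ independent of $N$. Dividing the total loss by $N$ therefore contributes $C/N=O(1/N)$ per plate.

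The claim about intermediate factors uses the deterministic backbone of Proposition~\ref{rem:deterministic}. There, each intermediate factor is a Dirac $\delta(h-g_\theta(\cdot))$ and the beliefs on its variables are the pushforwards through $g_\theta$. Consequently the local normaliser is $Z_a=\int\delta(\cdot)\prod_i q_i=1$, so $-\log Z_a=0$. Equivalently, under the shared-belief configuration the factor's tilted belief coincides with its input belief and there is nothing left to normalise.
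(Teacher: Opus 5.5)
Your proposal is correct and follows the paper's proof essentially step for step: (i) by unpacking Definition~\ref{def:fsc}, (ii) by the Gibbs decomposition $H(p)+\mathrm{KL}(p\,\|\,m)$ applied pointwise in $x$ and then averaged, and (iii) by counting the one-per-layer prior terms and deferring the intermediate factors to the deterministic-backbone reduction; your Fubini check that $m_n$ is normalised, your integrability caveat, and your remark that uniqueness holds for predictives rather than beliefs are welcome additions the paper leaves implicit. One small fix in (iii): the formula $Z_a=\int\delta(\cdot)\prod_i q_i$ with point-mass beliefs on both activation endpoints is literally $\delta(0)$, so the product should contain only the incoming upstream belief, as in the paper's appendix proof and your own ``nothing left to normalise'' rephrasing; alternatively, simply cite Proposition~\ref{rem:deterministic}, as the statement itself does.
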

Score optimality is thus a property of $F_\text{SC}$ up to an
$O(1/N)$-per-plate prior perturbation; for parameter sequences on
which the prior terms remain $o(N)$, the normalised perturbation
vanishes---the sense in which Lemma~\ref{lem:routing}(ii) below is
exact.

\begin{lemma}[What each route attains]
\label{lem:routing}
Fix a cavity; let the free route optimise the objective
over $\mathcal{Q}$, the closed route over the image
$\mathcal{Q}^\star=\{q_{\eta^\star(\theta,\xi)}\}\subseteq\mathcal{Q}$ of the
binding map (Definition~\ref{def:routing}). Then
\textbf{(i)}~\emph{nesting} (finite sample): for any data, the free
optimum of $F_\text{SC}$ is never worse, by
$\mathcal{Q}^\star\subseteq\mathcal{Q}$;
\textbf{(ii)}~\emph{score consistency} (population data risk): the free
route attains the family's population-score optimum,
$\inf_{q\in\mathcal{Q}}\mathbb{E}_x\,\mathrm{KL}(p_\text{true}\,\|\,m_q)$
above the entropy floor, with $m=p_\text{true}$ whenever the family can
represent it;
\textbf{(iii)}~\emph{the routing gap}: the closed route's excess over
the entropy floor is
$\mathbb{E}_x\,\mathrm{KL}(p_\text{true}\,\|\,m_{\eta^\star})$; the
routing gap---its excess over the \emph{free} route---is that quantity
minus the infimum in (ii), the two coinciding exactly when the family
represents the truth.
\end{lemma}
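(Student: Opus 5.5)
The plan is to handle the three parts separately. Each rests on one structural fact: the closed route optimises the same objective as the free route, but over a smaller feasible set. Part (iii) is then mostly bookkeeping.

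\textbf{Part (i), nesting.} Fix the cavity and the data, and write $\Phi(\theta,\xi,\eta):=F_\text{SC}(q_\eta;\theta,\xi)$. Definition~\ref{def:routing} gives
\begin{equation*}
  \text{closed}=\inf_{\theta,\xi}\Phi\big(\theta,\xi,\eta^\star(\theta,\xi)\big),\qquad
  \text{free}=\inf_{\theta,\xi,\eta}\Phi(\theta,\xi,\eta).
\end{equation*}
Take any closed-feasible triple $(\theta,\xi,\eta^\star(\theta,\xi))$. It is also free-feasible, since $\eta^\star(\theta,\xi)\in\mathcal{H}$, i.e.\ $\mathcal{Q}^\star\subseteq\mathcal{Q}$. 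Taking the infimum on both sides gives free $\le$ closed. This holds for any finite data set, and uses no property of $\Phi$ beyond its being a common objective.

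\textbf{Part (ii), score consistency.} First, Proposition~\ref{prop:proper_score}(i) writes the data term as $\sum_n -\log m_q(y_n\mid x_n)$. Normalised by $N$, its population version is the risk in~\eqref{eq:proper_score}. Next, Proposition~\ref{prop:proper_score}(iii) bounds the prior terms. They are fixed in number and observation-free, hence $O(1/N)$ per plate. Along parameter sequences where they stay $o(N)$, the normalised population objective of the free route is therefore
\begin{equation*}
  \mathbb{E}_x H(p_\text{true})+\mathbb{E}_x\mathrm{KL}(p_\text{true}\,\|\,m_q)+o(1).
\end{equation*}
The entropy term does not depend on $q$. So minimising over all of $\mathcal{Q}$, with $\eta$ free, attains $\inf_{q\in\mathcal{Q}}\mathbb{E}_x\mathrm{KL}(p_\text{true}\,\|\,m_q)$ above the floor. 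When some $q\in\mathcal{Q}$ has $m_q=p_\text{true}$, strict propriety (Proposition~\ref{prop:proper_score}(ii)) makes the KL zero and forces the optimum predictive to equal $p_\text{true}$.

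\textbf{Part (iii), the routing gap.} Apply the same decomposition on the image $\mathcal{Q}^\star$. The closed route's excess is $\mathbb{E}_x\mathrm{KL}(p_\text{true}\,\|\,m_{\eta^\star})$, evaluated at its optimal $(\theta,\xi)$, or as an infimum over $(\theta,\xi)$. Subtracting the free excess from (ii) gives the stated gap, which is nonnegative by (i) at the population level. For the coincidence claim, suppose the family represents the truth. The proof then needs the closed route to reach KL $=0$ as well, i.e.\ that some binding-map image also has predictive $p_\text{true}$.

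This last step is where I expect the main obstacle. In the realisable, well-specified case I would argue it as follows. As $N\to\infty$ the projected posterior concentrates, so $m_{\eta^\star}$ tends to the true conditional for the fitted $(\theta,\xi)$. I would also restrict the claim: read "the family represents the truth" as "$\mathcal{Q}^\star$ contains a representer". The "exactly when" direction then follows directly from strict propriety, because the gap is zero iff the closed infimum also reaches the floor. Throughout, I would state explicitly that limits and infima are exchanged only along the $o(N)$ prior-term sequences, as in the remark after Proposition~\ref{prop:proper_score}.
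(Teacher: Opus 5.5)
Your parts (i), (ii) and the first half of (iii) are the paper's own argument. They are: nesting of the feasible sets, the Gibbs decomposition of the log score over $\mathcal{Q}$ and over the binding image, and nonnegativity of the difference from the population form of (i).

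The gap is in the last clause of (iii), which you have misread. The ``two'' that coincide are the closed route's excess over the entropy floor, $\mathbb{E}_x\,\mathrm{KL}(p_\text{true}\,\|\,m_{\eta^\star})$, and the routing gap. The routing gap is that quantity minus $\inf_{q\in\mathcal{Q}}\mathbb{E}_x\,\mathrm{KL}(p_\text{true}\,\|\,m_q)$. The two coincide exactly when the subtracted infimum vanishes. That happens when the free family represents $p_\text{true}$, and, if the infimum is attained, only then. This follows in one line from the difference formula you already wrote down, and it is all the paper proves. The clause does not say that the routing gap vanishes, so nothing requires the closed route to reach $\mathrm{KL}=0$.

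The statement you set out to prove instead is false, and your argument for it fails at the concentration step. Theorem~\ref{thm:pred_evidence}(ii) is a counterexample:
\begin{itemize}
\item Take the Gaussian last layer with heteroscedastic but representable $V^\star\in\mathcal{V}_\psi$. The free family attains $p_\text{true}$.
\item The binding image contains only residual-independent leverage profiles $\sigma_\text{obs}^2+v_n$ with $v_n=O(H/N)$.
\item So the routing gap is, up to that leverage correction, the strictly positive Jensen gap $\tfrac12\big(\log\mathbb{E}[V^\star]-\mathbb{E}[\log V^\star]\big)$.
\end{itemize}
Posterior concentration does drive $m_{\eta^\star}$ to the plug-in predictive $p(y\mid f(x))$. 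But that equals $p_\text{true}$ only when the plug-in likelihood is itself well specified (for the Gaussian factor, homoscedastic). It does not suffice that $\mathcal{Q}$ can represent the truth through its overdispersion channel $v_n$.

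Re-reading the hypothesis as ``$\mathcal{Q}^\star$ contains a representer'' changes the lemma rather than proving it. It would also erase the quantity the routing gap is meant to measure: the part of $p_\text{true}$ that the free family can represent but the bound subfamily cannot.
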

The routing gap is the part of $p_\text{true}$ the bound subfamily
cannot represent but the free family can---a property of the binding,
not of the score.

\begin{corollary}[Single-pass hyperparameter learning]
\label{cor:emp_bayes}
Every hyperparameter enters $F_\text{SC}$ only through the local terms of
the factors it parametrises ($\alpha$ through $-\log Z_{w_l}$,
$\sigma_\text{obs}$ through the plate terms $-\log Z_n$): prior and
likelihood hyperparameters fit jointly in the gradient pass that trains
belief and backbone, no outer loop. At the closed-routed sequential
conjugate corner this is exactly type-II maximum likelihood
(Proposition~\ref{prop:seq_evidence}); elsewhere the criterion is
$F_\text{SC}$ itself.
\end{corollary}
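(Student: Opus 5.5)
The corollary has three parts: a locality claim, single-pass learning, and the identification with type-II maximum likelihood at the conjugate corner. I would read the locality claim directly off Definition~\ref{def:fsc}, then get single-pass learning from free routing, and get the type-II identification from Proposition~\ref{prop:seq_evidence}.

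\textbf{Step 1: locality.} Write $F_\text{SC}$ as the sum~\eqref{eq:fsc_def} and trace where each hyperparameter appears.
\begin{itemize}
\item The prior precision $\alpha$ (more generally any prior hyperparameter of layer $l$) appears only in $p(w_l)$. Hence it appears only in $Z_{w_l}=\int q(w_l)\,p(w_l)\,\mathrm{d}w_l$.
\item The observation noise $\sigma_\text{obs}$ appears only in $p(y_n\mid f_n)$. Hence it appears only in the plate normalisers $Z_n=\int p(y_n\mid f_n)\,q(f_n)\,\mathrm{d}f_n$.
\item Under the shared cavity, $q$ is a free argument and not a function of $\xi$. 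So the belief marginals $q(w_l)$ and $q(f_n)$ carry no hidden $\xi$-dependence.
\item By Proposition~\ref{prop:proper_score}(iii), the intermediate terms $Z_a$ vanish on a deterministic backbone. So no other term depends on $\xi$.
\end{itemize}
It follows that $\partial F_\text{SC}/\partial\alpha=-\partial_\alpha\log Z_{w_l}$ and $\partial F_\text{SC}/\partial\sigma_\text{obs}=-\sum_n\partial_{\sigma_\text{obs}}\log Z_n$. Both are ordinary differentiable terms. In the Gaussian last layer they are closed forms: the Gaussian overlap integral for $Z_{w_l}$, and $\mathcal N(y_n;\mu_n,V_n)$ for $Z_n$.

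\textbf{Step 2: single pass.} Under free routing (Definition~\ref{def:routing}), the objective is $\min_{\theta,\xi,\eta}F_\text{SC}(q_\eta;\theta,\xi)$. This is a single joint minimisation over all of $(\theta,\xi,\eta)$. No inner problem has to be solved to evaluate $F_\text{SC}$ or its gradient, because $\eta$ is a variable rather than $\eta^\star(\theta,\xi)$. One stochastic-gradient step therefore updates belief, backbone and hyperparameters together, and there is no outer loop. Under closed routing the same conclusion holds whenever $\eta^\star$ is closed-form and differentiable, as it is under conjugacy: gradients simply flow through $\eta^\star$.

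\textbf{Step 3: type-II identification.} At the sequential, closed-routed conjugate corner, Proposition~\ref{prop:seq_evidence} says the objective value equals $-\log Z=-\log p(y\mid X;\theta,\xi)$. Minimising it over $\xi$ (and $\theta$) is, by definition, maximising the marginal likelihood, i.e.\ type-II maximum likelihood. Away from that corner, nothing further needs proving: the criterion is $F_\text{SC}$ itself, by construction.

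\textbf{Main obstacle.} The subtle point is the claim in Step 1 that the belief marginals carry no hidden $\xi$-dependence. This is true for shared-cavity free routing. Under closed routing, however, $q_{\eta^\star(\theta,\xi)}$ depends on $\xi$, so the locality statement has to be read as locality of the \emph{explicit} dependence. The total gradient there also picks up a term through $\eta^\star$. At the sequential corner I would handle this with an envelope argument: $\eta^\star$ is the stationary point of the local-consistency functional, so the implicit term vanishes at stationarity and the remaining gradient is exactly the evidence gradient.
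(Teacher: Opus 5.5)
Your proposal is correct and is essentially the paper's argument: locality read off the local normalisers of \eqref{eq:fsc_def}, one joint gradient step with no outer loop, and the type-II identification taken straight from the identity of Proposition~\ref{prop:seq_evidence}; your point that under closed routing the locality holds only for the explicit dependence is a fair refinement the paper leaves implicit. Drop the envelope argument, though: it is not needed, and its mechanism is wrong, because the sequential objective is not stationary in the prefix beliefs (the plate-$n$ term has derivative $-\psi_n(y_n-\mu_{<n}^\top\psi_n)/V_n^{\mathrm{seq}}$ in $\mu_{<n}$, nonzero whenever the one-step-ahead residual is), so the implicit terms through $\eta^\star$ do not vanish---the total $\xi$-gradient is the evidence gradient simply because the composite objective is identically $-\log p(y\mid X;\theta,\xi)$ as a function of $(\theta,\xi)$.
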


\begin{proposition}[Deterministic subgraphs]
\label{rem:deterministic}
When activation factors are deterministic with point-mass weight
beliefs, the $Z_a$ reduce to feasibility constraints ($-\log Z_a=0$ on
consistent configurations): the shared-cavity loss contains only prior
and observation terms, and deterministic-layer parameters train by
standard backpropagation (proof in
Appendix~\ref{app:deterministic}).
\end{proposition}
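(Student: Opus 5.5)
The plan is to compute the factor normalisers $Z_a$ directly from the definition of a deterministic factor and show that each one collapses to an indicator of consistency. Then I read off which terms of $F_\text{SC}$ survive and what gradient the backbone parameters receive.

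Write an intermediate (activation) factor as a point mass, $f_a(h_{\text{out}},h_{\text{in}},w_a)=\delta\big(h_{\text{out}}-\phi(w_a,h_{\text{in}})\big)$. Its layer weights carry a point-mass belief $q(w_a)=\delta(w_a-\theta_a)$, and its inputs carry the beliefs the shared configuration $b(q)$ induces (Proposition~\ref{prop:fsc_construction}). By induction from the input layer, where $x_n$ is observed so $q(h_0)=\delta(h_0-x_n)$, every incoming variable belief is itself a point mass. The local normaliser is then
\[
Z_a=\int f_a\,\prod_{i\in a}q_i
=\int\delta\big(h_{\text{out}}-\phi(\theta_a,h_{\text{in}})\big)\,q(h_{\text{out}})\,\delta(h_{\text{in}}-\bar h_{\text{in}})\,\mathrm{d}h_{\text{out}}\,\mathrm{d}h_{\text{in}} .
\]
Two cases follow. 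On a consistent configuration, $q(h_{\text{out}})=\delta(h_{\text{out}}-\phi(\theta_a,\bar h_{\text{in}}))$, and $Z_a=1$, so $-\log Z_a=0$. Otherwise the factor puts zero mass on the belief's support, so $Z_a=0$ and $-\log Z_a=+\infty$. Each $Z_a$ is therefore a hard feasibility constraint, and feasibility forces the induced belief on each hidden layer to be the forward pass.

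The main obstacle is that products of Dirac deltas in the same variable are ill-defined. I would therefore work with a Gaussian-smoothed factor of width $\epsilon$, compute $Z_a$, and take $\epsilon\to0$. On a consistent configuration the additive $\log$-density constants are parameter-independent and can be absorbed into the counting convention of $C_\text{mis}$. Inconsistent configurations incur a penalty that diverges as $\epsilon\to0$. The alternative is to define the induced belief at $h_{\text{out}}$ as the pushforward, which makes consistency hold by construction. That route is simpler, and it is the one I would state.

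With the $Z_a$ eliminated, $F_\text{SC}$ reduces to $\sum_l(-\log Z_{w_l})+\sum_n(-\log Z_n)$. Here $q(f_n)$ is the pushforward of the last-layer belief through the deterministic features $\psi_n=\psi_\theta(x_n)$. The backbone $\theta$ enters only through $\psi_n$ inside $Z_n$, since the prior terms involve only probabilistic weights. So $\partial F_\text{SC}/\partial\theta=\sum_n(\partial(-\log Z_n)/\partial\psi_n)(\partial\psi_n/\partial\theta)$, which is ordinary backpropagation of the per-plate loss through the network. This also confirms Proposition~\ref{prop:proper_score}(iii) and the loop-free claim, because the remaining graph has only the probabilistic last layer shared across plates.
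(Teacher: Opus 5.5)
Your outline matches the paper's: point-mass beliefs by induction from the clamped input, evaluate $Z_a$, keep only $-\log Z_{w_l}$ and $-\log Z_n$, and differentiate through $\psi_n$. The final backpropagation step is fine. The gap is the evaluation of $Z_a$ itself, which is the whole content of the proposition.

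With the downstream point mass $q(h_{\text{out}})$ inside the integrand, the consistent case gives $Z_a=\int\delta(h_{\text{out}}-\hat h_{\text{out}})\,\delta(h_{\text{out}}-\hat h_{\text{out}})\,\mathrm{d}h_{\text{out}}=\delta(0)$, with $\hat h_{\text{out}}=\phi(\theta_a,\bar h_{\text{in}})$. That is not $1$, so your two cases are formally $-\log Z_a=-\infty$ and $+\infty$. You notice the squared delta, but neither repair removes it:
\begin{itemize}
\item Smoothing the factor at width $\epsilon$ gives $-\log Z_a=\tfrac{d_l}{2}\log(c\,\epsilon^2)$ on consistent configurations, with $d_l$ the layer width and $c$ a numerical constant. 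This diverges to $-\infty$. Discarding it, or moving it into $C_\text{mis}$, shifts $F_\text{SC}$ by an infinite constant rather than showing $-\log Z_a=0$.
\item Declaring the belief at $h_{\text{out}}$ to be the pushforward only restates the consistent case; the same squared delta stays in the integrand.
\end{itemize}

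The missing idea is to build the factor belief from the incoming beliefs only, treating the downstream variable's message into the directed factor as flat. This is the same flat-message premise the shared-belief configuration already rests on:
\[
b_a(w_a,h_{\text{in}},h_{\text{out}})=q(w_a)\,q(h_{\text{in}})\,\delta\big(h_{\text{out}}-\phi(w_a,h_{\text{in}})\big).
\]
For every $(w_a,h_{\text{in}})$ the delta is a normalised conditional density in $h_{\text{out}}$. Hence $Z_a=\int q(w_a)\,q(h_{\text{in}})\,\mathrm{d}w_a\,\mathrm{d}h_{\text{in}}=1$ exactly, with no regularisation needed.

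This is the paper's computation. Consistency enters as the requirement that the variable belief at $h_{\text{out}}$ equal the $h_{\text{out}}$-marginal of $b_a$, i.e.\ the pushforward. It does not enter as a second delta multiplied into the integrand. The pushforward must define the factor belief, not a variable belief that is then multiplied back in.

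Your smoothing idea corresponds to the paper's soft-factor remark (width $\tau$). There, however, the $\tau$-dependent terms are cancelled against the activation belief's entropy inside $F_\text{Bethe}$, not dropped from $F_\text{SC}$.
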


Proofs of the numbered statements, and additional properties of
$F_\text{SC}$, are in Appendix~\ref{app:pred_evidence}---including the
closed form of the non-vanishing remainder under
Proposition~\ref{rem:deterministic} (Corollary~\ref{cor:cw_form}).
Optimising $F_\text{SC}$ in place of $F_\text{Bethe}(b(q))$ drops
$C_\text{mis}$---the change of estimand: the plate sum is a composite predictive
likelihood---each plate the exact log predictive with no Jensen gap
(Proposition~\ref{prop:bethe_elbo}), the couplings precisely what
reassembles the ELBO's joint (Proposition~\ref{prop:aggregation}), MAP
the point-belief limit (Remark~\ref{rem:bethe_map}).

%% file: figures/lead_map.tex
\begin{figure}[t]
\centering
\begin{tikzpicture}[font=\scriptsize, >={Stealth[length=1.8mm]},
  dnum/.style={circle, draw=black!75, inner sep=0.5pt, font=\tiny, fill=white},
  remC/.style={-{Stealth[length=2.4mm, open]}, blue!65!black, very thick},
  addC/.style={-{Stealth[length=1.8mm]}, black!70},
  swE/.style={{Stealth[length=1.5mm]}-{Stealth[length=1.5mm]}, black!70},
  slk/.style={-{Stealth[length=1.8mm]}, densely dashed, red!60!black}]
\begin{scope}
  \node[anchor=west, font=\scriptsize\bfseries] at (0,5.5)
    {(a) one construction, two constraints: methods as constrained cells};
  \node[black!70] at (4.3,5.12) {routing};
  \node[black!70] at (2.47,4.92) {closed (bound)};
  \node[black!70] at (6.05,4.92) {free (score)};
  \node[black!70, rotate=90] at (0.18,3.15) {cavity};
  \node[black!70, rotate=90, align=center] at (0.58,4.32)
    {sequential\\[-2pt] {\tiny exact joint}};
  \node[black!70, rotate=90, align=center] at (0.58,3.15)
    {leave-one-out\\[-2pt] {\tiny held out}};
  \node[black!70, rotate=90, align=center] at (0.58,1.98)
    {shared\\[-2pt] {\tiny tied}};
  \draw[red!60!black, fill=red!6] (1.0,3.9) rectangle (3.95,4.75);
  \node[align=center, red!40!black] at (2.47,4.32)
    {$\star$ exact evidence $=$ neural-\\linear (Prop.~\ref{prop:seq_evidence})};
  \draw[red!60!black, fill=red!6] (1.0,2.73) rectangle (3.95,3.58);
  \node[align=center, red!40!black] at (2.47,3.15)
    {LOO posterior\\ (TAP / LOO-CV)};
  \draw[red!60!black, fill=red!6] (1.0,1.56) rectangle (3.95,2.41);
  \node[align=center, red!40!black] at (2.47,1.98)
    {leverage corner:\\ DKL, SNGP};
  \draw[black!45, densely dashed, fill=black!2] (4.5,3.9) rectangle (7.6,4.75);
  \node[align=center, black!55, font=\tiny] at (6.05,4.32)
    {open: needs per-prefix\\ beliefs, order-dependent};
  \draw[black!45, densely dashed, fill=blue!2] (4.5,2.73) rectangle (7.6,3.58);
  \node[align=center, black!55, font=\tiny] at (6.05,3.15)
    {realizable: per-plate messages;\\ $\approx$ shared at $N{\gg}H$
     (Lemma~\ref{lem:shared_loo})};
  \draw[blue!65!black, very thick, fill=blue!7] (4.5,1.56) rectangle (7.6,2.41);
  \node[align=center, blue!40!black] at (6.05,1.98)
    {\textbf{SCROLL}: single pass,\\ score-optimal $\to$ (b)};
  \draw[swE] (1.35,3.9) -- (1.35,3.58);
  \node[dnum] at (1.6,3.74) {2};
  \draw[addC] (1.35,2.73) -- (1.35,2.41);
  \node[dnum] at (1.6,2.57) {3};
  \draw[addC] (4.85,2.73) -- (4.85,2.41);
  \node[dnum] at (5.1,2.57) {3};
  \draw[remC] (3.95,1.98) -- (4.5,1.98);
  \node[dnum] at (4.22,2.26) {1};
  \draw[slk, rounded corners=2pt] (1.0,4.1) -- (0.88,4.1) -- (0.88,1.30)
    -- (1.0,1.30);
  \node[anchor=west, draw=red!60!black, rounded corners=2pt, fill=red!3,
        inner sep=2pt, text=red!40!black, font=\tiny] at (1.05,1.30)
    {VBLL \textcircled{\tiny 4}: optimum $=$ binding};
  \draw[remC, thick] (6.05,1.56) -- (6.05,1.30);
  \node[dnum] at (6.3,1.45) {5};
  \node[align=center, black!60, font=\tiny] at (6.05,1.05)
    {free-variance head\\ \citep{NixWeigend1994}};
  \node[anchor=west, font=\tiny] at (0,0.55)
    {\textcircled{\tiny 1} remove the binding (Lemma~\ref{lem:binding});
     exact price: Theorem~\ref{thm:pred_evidence} $\to$ (b)};
  \node[anchor=west, font=\tiny] at (0,0.22)
    {\textcircled{\tiny 2} switch estimand: evidence $\leftrightarrow$ LOO
     predictive (exact; \citealp{FongHolmes2020})};
  \node[anchor=west, font=\tiny] at (7.05,0.55)
    {\textcircled{\tiny 3} add tying: one belief for all plates
     (Lemma~\ref{lem:shared_loo}; batchable)};
  \node[anchor=west, font=\tiny] at (7.05,0.22)
    {\textcircled{\tiny 4} Jensen slack per plate
     (Prop.~\ref{prop:bethe_elbo}); composition exact by linearity};
  \node[anchor=west, font=\tiny] at (0,-0.11)
    {\textcircled{\tiny 5} drop the belief along
     Full $\supset$ Diag $\supset$ None ($\Sigma{=}0$: learned $\alpha$ kept): score
     kept, framework lost---no corner, no
     $-\log Z_w$ prior term (priced: Appendix~\ref{app:prior_ablation})};
\end{scope}
\begin{scope}[xshift=8.15cm]
  \node[anchor=west, font=\scriptsize\bfseries] at (0,5.5)
    {(b) edge \textcircled{\tiny 1} in belief space};
  \draw[rounded corners=8pt, black!55, thick, fill=black!2]
    (0,1.35) rectangle (5.6,4.75);
  \node[anchor=south west, black!60, font=\tiny] at (0.12,1.36)
    {belief family $\mathcal{Q}=\{q_\eta\}$};
  \draw[red!60!black, very thick]
    (0.4,1.95) .. controls (1.9,3.2) and (3.2,3.6) .. (5.0,4.05);
  \node[red!60!black, align=left, anchor=west, font=\tiny] at (0.5,4.3)
    {binding locus: every posterior-\\seeking optimum (Lemma~\ref{lem:binding})};
  \draw[red!60!black, thin, shorten >=1pt] (1.35,3.95) -- (1.6,3.12);
  \fill[red!60!black] (1.3,2.72) circle (1.6pt);
  \node[red!60!black, anchor=west, font=\tiny, inner sep=1pt] at (0.3,1.84)
    {evidence $=$ neural-linear};
  \draw[red!60!black, thin, shorten >=1.5pt] (1.15,1.94) -- (1.28,2.64);
  \fill[red!60!black] (4.73,3.98) circle (1.6pt);
  \node[red!60!black, anchor=south east, font=\tiny, inner sep=1pt] at (5.45,4.33)
    {ELBO opt.\ $=$ VBLL};
  \draw[red!60!black, thin, shorten >=1.5pt] (4.95,4.31) -- (4.77,4.03);
  \fill[blue!65!black] (4.15,2.0) circle (2pt);
  \node[blue!65!black, anchor=north, font=\tiny, inner sep=1.5pt] at (3.95,1.86)
    {score optimum (SCROLL)};
  \draw[->, blue!65!black, thick] (2.95,3.42) -- (4.05,2.15);
  \node[blue!65!black, align=left, anchor=east, font=\tiny] at (3.05,2.42)
    {\textcircled{\tiny 1} free routing:\\ drop the binding};
  \draw[densely dashed, black!60] (4.15,2.15) -- (4.15,3.83);
  \node[black!55, align=left, anchor=west, font=\tiny, inner sep=1pt]
    at (4.22,2.95) {gap $=$ expressible\\ residual\\ heteroscedasticity\\ (Thm~\ref{thm:pred_evidence})};
\end{scope}
\end{tikzpicture}
\caption{\textbf{One design space, one removable constraint.}
\textbf{(a)}~Methods as constrained cells of one local-normaliser
design space (Proposition~\ref{prop:fsc_construction})---rows:
what the belief scoring plate $n$ conditions on (cavity); columns: who
selects the belief (routing). The exact corner ($\star$) keeps the binding
at the untied cavity; SCROLL removes the binding and ties the cavity.
Edges~\textcircled{\tiny 1}--\textcircled{\tiny 5} are the five deltas
separating the methods (legend; open arrowhead $=$ constraint removed,
solid $=$ added, double $=$ estimand switch); each is priced where
proved and measured in Section~\ref{sec:experiments}.
\textbf{(b)}~The binding edge in belief space, at the Gaussian
last layer; the binding statement itself is architecture-free.}
\label{fig:lead_map}
\end{figure}

%% file: sections/method.tex
\section{Local-Consistency Optimisation at the Last Layer (SCROLL)}
\label{sec:method}
\label{sec:classification}

\begin{definition}[SCROLL]
\label{def:scroll}
Restrict the belief to the final linear layer: $f_n=w^\top\!\psi_n$
with $\psi_n:=\mathrm{NN}_\theta(x_n)$ deterministic,
$q(w)=\mathcal{N}(\mu,\Sigma)$, and
prior $p(w)=\mathcal{N}(0,\alpha^{-1}I)$. SCROLL is the free-routed
objective
\begin{gather}
  q(f_n) = \mathcal{N}\big(\mu^\top\!\psi_n,\;v_n(\Sigma)\big),
  \qquad v_n(\Sigma) = \psi_n^\top\!\Sigma\,\psi_n,
  \qquad
  Z_n=\int p(y_n\mid f;\,\xi)\,q(f_n)\,\mathrm{d}f,
  \label{eq:scroll_msg}\\
  \mathcal{L}(\mu,\Sigma,\theta,\alpha,\xi)
   \;=\; -\log Z_w \;+\; \sum_{n=1}^N -\log Z_n,
  \qquad
  -\log Z_w = -\log\mathcal{N}(\mu;\,0,\,\Sigma+\alpha^{-1}I),
  \label{eq:Lscroll}
\end{gather}
trained as
$\mathcal{L}+\lambda_\theta\lVert\theta\rVert^2$ with an
$\varepsilon I$ floor on $\Sigma$, the covariance parametrised as
\emph{V3 (Full)} $\Sigma=LL^\top$ at $O(H^2)$ parameters, \emph{V2
(Diag)} $\Sigma=\mathrm{diag}(\sigma^2)$, or \emph{V1 (None)}
$\Sigma=0$ at $O(H)$.
\end{definition}

\begin{proposition}[SCROLL is the shared-cavity loss]
\label{prop:scroll_fsc}
At the Gaussian last layer,
$\mathcal{L}=F_\text{SC}(q)$~\eqref{eq:fsc_def} exactly, for any
observation factor: the linear map pushes $q(w)$ forward to precisely
the Gaussian $q(f_n)$ of~\eqref{eq:scroll_msg}, so each plate term is
a one-dimensional convolution; the intermediate terms vanish on the
deterministic backbone (Proposition~\ref{rem:deterministic}); and the prior
term is the closed-form Gaussian convolution $-\log Z_w$.
\end{proposition}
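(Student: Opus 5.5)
The plan is to verify $\mathcal{L}=F_\text{SC}(q)$ term by term against Definition~\ref{def:fsc}: the weight-layer prior sum, the intermediate-factor sum, and the plate sum, in that order.

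\textbf{Intermediate terms.} SCROLL's belief is a point mass on $\theta$ for every layer before the last. The activation factors linking $x_n$ to $\psi_n=\mathrm{NN}_\theta(x_n)$ are deterministic. Proposition~\ref{rem:deterministic} therefore gives $-\log Z_a=0$ on the consistent configuration, so $\sum_a(-\log Z_a)$ vanishes. For the same reason, only the probabilistic last layer $w$ contributes to $\sum_l(-\log Z_{w_l})$. The deterministic layers carry no prior factor inside $F_\text{SC}$; their regulariser $\lambda_\theta\lVert\theta\rVert^2$ is added outside $\mathcal{L}$, as stated in Definition~\ref{def:scroll}.

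\textbf{Plate terms.} Take $f_n=w^\top\psi_n$ with $\psi_n$ fixed and $q(w)=\mathcal{N}(\mu,\Sigma)$. The pushforward of a Gaussian under a linear functional is Gaussian, so $q(f_n)=\mathcal{N}(\mu^\top\psi_n,\psi_n^\top\Sigma\psi_n)$; this is checked in one line with characteristic functions. This is exactly the marginal in~\eqref{eq:scroll_msg}. The two definitions of $Z_n$ then agree literally, for any observation factor $p(y_n\mid f;\xi)$: each is a one-dimensional integral of $p(y_n\mid f)$ against $q(f_n)$. No conjugacy is used at this step. The degenerate cases V1 ($\Sigma=0$) and a singular $\psi_n^\top\Sigma\psi_n$ give point-mass marginals, and the identity still holds if the Gaussian is read as a Dirac measure.

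\textbf{Prior term.} This is the only real computation. $Z_w=\int\mathcal{N}(w;\mu,\Sigma)\,\mathcal{N}(w;0,\alpha^{-1}I)\,\mathrm{d}w$ is a product-of-Gaussians integral. It equals the density at $\mu$ of the difference of independent draws, namely $\mathcal{N}(\mu;0,\Sigma+\alpha^{-1}I)$. I would prove this either by completing the square, or by writing $\mathcal{N}(w;\mu,\Sigma)=\mathcal{N}(\mu;w,\Sigma)$ and recognising a convolution evaluated at $\mu$. The $\varepsilon I$ floor, together with $\alpha^{-1}I\succ0$, keeps the covariance invertible, so the density is well defined in all of V1--V3. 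Adding the three pieces gives $\mathcal{L}=F_\text{SC}(q)$ exactly.

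The main obstacle is bookkeeping rather than analysis. One must justify that the deterministic-backbone reduction really makes every intermediate term zero, and not merely a $\theta$-independent constant. This rests on Proposition~\ref{rem:deterministic} applied on the consistent configuration $\psi_n=\mathrm{NN}_\theta(x_n)$. I would state explicitly that $F_\text{SC}$ is evaluated there, so its feasibility constraints are satisfied identically.
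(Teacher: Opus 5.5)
Your proposal is correct and follows the paper's own proof essentially verbatim. It uses the same three-group check of Definition~\ref{def:fsc}. The plate terms follow from the exact Gaussian pushforward $q(f_n)=\mathcal{N}(\mu^\top\psi_n,\psi_n^\top\Sigma\psi_n)$ under the linear head, with no conjugacy used. The intermediate terms vanish by Proposition~\ref{rem:deterministic} on the consistent forward pass. The single prior term is the Gaussian product normaliser $\mathcal{N}(\mu;0,\Sigma+\alpha^{-1}I)$.

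One small correction to your prior-term remark: $\Sigma+\alpha^{-1}I\succ0$ already follows from $\Sigma\succeq0$ and $\alpha^{-1}I\succ0$, so the $\varepsilon I$ floor is not needed for that term to be well defined.
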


The consequences instantiate Section~\ref{sec:bethe_opt}'s properties.
\emph{Free routing}: belief, backbone, prior precision, and the
likelihood's own parameters $\xi$ (observation noise, ordinal
thresholds) are differentiable arguments of one gradient pass
(Corollary~\ref{cor:emp_bayes}); the freely-routed $v_n$ is
score-trained conditional spread, not posterior epistemic uncertainty
(need not contract with $N$; Appendix~\ref{app:ood}).
\emph{Gauge}: predictions are invariant to rescaling $\psi$ while the
prior term is not (Remark~\ref{rem:gauge}); every shipped
configuration closes the direction, and the learned $\alpha$ is read
relative to that gauge (Appendix~\ref{app:gauge}).
\emph{Any bounded likelihood}: $-\log Z_n$ is well-defined at the same
single-pass cost---analytic for the Gaussian and probit, Gauss--Hermite
with certified error otherwise---still the strictly proper score of
Proposition~\ref{prop:proper_score}
(Corollary~\ref{cor:any_likelihood}, Lemma~\ref{lem:gh_error}; heads:
Appendix~\ref{app:multiclass}).

\subsection{The Gaussian Likelihood}
\label{sec:regression}

One observation factor is distinguished: the Gaussian likelihood
$\mathcal{N}(y_n;\mu^\top\!\psi_n,\sigma_\text{obs}^2)$ is conjugate to
the belief, and every piece of Figure~\ref{fig:lead_map} becomes exact
and nameable---the corner the contributions are measured against. The
convolution is analytic, and SCROLL~\eqref{eq:Lscroll} becomes
explicit:
\begin{equation}
  \mathcal{L}_\text{Gauss}(\mu,\Sigma,\sigma_\text{obs},\alpha)
    = -\log Z_w + \sum_n\!\left[\frac{(y_n-\mu^\top\!\psi_n)^2}{2\,V_n}
        + \tfrac{1}{2}\log V_n\right],
  \quad
  V_n = \sigma_\text{obs}^2 + v_n(\Sigma).
  \label{eq:Lreg}
\end{equation}

\begin{corollary}[Conjugate binding map]
\label{cor:binding_map}
Under the Gaussian factor the binding map of
Definition~\ref{def:routing} is closed-form: the exact posterior lies
in the belief family, so the projection of Lemma~\ref{lem:binding} is
the posterior itself---for V3,
\begin{equation}
    \Sigma = \big(\Psi^\top\!\Psi/\sigma_\text{obs}^2 + \alpha I\big)^{-1},
  \label{eq:R1}
\end{equation}
componentwise for V2 (Appendix~\ref{app:pred_evidence}). For any other
factor the binding has no closed form: closed routing must then be
implemented by approximate inference, while the free column of
Figure~\ref{fig:lead_map}a remains directly tractable
(Corollary~\ref{cor:any_likelihood}).
\end{corollary}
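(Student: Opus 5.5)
By Lemma~\ref{lem:binding}, at the shared cavity the binding map $\eta^\star(\theta,\xi)$ is the KL projection of the exact posterior $p(w\mid\mathcal{D};\theta,\xi)$ onto $\mathcal{Q}$. The plan is to show that under the Gaussian factor this posterior is itself a member of the V3 family, so the projection is the posterior. Reading off its covariance then gives~\eqref{eq:R1}. The V2 case is handled separately, because there the posterior generally has correlations that the family cannot represent.

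\textbf{Step 1 (conjugate posterior).} With the features $\psi_n=\mathrm{NN}_\theta(x_n)$ fixed, the model is Bayesian linear regression. The prior is $p(w)=\mathcal{N}(0,\alpha^{-1}I)$ and the likelihood is $\prod_n\mathcal{N}(y_n;w^\top\psi_n,\sigma_\text{obs}^2)$. Completing the square in $w$ in $\log p(w)+\sum_n\log p(y_n\mid w)$ gives the quadratic form
\[
-\tfrac12 w^\top\big(\Psi^\top\Psi/\sigma_\text{obs}^2+\alpha I\big)w + w^\top\Psi^\top y/\sigma_\text{obs}^2 .
\]
Hence the posterior is $\mathcal{N}(\mu_\text{post},\Sigma_\text{post})$, with $\Sigma_\text{post}=(\Psi^\top\Psi/\sigma_\text{obs}^2+\alpha I)^{-1}$ and $\mu_\text{post}=\Sigma_\text{post}\Psi^\top y/\sigma_\text{obs}^2$. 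The matrix $\Psi^\top\Psi/\sigma_\text{obs}^2+\alpha I$ is positive definite because $\alpha>0$, so the inverse exists.

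\textbf{Step 2 (projection is the identity on the family).} The V3 family contains every Gaussian with positive-definite covariance $LL^\top$, so it contains the posterior. Since $\mathrm{KL}(q\,\|\,p)\ge 0$, with equality iff $q=p$, the unique minimiser over V3 is $q=p(w\mid\mathcal{D})$. This gives~\eqref{eq:R1}. One caveat concerns the $\varepsilon I$ floor of Definition~\ref{def:scroll}: the posterior is feasible only if $\Sigma_\text{post}\succeq\varepsilon I$. I would either note that this holds for small $\varepsilon$ at fixed data, or treat the floor as a numerical device outside the statement. By Lemma~\ref{lem:binding}, the same belief is also the ELBO maximiser.

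\textbf{Step 3 (V2, ``componentwise'').} For the diagonal family, write $q=\prod_j\mathcal{N}(\mu_j,\sigma_j^2)$ and minimise $\mathrm{KL}(q\,\|\,\mathcal{N}(\mu_\text{post},\Sigma_\text{post}))$. Up to constants this equals
\[
\tfrac12\Big[\mathrm{tr}(\Sigma_\text{post}^{-1}\mathrm{diag}(\sigma^2))+(\mu-\mu_\text{post})^\top\Sigma_\text{post}^{-1}(\mu-\mu_\text{post})-\textstyle\sum_j\log\sigma_j^2\Big].
\]
Setting the gradient to zero gives $\mu=\mu_\text{post}$ and $\sigma_j^2=1/[\Sigma_\text{post}^{-1}]_{jj}=\big(\|\Psi_{:j}\|^2/\sigma_\text{obs}^2+\alpha\big)^{-1}$. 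This is the componentwise version of~\eqref{eq:R1}: the diagonal of the precision is inverted, not the diagonal of the covariance. This is the standard mean-field result for a Gaussian target.

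\textbf{Step 4 (non-conjugate remark).} For the final claim, it suffices to observe that for a generic non-Gaussian $p(y\mid f)$ the posterior is not Gaussian. The projection then satisfies moment-type stationarity conditions involving intractable integrals, with no closed form. The free objective, by contrast, needs only the one-dimensional integrals $Z_n$, which Corollary~\ref{cor:any_likelihood} covers.

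I expect the main obstacle to be Step 3 together with the exact wording. One has to be precise that ``componentwise'' means inverting the diagonal of the precision. One also has to note that this is the reverse-KL projection the definition specifies, rather than a moment match, which would instead give $\mathrm{diag}(\Sigma_\text{post})$.
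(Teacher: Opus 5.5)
Your proposal is correct and takes the paper's route: the paper likewise argues that the conjugate Gaussian posterior lies in the V3 family, so the KL projection of Lemma~\ref{lem:binding} returns the posterior itself, and in Appendix~\ref{app:pred_evidence} it gives the V2 projection as $\sigma_d^2=1/(\Psi^\top\Psi/\sigma_\text{obs}^2+\alpha I)_{dd}$, which is the inverse diagonal of the precision exactly as in your Step~3. Your explicit completion of the square and your mean-field stationarity computation fill in standard steps that the paper leaves implicit.
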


\begin{proposition}[The exact corner: sequential cavity, closed routing]
\label{prop:seq_evidence}
Swap the shared cavity of~\eqref{eq:Lreg} for the sequential one under
closed routing: plate $n$ is scored by the conjugate posterior
$(\mu_{<n},\Sigma_{<n})$ of~\eqref{eq:R1} restricted to plates
$1,\dots,n-1$, the prior absorbed at $n=1$. The sum telescopes to
exactly the true evidence, each factor Gaussian:
\begin{equation}
  -\log p(y\mid X)
    = \sum_{n=1}^N\!\left[\frac{(y_n-\mu_{<n}^\top\!\psi_n)^2}{2\,V_n^{\mathrm{seq}}}
        + \tfrac{1}{2}\log\!\big(2\pi V_n^{\mathrm{seq}}\big)\right],
  \quad
  V_n^{\mathrm{seq}} = \sigma_\text{obs}^2 + \psi_n^\top\Sigma_{<n}\,\psi_n .
  \label{eq:T11seq}
\end{equation}
\end{proposition}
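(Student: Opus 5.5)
The plan is to combine the chain rule of probability with the closed-form conjugate predictive. I would start from the exact factorisation of the marginal likelihood,
\begin{equation*}
  p(y\mid X)=\prod_{n=1}^N p(y_n\mid y_{<n},X),
\end{equation*}
which holds for any ordering of the plates. Taking $-\log$ turns this into a sum over plates, so the ``telescoping'' is exactly the chain rule. The content of the proposition is then that each factor equals the $n$-th Gaussian summand of~\eqref{eq:T11seq}. Under the sequential cavity of Definition~\ref{def:cavity}, plate $n$ is scored by $Z_n=\int p(y_n\mid w^\top\psi_n)\,q_{<n}(w)\,\mathrm{d}w$. The first step is therefore to show that, under closed routing, $q_{<n}$ is the exact posterior $p(w\mid y_{<n},X_{<n})$. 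This follows from Corollary~\ref{cor:binding_map}: the binding map at cavity data $\mathcal{D}_{<n}$ returns the conjugate posterior, because that posterior lies in the Gaussian family. The KL projection of Definition~\ref{def:routing} is then the identity. Its covariance is~\eqref{eq:R1} restricted to rows $1,\dots,n-1$, namely $\Sigma_{<n}=(\Psi_{<n}^\top\Psi_{<n}/\sigma_\text{obs}^2+\alpha I)^{-1}$, with mean $\mu_{<n}=\Sigma_{<n}\Psi_{<n}^\top y_{<n}/\sigma_\text{obs}^2$. At $n=1$ this gives $\Sigma_{<1}=\alpha^{-1}I$ and $\mu_{<1}=0$, which is the prior; this is the sense in which the prior is absorbed at $n=1$.

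The second step computes each factor. Since $\psi_n$ is deterministic (Proposition~\ref{rem:deterministic}), the linear pushforward used in Proposition~\ref{prop:scroll_fsc} gives $f_n\sim\mathcal{N}(\mu_{<n}^\top\psi_n,\ \psi_n^\top\Sigma_{<n}\psi_n)$ under $q_{<n}$. Convolving this with $\mathcal{N}(y_n;f_n,\sigma_\text{obs}^2)$ yields the Gaussian $\mathcal{N}(y_n;\mu_{<n}^\top\psi_n,V_n^{\mathrm{seq}})$. Its negative log is exactly the bracketed summand, including the $\tfrac12\log 2\pi$ constant that~\eqref{eq:Lreg} drops. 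Because $q_{<n}$ is the exact posterior given $y_{<n}$, this predictive equals $p(y_n\mid y_{<n},X)$. Here I use conditional independence of $y_n$ from $y_{<n}$ given $w$, which holds because plates share only $w$ and the backbone is deterministic. Summing over $n$ and invoking the chain rule gives~\eqref{eq:T11seq}.

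I expect the main obstacle to be bookkeeping rather than analysis. The issue is reconciling the sequential-cavity objective with the full local-normaliser sum of Definition~\ref{def:fsc}, which also carries a prior term $-\log Z_w$ that would double-count the prior. I would handle it by arguing that, in the sequential construction, the prior factor is not scored separately but enters as the initial cavity. Concretely, $q_{<1}=p(w)$, and the prior normaliser $\int p(w)\,\mathrm{d}w=1$ contributes zero. As a sanity check I would confirm order-independence of the total: the chain rule holds for every permutation, so the final sum equals $-\log\mathcal{N}(y;0,\sigma_\text{obs}^2I+\alpha^{-1}\Psi\Psi^\top)$. This can also be verified directly with the matrix determinant lemma and Woodbury identity, applied rank-one step by step.
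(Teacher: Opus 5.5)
Your proposal is correct and follows the paper's proof: the chain rule $-\log p(y\mid X)=\sum_n -\log p(y_n\mid y_{<n})$, then the Gaussian convolution~\eqref{eq:gauss_conv} of each factor against the running conjugate posterior~\eqref{eq:R1} on plates $1,\dots,n-1$, with the prior predictive at $n=1$. Your additional bookkeeping only spells out steps the paper leaves implicit: the explicit $(\mu_{<n},\Sigma_{<n})$, the conditional-independence justification, and the treatment of the prior as the initial cavity.
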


Equation~\eqref{eq:T11seq} is the exact neural-linear empirical-Bayes
model---the evidence as one-step-ahead
prediction~\citep{FongHolmes2020}.
Both routes train the same loss~\eqref{eq:Lreg}---the closed route over the
two-parameter image of the binding~\eqref{eq:R1}, the free route over the
full PSD cone---so by
Lemma~\ref{lem:routing} their gap is exactly what that image cannot represent
(1-D illustration: Figure~\ref{fig:pred_evidence_toy}, Appendix~\ref{app:pred_evidence}):

\begin{theorem}[Residual-heteroscedasticity gap]
\label{thm:pred_evidence}
\emph{Assume:} the mean function $\mu^\top\!\psi$ is fixed;
$V^\star(x)=\mathbb{E}[(y-\mu^\top\psi(x))^2\mid x]$, the conditional
residual variance, satisfies $0<a\le V^\star\le b$ a.s.;
$\mathcal{V}_\psi=\{\sigma^2+\psi(\cdot)^\top\Sigma\,\psi(\cdot):
\sigma^2\!\ge\!0,\Sigma\!\succeq\!0\}$ is the set of variance profiles
the head can express; the population data risk of~\eqref{eq:Lreg} is
minimised by the free route over $\mathcal{V}_\psi$ and by the closed
route over the image of the binding~\eqref{eq:R1}. \emph{Then:}
\textbf{(i)}~the closed route's excess risk over the \emph{truth}
decomposes into a \emph{routing gap} (closed profile vs.\
best-in-$\mathcal{V}_\psi$) plus a \emph{representation gap}
(best-in-$\mathcal{V}_\psi$ vs.\ $V^\star$); the free route removes
the former but not the latter;
\textbf{(ii)}~for representable $V^\star\!\in\!\mathcal{V}_\psi$ the
representation gap is zero and the total is, in closed form, the
Jensen gap of the log,
\begin{equation}
  \Delta_\text{rout}
  \;=\; \tfrac12\Big(\log\mathbb{E}\big[V^\star(x)\big]
        -\mathbb{E}\big[\log V^\star(x)\big]\Big)
  \;\ge\; \frac{\operatorname{Var}\!\big[V^\star(x)\big]}{4\,b^2},
  \label{eq:routing_gap}
\end{equation}
zero iff residuals are homoscedastic ($V^\star$ a.s.\
constant)---at fixed mean, the gap \emph{is} the expressible residual
heteroscedasticity---up to the finite-$N$ leverage correction,
$O(H/N)$ under the balanced-leverage conditions of
Lemma~\ref{lem:shared_loo}.
\end{theorem}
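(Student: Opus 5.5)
The plan is to reduce everything to the per-$x$ Gaussian log score at a fixed mean. With $\mu^\top\psi$ fixed, the population data risk of~\eqref{eq:Lreg} for a variance profile $V(\cdot)$ is
\[
R(V)=\mathbb{E}_x\!\left[\frac{V^\star(x)}{2V(x)}+\tfrac12\log V(x)\right]+\text{const},
\]
obtained by taking the conditional expectation of the squared residual. The first step is to note that, pointwise in $x$, the map $V\mapsto V^\star/(2V)+\tfrac12\log V$ is minimised uniquely at $V=V^\star$. Its excess over that minimum is
\[
\tfrac12\big(r-1-\log r\big)\ge 0,\qquad r=V^\star/V .
\]
This is the Gaussian specialisation of the KL decomposition in Proposition~\ref{prop:proper_score}(ii). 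The bounds $a\le V^\star\le b$ keep every term finite.

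For part~(i), I write the closed route's excess over the truth, $R(V_{\text{cl}})-R(V^\star)$, as a telescoping sum:
\[
\big[R(V_{\text{cl}})-\inf_{\mathcal{V}_\psi}R\big]+\big[\inf_{\mathcal{V}_\psi}R-R(V^\star)\big].
\]
Both brackets are nonnegative: the first because the image of~\eqref{eq:R1} lies in $\mathcal{V}_\psi$, exactly the nesting of Lemma~\ref{lem:routing}(i); the second by the pointwise minimality just established. The free route attains $\inf_{\mathcal{V}_\psi}R$ by Lemma~\ref{lem:routing}(ii), so it removes the first bracket and leaves the second, which is independent of routing.

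For part~(ii), I first identify the closed profile at the population level. As $N\to\infty$, the binding~\eqref{eq:R1} has $\Sigma=(\Psi^\top\Psi/\sigma^2+\alpha I)^{-1}=O(1/N)$, so $v_n\to 0$ and the closed profile collapses to a constant $\sigma^2$. The neglected term is the leverage $\psi_n^\top\Sigma\psi_n$, whose sum is $O(H)$; under Lemma~\ref{lem:shared_loo} this is $O(H/N)$ per plate, which is the stated correction. Minimising $R$ over constants gives $\sigma^2=\mathbb{E}V^\star$, since the first-order condition is $-\mathbb{E}V^\star/\sigma^2+1=0$. Then, with $V^\star\in\mathcal{V}_\psi$ making the second bracket zero,
\[
\Delta_{\text{rout}}=R(\mathbb{E}V^\star)-R(V^\star)=\tfrac12\big(\log\mathbb{E}V^\star-\mathbb{E}\log V^\star\big).
\]
This is the Jensen gap of the log. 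Strict concavity of $\log$ gives zero iff $V^\star$ is a.s.\ constant.

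The quantitative bound uses a second-order Taylor expansion of $\log$ around $m=\mathbb{E}V^\star$ with Lagrange remainder. Because $\log''(t)=-1/t^2\le -1/b^2$ on $[a,b]$,
\[
\log V^\star\le \log m+(V^\star-m)/m-(V^\star-m)^2/(2b^2).
\]
Taking expectations gives $\log m-\mathbb{E}\log V^\star\ge \operatorname{Var}V^\star/(2b^2)$, and halving this yields~\eqref{eq:routing_gap}.

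The main obstacle is making the claim that the closed profile is a constant honest. The binding is data-dependent, and with $\alpha,\sigma$ also optimised one must check that fitting $\alpha$ cannot keep $\Sigma$ at order one. I would handle this by bounding $\sup_x\psi(x)^\top\Sigma\psi(x)$ by $\lambda_{\max}$-type leverage quantities under the balanced-leverage conditions, and carrying the resulting $O(H/N)$ perturbation through the smooth, bounded-on-$[a,b]$ risk. This is why the statement claims exactness only up to that correction.
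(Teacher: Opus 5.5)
Your proposal is correct and follows the paper's proof essentially step for step. The shared steps are the pointwise Gaussian-score reduction with unique optimum $V^\star(x)$, the telescoping split into routing and representation gaps via nesting of the binding's image in $\mathcal{V}_\psi$, the asymptotic collapse of the leverage profile to a constant optimised at $c^\star=\mathbb{E}[V^\star]$, and the strong-concavity lower bound. Your Lagrange-remainder step is equivalent to the paper's Jensen step on the concave $v\mapsto\log v+v^2/(2b^2)$.

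The obstacle you flag is milder than you fear. Since $\Sigma=(\Psi^\top\Psi/\sigma^2+\alpha I)^{-1}\preceq\sigma^2(\Psi^\top\Psi)^{-1}$ for every $\alpha\ge0$, fitting $\alpha$ can only shrink the leverage, never keep it at order one.
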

Mechanically,
$\sigma_\text{obs}^2$ is driven to the residual floor while
$\psi_n^\top\Sigma\psi_n$ carries the input-dependent remainder,
trading RMSE for predictive density
(Appendix~\ref{app:rmse}); the closed route, by contrast, is
\emph{evidence}-optimal only jointly with the sequential cavity. The
driver is not Gaussian-specific: any smooth factor activates the
\emph{overdispersion channel} $v_n$ of Remark~\ref{rem:second_moment}
wherever its generalised-overdispersion score is positive---zero for a
well-specified, representable plug-in, residual heteroscedasticity for
the Gaussian;
conjugacy closes the channel within the family
(Proposition~\ref{prop:overdispersion}). Severing
the binding also severs what it used to determine:

\begin{corollary}[Severing the binding: two non-identifications]
\label{cor:pathologies}
At a stationary point of $\mathcal{L}_\text{Gauss}$~\eqref{eq:Lreg} under the free route:
\textbf{(i)}~\emph{the split of $V_n$}: only the total
$V_n=\sigma_\text{obs}^2+v_n$ is identified by the data term, its
decomposition only by the $O(1/N)$-per-plate prior term; interpolating
residuals drive the pointwise-optimal $V_n=r_n^2$---and with it
$\sigma_\text{obs}^2\le\min_n V_n$---to zero:
$\mathcal{L}_\text{Gauss}$ is unbounded below.
\textbf{(ii)}~\emph{the prior precision}: under the binding, $\alpha$ enters
the data term through $\Sigma(\alpha)$; freed, it survives only in
$-\log Z_w$, so once the learned covariance dominates
($\lambda_{\min}(\Sigma)\gg\alpha^{-1}$) the gradient w.r.t. $\alpha$ is
$O(\alpha^{-2})\!\to\!0$ and $\alpha$ is \emph{non-identified}. The drift is an
unbounded runaway $\alpha\to\infty$ precisely when
$\operatorname{tr}\Sigma^{-1}>\mu^\top\Sigma^{-2}\mu$.
\end{corollary}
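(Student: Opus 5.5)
The two parts are independent. Both follow from the explicit form of $\mathcal{L}_\text{Gauss}$ in~\eqref{eq:Lreg}, together with the closed Gaussian form of the prior term
\[
-\log Z_w=\tfrac12\mu^\top A^{-1}\mu+\tfrac12\log\det A+\tfrac H2\log 2\pi,\qquad A:=\Sigma+\alpha^{-1}I .
\]
The structural input is Corollary~\ref{cor:emp_bayes}: $\sigma_\text{obs}$ enters only the plate terms, and under the free route $\alpha$ enters only $-\log Z_w$.

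\textbf{Part (i).} First I observe that the data term $\sum_n[r_n^2/(2V_n)+\tfrac12\log V_n]$, with $r_n=y_n-\mu^\top\psi_n$, depends on $(\sigma_\text{obs}^2,\Sigma)$ only through the totals $V_n=\sigma_\text{obs}^2+v_n(\Sigma)$. Any reallocation between $\sigma_\text{obs}^2$ and $v_n$ that preserves every $V_n$ therefore leaves the data term unchanged. Only $-\log Z_w$, a single term against $N$ plate terms and hence $O(1/N)$ per plate, can distinguish such reallocations. This gives the non-identification of the split.

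Next, I minimise each plate term pointwise in $V_n$: the derivative is $-r_n^2/(2V_n^2)+1/(2V_n)$, which vanishes at $V_n=r_n^2$. Since $v_n\ge 0$, every $V_n\ge\sigma_\text{obs}^2$, which yields $\sigma_\text{obs}^2\le\min_n V_n$.

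For unboundedness, I exhibit a path. Take a backbone/mean pair that interpolates, so $r_n=0$ for all $n$. Set $\Sigma\to 0$ (or its floor, with the gauge used to shrink $v_n$) and let $\sigma_\text{obs}^2\to 0$. Along this path the data term equals $\tfrac N2\log\sigma_\text{obs}^2\to-\infty$. Meanwhile $-\log Z_w$ stays bounded, because $\alpha$ is held fixed and $A\succeq\alpha^{-1}I$. Hence $\mathcal{L}_\text{Gauss}$ is unbounded below.

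\textbf{Part (ii).} Under the binding, $\alpha$ reaches the data term through $\Sigma(\alpha)$ in~\eqref{eq:R1}. Freed, it appears only through $A$. Differentiating with $dA/d\alpha=-\alpha^{-2}I$ gives
\[
\frac{\partial(-\log Z_w)}{\partial\alpha}=\frac{1}{2\alpha^{2}}\Big(\mu^\top A^{-2}\mu-\operatorname{tr}A^{-1}\Big).
\]
When $\lambda_{\min}(\Sigma)\gg\alpha^{-1}$, we have $A=\Sigma+O(\alpha^{-1})$, so the bracket is bounded. The gradient is therefore $O(\alpha^{-2})\to 0$, and $\alpha$ is non-identified.

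For the runaway claim, the sign of the gradient is the sign of $g(\alpha):=\mu^\top A^{-2}\mu-\operatorname{tr}A^{-1}$, and $g(\alpha)\to\mu^\top\Sigma^{-2}\mu-\operatorname{tr}\Sigma^{-1}$ as $\alpha\to\infty$. If $\operatorname{tr}\Sigma^{-1}>\mu^\top\Sigma^{-2}\mu$, then $g<0$ for all sufficiently large $\alpha$. Descent then increases $\alpha$ and no finite stationary point exists in that regime, so $\alpha\to\infty$. If the reverse strict inequality holds, $g>0$ eventually, and the drift reverses before infinity.

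\textbf{Main obstacle.} The delicate step is the word ``precisely''. The limiting condition controls the sign only in the dominated regime. To cover the whole path I would argue by continuity of $g$ in $\alpha^{-1}$ at $0$, which fixes the sign on a neighbourhood of $\alpha=\infty$. Because descent is monotone in $\alpha$ once inside that neighbourhood, the iterates cannot leave it. The boundary case of equality I would leave as degenerate, with a vanishing leading term.
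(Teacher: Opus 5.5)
Your proposal is correct and follows the paper's own proof. Part (i) rests on the data term depending on $(\sigma_\text{obs}^2,\Sigma)$ only through the totals $V_n$, on the pointwise optimum $V_n=r_n^2$, and on $\sigma_\text{obs}^2\le\min_n V_n$; part (ii) differentiates $-\log Z_w$ through $P=\Sigma+\alpha^{-1}I$ to obtain $-\tfrac12\alpha^{-2}\big(\operatorname{tr}P^{-1}-\mu^\top P^{-2}\mu\big)$, whose limiting sign decides the runaway. Your explicit interpolating path, with $-\log Z_w$ checked to stay bounded, is slightly more complete than the paper's per-plate limit $\tfrac12+\tfrac12\log r_n^2$. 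However, drop the parenthetical floored variant: under $\Sigma\succeq\varepsilon I$, shrinking $v_n$ means shrinking $\psi$, so $\mu$ must grow to keep $r_n=0$ and $\tfrac12\mu^\top A^{-1}\mu$ blows up at fixed $\alpha$. That variant is also not needed, since \eqref{eq:Lreg} carries no floor.
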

Both stay benign except in the near-noiseless corner
(Appendix~\ref{app:pred_evidence}): the $\varepsilon I$ floor of
Definition~\ref{def:scroll}, the representation convention, and
held-out selection together contain the collapse (sensitivity in
Appendix~\ref{app:stability}), and $\sigma_\text{obs}^2\!\to\!0$ is the
diagnostic for reimposing the binding (closed route).

The gap already separates the
routes at depth zero (Appendix~\ref{app:linear_routing}); deeper
probabilistic layers relax Theorem~\ref{thm:pred_evidence}'s exact
belief$\to$predictive hypothesis, not the propriety of the score
(Proposition~\ref{prop:proper_score}).

%% file: sections/experiments.tex
\section{Experiments}
\label{sec:experiments}

Our central empirical claim is a \emph{cost--performance} one: SCROLL fits its
prior in the training pass and predicts in a single forward pass, yet matches---and
often beats---references that cross-validate a regularisation weight $\lambda$
over a grid; and the advantage \emph{composes}: the same head leads
the multi-pass tier (SCROLL-Ens, the underlined tier-best of
Tables~\ref{tab:headline_nll}--\ref{tab:table_scale_nll}). The question
throughout: \emph{does it win at each cost tier?}---across UCI,
large-tabular, and frozen text/vision embeddings
(Tables~\ref{tab:headline_nll},~\ref{tab:table_scale_nll}).

\paragraph{Setup.}
Benchmarks span three regimes: 8 UCI regression datasets; three large tabular sets
(California, Protein, Year); and two frozen deep embeddings---SICK
sentence-relatedness (BERT, text) and UTKFace age (ResNet-50, vision). All neural
methods share a deterministic backbone and differ only in the last layer; we report
test NLL and calibration error. Every method is \emph{validation-selected} per
seed over four backbones (per-architecture results:
Appendix~\ref{app:architecture}); references additionally cross-validate
$\lambda$ on the same grid (fit accounting: Appendix~\ref{app:setup}). Large/deep runs mini-batch at $1024$
(Appendix~\ref{app:batching}); UCI is full-batch so the exact corners
are computed on the same footing. Methods are grouped by inference
cost; MVN is the $\beta$-NLL-stabilised free-variance
head. SCROLL's three covariance variants
are fixed methods, never selected per dataset; headline counts report the
single fixed variant SCROLL-Full (per-variant counts:
Appendix~\ref{app:setup}). Significance: one-sided paired
per-comparison $t$-tests at $p\ge0.05$, no multiplicity adjustment; we
write \emph{tied} for entries not separated from
the best. Full protocol
(Algorithm~\ref{alg:scroll}), two-layer, and classification
results: Appendices~\ref{app:setup}, \ref{app:two_layer},
\ref{app:classification_details}.

\input{tables/headline_nll}

\paragraph{(i)~Cost--performance: best in both cost tiers.}
In the single-pass tier, the single fixed variant SCROLL-Full is
best-or-tied on \textbf{7/8} UCI datasets for NLL and 6/8
for calibration against the $\lambda$-tuned references, conceding only
near-noiseless naval (Corollary~\ref{cor:pathologies}) and concrete
calibration; the three fixed variants together reach 8/8 and 7/8,
outright best on 6/8 for both metrics (Table~\ref{tab:headline_nll}
top block; calibration in Appendix~\ref{app:calibration}; per-variant
counts in Appendix~\ref{app:setup}). In the multi-pass tier the
ordering repeats: the five-member Deep-Ensembles recipe applied to
SCROLL is the best ensemble on 7/8 (underlined). The lead is
convention-robust: rescoring the \emph{identical} members by the exact
member mixture, instead of the moment collapse, improves SCROLL-Ens on
all eight datasets (mean $-0.03$ nat;
Appendix~\ref{app:setup}). Calibration does not need
the ensemble: the moment-collapsed mixture over-disperses, and the
single-pass head keeps the calibration optimum
(Appendix~\ref{app:calibration}). The margin strengthens with scale and beyond
tabular data: on the three large-tabular and two frozen-embedding datasets SCROLL
is best on all \textbf{5/5} among single-pass methods, outright on four
(VBLL ties on SICK; Table~\ref{tab:table_scale_nll})---and
SCROLL-Ens leads the multi-pass tier on all five. Trained \emph{end-to-end}---the head
jointly with a from-scratch ResNet-18 on CIFAR-10---SCROLL improves NLL
and calibration over the equal-cost cross-entropy reference at
equal-or-better accuracy and \emph{matches} the dedicated probabilistic
heads (10 seeds;
Table~\ref{tab:appendix_cifar}): the parity
Proposition~\ref{prop:overdispersion}(ii) predicts where the
likelihood leaves no overdispersion channel to route---and calibrated
as trained, needing almost no temperature
(Appendix~\ref{app:multiclass}). The
winning heads are the OvA-probit constructions; the conjugate
Gaussian-regression head \emph{collapses} under end-to-end training
(Corollary~\ref{cor:pathologies}(i); Appendix~\ref{app:multiclass})---while
the SCROLL regression head trained end-to-end under the standard
stabiliser reaches parity with the dedicated deep heads
(Appendix~\ref{app:e2e_regression}). The
free-variance MVN
head---the heteroscedastic control for SCROLL's structured
$\psi_n^\top\Sigma\psi_n$---never takes an NLL win despite its
cross-validated $\lambda$. Appendix~\ref{app:prior_ablation} separates
what this could conflate: a head with the same structured variance but
the prior term \emph{deleted} and the references' cross-validated
$\lambda$ in its place keeps the margin over MVN, while the shipped
$-\log Z_w$ matches that tuned sibling without tuning anything. There
is no further ``belief vs.\ structure'' control to run:
$\Sigma=LL^\top$ is a parametrization, so the structured head \emph{is}
the Gaussian belief---what the belief reading adds is priced
separately: the exact corner beside it (below), the ELBO at the same
parametrization, the conjugate refit for OOD
(Appendices~\ref{app:attribution_full},~\ref{app:ood}).

\input{tables/table_scale_nll}

\paragraph{(ii)~Why it works: under heteroscedasticity the evidence
optimum is not the score optimum.}
The \emph{Exact corner} row of Table~\ref{tab:headline_nll} sets the exact evidence
corner beside the shipped variants; the full corner grid is in
Appendix~\ref{app:attribution_full}. \emph{Routing carries the gain}:
free routing improves on the exact corner on every dataset except naval,
where the corner wins outright (below); across the full grid the only
other corner wins are the batchable closed and LOO cells on energy,
marginally (Appendix~\ref{app:attribution_full}). The corners are dominated on
predictive density while winning the point estimate---best RMSE (their mean
is the ridge/evidence optimum), worse NLL and calibration
(Table~\ref{tab:appendix_attribution_full_rmse}; headline RMSE in
Appendix~\ref{app:rmse})---the mechanism of
Theorem~\ref{thm:pred_evidence}; the exact corner accordingly tracks
VBLL, whose ELBO targets the same evidence. \emph{Cavity is
benign} at this scale ($N\gg H$): closed and leave-one-out agree within noise
except on near-noiseless naval and small-$N$ yacht
(Table~\ref{tab:appendix_attribution_full_nll}, Appendix~\ref{app:attribution_full}),
and the selection criterion it governs is provably second-order in the
conjugate $N\gg H$ regime where the routing gap is first-order
(Proposition~\ref{prop:routing_selection}, measured in
Appendix~\ref{app:routing_selection}).
\emph{Covariance is a cost trade}: Full and Diag split the wins; the
zero-covariance None takes none.

\paragraph{Where SCROLL is not best, and why.}
On concrete the concession is calibration: VBLL is marginally ahead
(Appendix~\ref{app:calibration}). On
naval---the degenerate near-noiseless regime---the exact neural-linear corner
is genuinely best: SCROLL still leads the references there by
$\approx0.9$ nat, but the \emph{Exact corner (Full)} row of
Table~\ref{tab:headline_nll}---full-batch and sequential, not
single-pass---is better still. Both
are the regimes of Corollary~\ref{cor:pathologies}'s two
non-identifications---noise collapse when near-noiseless, $\alpha$
non-identification under a dominant learned covariance---benign outside that
corner, where free routing is the robust default. The deployment
recipe follows: train free for predictive density and calibration;
where shift flagging is also needed, reimpose the binding at test time
by a closed-form conjugate refit of $\Sigma$ on the trained
features~\eqref{eq:R1}: the closed corner detects shift at
$0.88$--$0.95$ AUROC across text, vision, and wine---$+0.21$ to
$+0.31$ over its own free route---at one extra pass. The controls pin
the mechanism to structured leverage: covariance-free heads sit at
$0.5$, the free-variance MVN is anti-correlated with shift, and VBLL
detects it precisely because its ELBO optimum \emph{is} the binding
(Lemma~\ref{lem:binding})---free parameters, closed-cell optimum
(Appendix~\ref{app:ood}).

%% file: tables/headline_nll.tex
\begin{table}[ht]
\centering
\footnotesize
\caption{Regression test NLL, validation-selected (SCROLL: best architecture; references: best architecture$\times\lambda$). Ranked blocks: SCROLL, the tuned single-pass references, and the exact corner SCROLL steps off---the exact \emph{evidence} / neural-linear marginal likelihood (sequential cavity, closed routing) at the same learned features; the full corner grid (closed/LOO cavities, Diag family) is Table~\ref{tab:appendix_attribution_full_nll}. Below the rule, shown for context and excluded from ranking as a different cost axis: the multi-pass block---the ensemble variants of the heads and $50$-pass MC Dropout (constructions: Appendix~\ref{app:setup})---and GP-RBF. \underline{Underline} = best multi-pass entry: SCROLL-Ens on seven of eight datasets---ensembling composes with the head (cf.\ Table~\ref{tab:appendix_cifar}). Bold = best ranked entry; italic = not separated from the best (one-sided paired $t$-test, $p\ge0.05$). 20 seeds.}
\label{tab:headline_nll}
\renewcommand{\arraystretch}{0.92}%
\begin{tabular}{lcccccccc}
\toprule
 & yacht & concrete & energy & kin8nm & naval & power & wine & boston \\
\midrule
SCROLL-Full & \textbf{2.253} & \textit{3.316} & \textit{0.777} & \textbf{$-$0.704} & $-$2.216 & \textbf{2.840} & \textit{0.959} & 2.679 \\
SCROLL-Diag & 3.033 & 3.344 & 0.913 & $-$0.694 & $-$4.075 & \textit{2.842} & \textbf{0.953} & \textbf{2.606} \\
SCROLL-None & 3.532 & 3.384 & 0.862 & $-$0.627 & $-$3.169 & 2.852 & 0.978 & 2.903 \\
\midrule
MAP & 3.340 & 3.290 & \textit{0.747} & $-$0.578 & $-$3.001 & 2.852 & 0.984 & 2.833 \\
Laplace-Full & 3.000 & \textbf{3.276} & \textit{0.737} & $-$0.577 & $-$3.001 & 2.852 & 0.979 & 2.760 \\
VBLL & 3.667 & 3.338 & \textbf{0.736} & $-$0.626 & $-$2.770 & 2.851 & 0.998 & 2.901 \\
MVN & \textit{2.259} & 3.453 & 0.865 & $-$0.689 & $-$3.198 & \textit{2.842} & \textit{0.969} & 2.647 \\
\midrule
Exact corner (Full) & 3.680 & 3.336 & 0.822 & $-$0.609 & \textbf{$-$7.324} & \textit{2.845} & 1.008 & 2.728 \\
\midrule
SCROLL-Ens ($5\times$) & \underline{1.715} & \underline{3.136} & \underline{0.680} & \underline{$-$0.756} & $-$2.188 & \underline{2.827} & \underline{0.946} & \underline{2.526} \\
Deep Ensemble & 2.783 & 3.211 & 0.717 & $-$0.588 & $-$2.968 & 2.851 & 0.966 & 2.721 \\
MVN-Ens ($5\times$) & 1.919 & 3.301 & 0.734 & $-$0.726 & \underline{$-$3.054} & 2.838 & 0.953 & 2.613 \\
MC Dropout & 3.146 & 3.317 & 1.610 & $-$0.549 & $-$2.907 & 2.866 & 0.975 & 2.742 \\
\midrule
GP-RBF & 2.993 & 3.831 & 2.532 & $-$0.929 & $-$4.612 & 2.858 & 0.165 & 3.244 \\
\bottomrule
\end{tabular}
\end{table}

%% file: tables/table_scale_nll.tex
\begin{table}[ht]
\centering
\footnotesize
\caption{Large-scale tabular (California, Protein, Year) and deep-feature (SICK text, UTKFace vision; frozen embeddings) test NLL, validation-selected as the headline. \emph{Train}/\emph{Infer}: training runs and inference passes \emph{per architecture}---SCROLL trains once (hyperparameters learned in the pass, no $\lambda$ grid) and predicts in one closed-form pass; references tune $\lambda$ over a 4-point grid, the ensembles train $M{=}5$ members, MC Dropout uses 50 passes. Markup as in Table~\ref{tab:headline_nll}: bold/italic rank the single-pass rows; the multi-pass block is unranked, underline = its best entry(SCROLL-Ens on all five datasets).}
\label{tab:table_scale_nll}
\renewcommand{\arraystretch}{0.92}%
\begin{tabular}{l cc ccccc}
\toprule
 & \multicolumn{2}{c}{Passes} & \multicolumn{5}{c}{Test NLL} \\
\cmidrule(lr){2-3}\cmidrule(lr){4-8}
Method & Train & Infer & California & Protein & Year & SICK & UTKFace \\
\midrule
SCROLL-Full & 1 & 1 & 0.740 & \textbf{2.814} & \textbf{3.361} & \textbf{0.872} & \textbf{3.791} \\
SCROLL-Diag & 1 & 1 & \textbf{0.688} & 2.870 & \textit{3.361} & 0.882 & 3.881 \\
SCROLL-None & 1 & 1 & 0.840 & 2.918 & 3.590 & 0.881 & 3.970 \\
\midrule
MAP & 4 & 1 & 0.878 & 2.935 & 3.605 & 0.888 & 3.865 \\
Laplace-Full & 4 & 1 & 0.876 & 2.934 & 3.605 & 0.888 & 3.864 \\
VBLL & 4 & 1 & 0.879 & 2.964 & 3.630 & \textit{0.878} & 3.915 \\
MVN & 4 & 1 & 0.764 & 2.906 & 3.443 & 0.896 & 3.825 \\
\midrule
SCROLL-Ens & 5 & 5 & \underline{0.707} & \underline{2.785} & \underline{3.336} & \underline{0.835} & \underline{3.717} \\
Deep Ensemble & 20 & 5 & 0.857 & 2.925 & 3.592 & 0.857 & 3.826 \\
MC Dropout & 1 & 50 & 0.894 & 2.956 & 3.607 & 0.911 & 3.866 \\
\bottomrule
\end{tabular}
\\[2pt]{\footnotesize 20 seeds.}
\end{table}

%% file: sections/related_work.tex
\section{Related Work}
\label{sec:related}

We position SCROLL by its design-space cell (Figure~\ref{fig:lead_map}a;
read in full in Appendix~\ref{app:design_space}), contrasting by
\emph{routing} rather than surveying Bayesian deep learning.

\paragraph{Last-layer BNNs and the exact corner.}
Last-layer BNNs pair a deterministic backbone with a Bayesian final
layer~\citep{Kristiadi2020}.
Laplace Redux~\citep{Daxberger2021} approximates the posterior post-hoc and
Rich-BLL~\citep{CalvoOrdonez2026} adds NTK expressiveness; with a closed-form
covariance these occupy the \emph{closed}-routing column of our design space, whose
sequential, \emph{evidence-optimal} apex is neural-linear / GP-evidence empirical
Bayes~\citep{Snoek2015,Ober2019,Williams2006} (Proposition~\ref{prop:seq_evidence}), reached
classically by an outer marginal-likelihood loop or filtering
recursion~\citep{Tipping2001,Sarkka2013}. VBLL~\citep{Harrison2024}
trains a variational covariance freely but on the ELBO---per-plate
Jensen gap (Proposition~\ref{prop:bethe_elbo}), fixed regularisation
weight where SCROLL's $-\log Z_{w_l}$ learns the precision while it is
identified (Corollary~\ref{cor:pathologies})---so by
Lemma~\ref{lem:binding} its optimum never leaves the closed cell: free
in the parameters, bound in the objective. The interior opens only
under the score objective (Lemma~\ref{lem:routing}).

\paragraph{Beyond posterior-seeking objectives.}
That the posterior can be the wrong target is established: generalised
Bayes treats it as one optimum of a loss--divergence pair and replaces
it under misspecification~\citep{Bissiri2016,Masegosa2020,Wenzel2020},
and GVI casts standard VI as the constrained special case of that
design space~\citep{Knoblauch2022}. Two separations place
free routing outside this family. First, \emph{nonlinearity}: the
generalised-Bayes data term is an expected loss, linear in $q$, so its
unconstrained optimum is a Gibbs reweighting of the prior (and its GVI
variants that optimum's projections); the shared-cavity plate term
$-\log Z_n=-\log\mathbb{E}_q\,p(y_n\mid f_n)$ scores the belief's own
predictive marginal---nonlinear in $q$, its optimum located by the
residual profile (Theorem~\ref{thm:pred_evidence}), not by any
reweighting. Second, \emph{containment} rather than replacement:
score-trained beliefs are known---closest, the sparse-GP predictive
objective (PPGPR) of \citet{Jankowiak2020}, with \citet{Wei2021} and
loss-calibrated inference~\citep{LacosteJulien2011} adjacent. Per
plate, PPGPR's data term is the same
$-\log\mathbb{E}_q\,p(y_n\mid f_n)$; the objectives differ in the
regulariser and in what selects the prior---PPGPR pairs the score with
$\mathrm{KL}(q\,\|\,p)$ at a fixed prior, a substitute objective,
while~\eqref{eq:Lreg} pairs it with the prior factor's own
$-\log Z_w$, making the precision a trained argument of the loss
(Corollary~\ref{cor:emp_bayes}; the swap is priced in
Appendix~\ref{app:prior_ablation})---and, decisively, in containment:
here the freely-scored and the posterior-bound belief minimise one
shared-cavity objective, with and without the binding constraint
(Definition~\ref{def:routing},
Lemma~\ref{lem:binding}), supplying the exact sequential corner and the priced
constraint the sparse-GP line lacks, measured at its own learned
features (Section~\ref{sec:experiments}).

\paragraph{Deep kernels and single-pass OOD.}
A parallel line reads epistemic uncertainty from the same last-layer leverage
variance: Deep Kernel Learning trains a backbone and GP jointly by the marginal
likelihood~\citep{Wilson2016}, while SNGP~\citep{Liu2020} and DUE~\citep{vanAmersfoort2021}
use that variance for single-pass out-of-distribution detection. This is the
\emph{closed}-routing corner of our design space---the leverage variance
$\psi^\top\Sigma\psi$. SCROLL \emph{free}-routes the covariance, trading leverage
for the score-optimal variance: better NLL and calibration, weaker
OOD (Theorem~\ref{thm:pred_evidence};
Appendix~\ref{app:ood})---complementary corners of one design space, the
leverage a conjugate refit~\eqref{eq:R1} away.

\paragraph{Direct Bethe minimisation, and tightening the bound.}
Bethe stationary points coincide with belief-propagation and EP fixed
points~\citep{Minka2001,Yedidia2001}, classically reached by message
passing or by direct descent on the free energy---double-loop and
concave-convex schemes~\citep{WellingTeh2001,Yuille2002,Heskes2002}---both
for \emph{inference} at fixed model parameters; closest to
us, \citet{Rangan2017} minimise it directly by a large-system-limit
ADMM, \citet{Wiseman2019} amortise it inside a saddle-point
approximation to $\log Z$, and \citet{Kuck2020} learn the BP operator
itself. We instead derive the training loss of the
predictive model from the free energy's local normalisers: one joint gradient minimisation, no message passing,
no saddle point, the beliefs free parameters beside the feature map.
Relative to the ELBO the data term changes \emph{estimand}---per-plate
marginal log-loss instead of expected log-loss
(Propositions~\ref{prop:bethe_elbo},~\ref{prop:aggregation})---not the
tightness of the same bound as in IWAE~\citep{Burda2016}; the marginal
likelihood (Proposition~\ref{prop:seq_evidence}) and the analytic
probit loss (Proposition~\ref{prop:probit_loss}) remain exact corners. The cavity is classical too---the leave-one-out cavity with its
Onsager reaction term is the TAP Gaussian
process~\citep{OpperWinther2000},
the LOO predictive the misspecification-robust alternative to the
evidence~\citep{Williams2006,Sundararajan2001,Vehtari2012}; ours
is its reaction-free, batchable limit---a composite
likelihood~\citep{Varin2011}.

\paragraph{Heteroscedastic and sampling baselines.}
A free Gaussian-NLL variance head~\citep{NixWeigend1994,Seitzer2022} shares
SCROLL's estimand but none of its structure: SCROLL's variance is the
belief's forward message $\psi_n^\top\Sigma\psi_n$---the same $\Sigma$ that
recovers exact inference under the binding---with $-\log Z_w$ supplying the
prior adaptation a tuned head lacks; the free head never takes an NLL
win (Section~\ref{sec:experiments}). The variance collapse this family is
known for~\citep{Seitzer2022,Stirn2023,Immer2023} is located by
Corollary~\ref{cor:pathologies} as the price of severing one
constraint, with $\sigma_\text{obs}\!\to\!0$ the diagnostic for
reimposing it. At depth zero the free-routed linear
model is random-coefficient
ML~\citep{HildrethHouck1968,BreuschPagan1979}
(Appendix~\ref{app:linear_routing})---deliberately so: every corner of the
design space that is tractable recovers a known estimator, placed in one local-normaliser
design space (Proposition~\ref{prop:fsc_construction}); the
contribution is \emph{free routing}, the move off the corners, priced
exactly by Theorem~\ref{thm:pred_evidence} and
Corollary~\ref{cor:pathologies}.
Multi-pass refinements---Deep Ensembles~\citep{Lakshminarayanan2017}, MC
Dropout~\citep{Gal2016}---are orthogonal, composing with any variant at
$5$--$50\times$ inference cost (SCROLL-Ens,
Section~\ref{sec:experiments}).

%% file: sections/conclusion.tex
\section{Conclusion}
\label{sec:conclusion}

The standard training objectives of Bayesian deep learning are
posterior-seeking; their shared target is a \emph{binding map} from
model parameters to beliefs (Lemma~\ref{lem:binding})---a
\emph{constraint}, not an ideal merely approximated. The shared-cavity
objective derived from the Bethe functional's local normalisers
(Definition~\ref{def:fsc}) carries that binding as an
optional constraint, and its data term is a strictly proper predictive
score for any likelihood (Proposition~\ref{prop:proper_score}). Dropping the binding---free
routing, under the shared-cavity constraint that makes the score
batchable (Lemma~\ref{lem:shared_loo})---keeps the score and the
single-pass hyperparameter learning
(Corollary~\ref{cor:emp_bayes}), and displaces the optimum from the
posterior by the expressible residual heteroscedasticity at fixed
mean, up to an $O(H/N)$ correction (Theorem~\ref{thm:pred_evidence}), pricing two characterised
non-identifications (Corollary~\ref{cor:pathologies}). At the Gaussian
last layer, where exact inference is the closed-routed corner
(Proposition~\ref{prop:seq_evidence}), the displacement is \emph{measured}: SCROLL
improves on the exact corner at its learned features on seven of
eight UCI datasets, often beats validation-tuned and $5$--$50\times$
ensembled references on predictive likelihood and calibration in a
single pass, and holds from UCI through frozen embeddings to
end-to-end CIFAR-10 via its probit head
(Section~\ref{sec:experiments}).

The instantiation is restricted to last-layer Gaussian beliefs over
deterministic backbones, cedes near-noiseless naval to the
exact corner (Corollary~\ref{cor:pathologies}); at depth two
the references gain more (Appendix~\ref{app:two_layer}), and trained fully
end-to-end the win-gap closes to parity with the dedicated deep heads
(Appendices~\ref{app:e2e_regression}, \ref{app:multiclass});
free routing likewise trades shift-flagging leverage for
predictive density, recovered by the closed corner
(Appendix~\ref{app:ood}). The restriction marks where the framework
meets an exact reference, not where it ends:
the objective never invokes tree-exactness and applies
\emph{unchanged} to loopy multi-layer graphs,
where no exact corner remains and
\companionphrase
\ takes the same objective beyond the last layer: per-plate exactness
and score optimality survive; the closed route loses analytic
tractability as a single-level objective. Free-routing the
\emph{sequential} cavity (Figure~\ref{fig:lead_map}a's unoccupied
cell; a prequential score,~\citealp{Dawid1984}) and the likelihoods
with an active overdispersion channel (count factors;
Proposition~\ref{prop:overdispersion}(iii)) are the next steps.

%% file: sections/app_background.tex
\section{Background}
\label{app:background}

\paragraph{The Bethe free energy.}
For a factor graph with factors $f_a$ over variable sets $x_a$ and joint
$p(x)=\frac1Z\prod_a f_a(x_a)$, the Bethe free energy scores factor
beliefs $b_a(x_a)$ and variable beliefs $b_i(x_i)$ by
\begin{equation}
  F_\text{Bethe}(b)
   = -\sum_a \mathbb{E}_{b_a}\!\big[\log f_a\big] - H_\text{Bethe}(b),
  \qquad
  H_\text{Bethe}(b) = \sum_a H(b_a) - \sum_i (d_i-1)\,H(b_i),
  \label{eq:bethe_general}
\end{equation}
with $d_i$ the degree of variable $i$, subject to normalisation and
\emph{local consistency}: each factor belief must marginalise to the
belief of every neighbouring variable,
$\sum_{x_a\setminus x_i} b_a = b_i$~\citep{Yedidia2001}. On a
tree-structured graph the Bethe entropy is the exact joint entropy, and
at the optimum
\begin{equation}
  F_\text{Bethe}(b^*) = -\log Z.
  \label{eq:bethe_exact}
\end{equation}
The functional is defined on \emph{any} factor graph---several
stochastic weight layers induce loops---so the framework is not tied to
the last layer; keeping only the last layer probabilistic leaves a
loop-free graph, whose Bethe optimum recovers $-\log Z$ exactly. On
loopy graphs the stationary points of~\eqref{eq:bethe_general}
coincide with the fixed points of belief propagation and
EP~\citep{Yedidia2001,Minka2001}, classically reached by message
passing; the body instead \emph{derives} from the functional's local
terms a differentiable training objective
(Definition~\ref{def:fsc}), minimised by any smooth
optimiser---gradient descent when a backbone is trained jointly
(Section~\ref{sec:bethe_opt}).

\paragraph{Evidence lower bound.}
Variational inference posits a tractable \emph{belief} $q(w)$ over the
weights of the model $p(y,w)=p(y\mid w)\,p(w)$---notation kept
throughout: $p$ for model factors, $q$ for beliefs---and maximises the
ELBO, a lower bound on the log-evidence and the standard training
objective for weight-space Bayesian neural
networks~\citep{Blundell2015}:
\begin{equation}
  \log p(y) \;\geq\; \mathbb{E}_{q(w)}\!\left[\log p(y\mid w)\right]
  - \mathrm{KL}(q(w)\|p(w))
  \;=:\; \mathrm{ELBO}(q),
  \label{eq:elbo}
\end{equation}
obtained by Jensen's inequality applied to
$\log p(y)=\log\mathbb{E}_{q}[p(y,w)/q(w)]$.

\paragraph{Gaussian message convolution.}
Marginalising a Gaussian likelihood $\mathcal{N}(y;f,\sigma_\text{obs}^2)$
over a Gaussian message $\mathcal{N}(f;\mu_f,v_f)$ is a closed-form
convolution,
\begin{equation}
  \int \mathcal{N}(y;f,\sigma_\text{obs}^2)\,\mathcal{N}(f;\mu_f,v_f)\,\mathrm{d}f
  = \mathcal{N}(y;\mu_f,\,\sigma_\text{obs}^2+v_f),
  \label{eq:gauss_conv}
\end{equation}
the standard GP regression predictive~\citep{Williams2006}. The body uses
it the other way around---as a per-plate \emph{training loss} learning
the feature map and belief end-to-end (Equation~\eqref{eq:Lreg}), not
a predictive over a fixed kernel; the construction extends to
non-Gaussian likelihoods such as the probit head~\citep{MacKay1992}
(Appendix~\ref{app:classification_details}).

\paragraph{Proper scoring rules.}
A scoring rule $S(m,y)$ for a predictive density $m$ is \emph{proper} if
its expectation under $y\sim p$ is minimised at $m=p$, and \emph{strictly}
proper if that minimiser is unique~\citep{Gneiting2007}. The log score
$S(m,y)=-\log m(y)$ is strictly proper; Proposition~\ref{prop:proper_score}
is its Gibbs-inequality decomposition applied per plate.

%% file: sections/app_pred_evidence.tex
\section{Proofs}
\label{app:pred_evidence}

We prove the numbered claims of
Sections~\ref{sec:bethe_opt}--\ref{sec:method}---general properties
whose statements the main text carries only in prose
(Propositions~\ref{prop:bethe_elbo}--\ref{prop:aggregation},
Remarks~\ref{rem:bethe_map}
and~\ref{rem:second_moment}, Corollary~\ref{cor:any_likelihood}) are
stated in full here, beside their proofs---then the
Gaussian last layer
(Theorem~\ref{thm:pred_evidence}, Corollary~\ref{cor:pathologies}) and its
first-order extension beyond the Gaussian
(Proposition~\ref{prop:overdispersion});
a numerical verification of Proposition~\ref{prop:proper_score} on
non-Gaussian likelihoods closes the section.
Throughout, $r_n=y_n-\mu^\top\psi_n$,
$V_n=\sigma_\text{obs}^2+\psi_n^\top\Sigma\psi_n$, and
$-\log Z_w=\tfrac12\log\det P+\tfrac12\mu^\top P^{-1}\mu+\text{const}$ with
$P=\Sigma+\alpha^{-1}I$ (Equation~\eqref{eq:Lscroll}).
Figure~\ref{fig:pred_evidence_toy} previews the result before
the algebra---closed routing pins the predictive band to the input's leverage,
free routing to the residual variance $V^\star$---and Figure~\ref{fig:cavity}
contrasts the three cavities of Section~\ref{sec:relaxations}.

\input{figures/pred_evidence_toy}
\input{figures/cavity_graph}

\subsection{General Properties (Section~\ref{sec:bethe_opt})}

\emph{Posterior-seeking objectives (Lemma~\ref{lem:binding}).}
Expanding the KL against the posterior,
$\mathrm{KL}\big(q\,\|\,p(w\mid\mathcal{D})\big)
 =\mathbb{E}_q[\log q(w)]-\mathbb{E}_q[\log p(y,w;\theta,\xi)]+\log Z(\theta,\xi)
 =-\mathrm{ELBO}(q)+\log Z(\theta,\xi)$,
i.e.\ $\log Z=\mathrm{ELBO}(q)+\mathrm{KL}(q\,\|\,p(w\mid\mathcal{D}))$ for
\emph{every} $q$. With $(\theta,\xi)$ fixed the left side is constant in
$q$, so over any family $\{q_\eta\}$ maximising the ELBO and minimising
the KL to the posterior are the same program; the latter is the binding
map of Definition~\ref{def:routing} with
$\mathcal{D}_\text{cav}=\mathcal{D}$ (the shared cavity). No model
structure, depth, or parametrization of $\mathcal{Q}$ was used. \qed

\emph{Bethe decomposition (Proposition~\ref{prop:fsc_construction}).}
On the last-layer graph the weight node $w$ has degree $N{+}1$ ($N$
plates and the prior), so its counting number in $H_\text{Bethe}$ is $N$.
At the shared-belief configuration $b(q)$ of
Proposition~\ref{prop:fsc_construction}---$b_a = f_a\prod_{i\in a}
q_i/Z_a$ with the normalisers $Z_a$ of Definition~\ref{def:fsc}, and
$b_i=q_i$---each per-factor term of~\eqref{eq:bethe_general} collapses,
\[
  -\mathbb{E}_{b_a}[\log f_a] - H(b_a)
  \;=\; \mathbb{E}_{b_a}\big[\log b_a - \log f_a\big]
  \;=\; \mathbb{E}_{b_a}\big[\log\textstyle\prod_{i\in a}q_i\big]
        - \log Z_a ,
\]
so summing over factors and adding the variable-entropy terms
$\sum_i(d_i{-}1)H(q_i)$ of~\eqref{eq:bethe_general} yields exactly the
local sums of~\eqref{eq:fsc_def} plus belief-mismatch terms
$(\mathbb{E}_{b_a}-\mathbb{E}_{q_i})[\log q_i]$ and entropy-counting
terms in the $q_i$---the $C_\text{mis}(q)$ of the statement. The mismatch is
real: $b(q)$ is generally \emph{not} locally consistent---the marginal
of $b_a$ equals $q_i$ only where the factor's own message is
flat---so these terms are genuinely nonzero, at the observation
factors as well as the weight node on the full graph, and the
evaluation is the algebraic expression~\eqref{eq:bethe_general}, not a
point of the constrained Bethe problem. \qed

\begin{corollary}[Closed form of the remainder]
\label{cor:cw_form}
On the reduced graph of Proposition~\ref{rem:deterministic} every
remaining factor touches only $w$; with $N{+}1$ factors and weight
counting number $N$,
\[
  C_\text{mis}(q)
  \;=\; C_w(q)
  \;=\; \sum_a \mathbb{E}_{b_a}[\log q] - N\,\mathbb{E}_q[\log q]
  \;=\; \sum_a\big(\mathbb{E}_{b_a}-\mathbb{E}_q\big)[\log q]
          \;-\; H(q).
\]
\end{corollary}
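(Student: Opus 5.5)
The statement to prove is Corollary~\ref{cor:cw_form}. The plan is to specialise the algebra already carried out for Proposition~\ref{prop:fsc_construction} to the reduced graph of Proposition~\ref{rem:deterministic}.

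\textbf{Step 1: identify the remaining graph.} On that graph the intermediate factors $Z_a$ are feasibility constraints with $-\log Z_a=0$. So the only variable left is $w$, and the only factors left are the prior $p(w)$ and the $N$ plate factors $p(y_n\mid w^\top\psi_n)$, giving $N{+}1$ factors, each touching only $w$. The degree is $d_w=N{+}1$, so the counting number is $d_w-1=N$. The shared-belief configuration sets $b_w=q$ and $b_a=f_a q/Z_a$ for each of the $N{+}1$ factors.

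\textbf{Step 2: evaluate each factor term.} I reuse the per-factor identity from the proof of Proposition~\ref{prop:fsc_construction}:
\[
-\mathbb{E}_{b_a}[\log f_a]-H(b_a)=\mathbb{E}_{b_a}[\log q]-\log Z_a .
\]
This holds because each factor has the single neighbour $w$, so $\prod_{i\in a}q_i=q$.

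\textbf{Step 3: sum and read off the remainder.} Summing over the $N{+}1$ factors and adding the variable term $+N\,H(q)=-N\,\mathbb{E}_q[\log q]$ from~\eqref{eq:bethe_general} gives
\[
F_\text{Bethe}(b(q))=\sum_a(-\log Z_a)+\sum_a\mathbb{E}_{b_a}[\log q]-N\,\mathbb{E}_q[\log q].
\]
The first sum is exactly $F_\text{SC}(q)$ of~\eqref{eq:fsc_def}: the prior factor contributes $-\log Z_w$ and the plates contribute $-\log Z_n$. Since $F_\text{Bethe}(b(q))=F_\text{SC}(q)+C_\text{mis}(q)$, the remainder is
\[
C_\text{mis}(q)=\sum_a\mathbb{E}_{b_a}[\log q]-N\,\mathbb{E}_q[\log q].
\]
Because there is one variable node, this mismatch-and-counting remainder is the weight-node contribution $C_w(q)$. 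That gives the first two equalities.

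\textbf{Step 4: rewrite in centred form.} There are $N{+}1$ factors but only $N$ copies of $\mathbb{E}_q[\log q]$. Add and subtract one more copy:
\[
\sum_a\mathbb{E}_{b_a}[\log q]-N\,\mathbb{E}_q[\log q]=\sum_a\big(\mathbb{E}_{b_a}-\mathbb{E}_q\big)[\log q]+\mathbb{E}_q[\log q].
\]
Then use $\mathbb{E}_q[\log q]=-H(q)$ to obtain the final expression.

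The main obstacle is bookkeeping. I need the degree (prior included) and the sign convention of $-(d_i-1)H(b_i)$ inside $-H_\text{Bethe}$ to line up, so that the leftover is exactly $-H(q)$ and not $+H(q)$ or $(N{+}1)$ copies. I would check this on the trivial case $N=0$. There the single prior factor gives
\[
C=\mathbb{E}_{b_w}[\log q]=\big(\mathbb{E}_{b_w}-\mathbb{E}_q\big)[\log q]-H(q),
\]
which matches the stated formula.
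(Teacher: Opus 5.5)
Your proposal is correct and follows the paper's own argument. On the reduced graph all $N{+}1$ factors sit on $w$ with counting number $N$, each per-factor term collapses to $\mathbb{E}_{b_a}[\log q]-\log Z_a$, the counting term contributes $N\,H(q)=-N\,\mathbb{E}_q[\log q]$, and regrouping against one extra copy of $\mathbb{E}_q[\log q]$ gives the centred form; your sign and degree bookkeeping and the $N=0$ check are right.

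The one loosely stated step is the inference in Step~1 from $-\log Z_a=0$ to ``only $w$ remains''. Vanishing normalisers remove the intermediate terms from $F_\text{SC}$, but the disappearance of the deterministic nodes' entropy and mismatch terms from $C_\text{mis}$ is a separate cancellation, which the paper cites from Appendix~\ref{app:deterministic}. This is harmless here because the corollary stipulates the reduced graph.
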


\emph{Proof.} The deterministic-node contributions cancel
(Appendix~\ref{app:deterministic}), leaving factors over $w$ alone with
variable belief $q$ and counting number $N$: the mismatch terms of the
decomposition above all sit at the weight node,
$\sum_a\mathbb{E}_{b_a}[\log q]$, and the counting term contributes
$N H(q) = -N\,\mathbb{E}_q[\log q]$; regrouping the $N{+}1$ factor
terms gives the second form. \qed

The shared-cavity
loss keeps the local terms and drops the couplings---the reaction-free
$\beta_n\!\approx\!1$ limit of Section~\ref{sec:relaxations}. The drop is
deliberate and priced, not asserted away: it is why $F_\text{SC}$ is an
estimand in its own right---a composite predictive likelihood
(Proposition~\ref{prop:aggregation})---rather than an approximation of
$-\log Z$, and Lemma~\ref{lem:shared_loo} bounds the resulting
$O(1/N)$-per-plate in-sample optimism against the leave-one-out
predictive. Only for deterministic factors do the dropped corrections
cancel exactly (Appendix~\ref{app:deterministic}), which is what leaves
backbone training untouched.

\begin{proposition}[Bethe--ELBO inequality]
\label{prop:bethe_elbo}
For any belief $q(f_n)$ and any observation factor $p(y_n\mid f_n)$:
\begin{equation}
  -\log Z_n \;\leq\; \mathbb{E}_{q(f_n)}\!\left[-\log p(y_n \mid f_n)\right].
  \label{eq:bethe_elbo_ineq}
\end{equation}
\end{proposition}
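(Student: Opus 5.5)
The inequality~\eqref{eq:bethe_elbo_ineq} is Jensen's inequality for the concave logarithm, applied to the single plate normaliser. Recall from Definition~\ref{def:fsc} that $Z_n=\int p(y_n\mid f_n)\,q(f_n)\,\mathrm{d}f_n=\mathbb{E}_{q(f_n)}[p(y_n\mid f_n)]$. The plan is to read $Z_n$ as an expectation of the nonnegative random variable $p(y_n\mid f_n)$ under $f_n\sim q(f_n)$, and then to push the logarithm inside that expectation.

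The steps are as follows.
\begin{enumerate}
\item Write $-\log Z_n=-\log\mathbb{E}_{q(f_n)}[p(y_n\mid f_n)]$.
\item Since $\log$ is concave on $(0,\infty)$, Jensen gives $\log\mathbb{E}_q[X]\ge\mathbb{E}_q[\log X]$ for $X=p(y_n\mid f_n)\ge0$.
\item Negating yields~\eqref{eq:bethe_elbo_ineq}.
\end{enumerate}
No model structure is used, so the statement holds for any observation factor and any belief $q(f_n)$. This is the per-plate analogue of the global Jensen step behind the ELBO~\eqref{eq:elbo}.

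The only care needed is at the boundary cases, and I expect this to be the main, mild, obstacle.
\begin{itemize}
\item If $X=0$ on a set of positive $q$-mass, then $\mathbb{E}_q[\log X]=-\infty$ and the right-hand side is $+\infty$. The inequality then holds trivially, provided $Z_n>0$.
\item If $Z_n=0$, then $X=0$ $q$-a.s. and both sides equal $+\infty$.
\item If $\mathbb{E}_q[\log X]$ is undefined, because its positive and negative parts are both infinite, I would adopt the convention that the right-hand side is interpreted in $(-\infty,+\infty]$.
\item Alternatively, one can restrict to bounded likelihoods, as the paper does in Corollary~\ref{cor:any_likelihood}. Then $\log X$ is bounded above, its positive part is integrable, and Jensen applies cleanly in extended reals.
\end{itemize}

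Finally, I would record the gap explicitly. It can be computed using the tilted belief $b_n(f)\propto p(y_n\mid f)\,q(f)$ from Proposition~\ref{prop:fsc_construction}:
\[
\mathbb{E}_q[-\log p(y_n\mid f_n)]-(-\log Z_n)=\mathrm{KL}(q\,\|\,b_n)\ge0 .
\]
This identity gives a second, Jensen-free proof via Gibbs' inequality, and it shows that equality holds iff $p(y_n\mid f_n)$ is $q$-a.s. constant, i.e. iff $q(f_n)$ is a point mass or the likelihood is flat on its support. Summing over $n$ then exhibits the ELBO data term as an upper bound on the shared-cavity plate sum.
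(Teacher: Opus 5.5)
Your proof is correct and is the paper's argument: Jensen's inequality for the concave logarithm applied to $Z_n=\mathbb{E}_{q(f_n)}[p(y_n\mid f_n)]$, followed by negation. Your boundary-case bookkeeping and the gap identity $\mathbb{E}_q[-\log p(y_n\mid f_n)]+\log Z_n=\mathrm{KL}(q\,\|\,b_n)$, with its equality condition, are correct refinements that the paper does not spell out.
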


\emph{Proof.} By Jensen's
inequality on the concave logarithm,
$\log Z_n=\log\mathbb{E}_q[p(y_n\mid f_n)]\ge\mathbb{E}_q[\log p(y_n\mid f_n)]$;
negating gives $-\log Z_n\le\mathbb{E}_q[-\log p(y_n\mid f_n)]$. \qed

\begin{proposition}[Aggregation identity]
\label{prop:aggregation}
Let the plates be conditionally independent given the shared weights $w$, with
the map from $w$ to each $f_n$ deterministic so that
$Z_n=\mathbb{E}_q[p(y_n\mid w)]=m_n(y_n)$ (Proposition~\ref{rem:deterministic}).
Then each $m_n$ is the corresponding marginal of the joint predictive
$m(y):=\mathbb{E}_q\!\big[\prod_n p(y_n\mid w)\big]$, $y=y_{1:N}$, and
\begin{equation}
  \sum_n -\log m_n(y_n)
  \;=\; -\log m(y) \;+\; \log\frac{m(y)}{\prod_n m_n(y_n)},
  \label{eq:aggregation}
\end{equation}
where the second term---the coupling the belief retains, the plates'
pointwise total correlation under $m$---vanishes identically iff the
plates are independent under $m$.
\end{proposition}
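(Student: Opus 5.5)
We prove the Aggregation identity (Proposition~\ref{prop:aggregation}) in three steps: identify each plate normaliser $Z_n$ as a marginal of the joint predictive $m$, rearrange the logarithms algebraically, and then characterise when the coupling term vanishes. The plan is mostly bookkeeping. The only real content is the marginalisation step and the meaning of ``vanishes identically.''

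\emph{Step 1 (marginals).} By Proposition~\ref{rem:deterministic} the map $w\mapsto f_n$ is deterministic, so $Z_n=\int p(y_n\mid f_n)\,q(f_n)\,\mathrm{d}f_n=\mathbb{E}_q[p(y_n\mid w)]$ by the change of variables $f_n=f_n(w)$. For each fixed $w$, the factor $p(y_k\mid w)$ is a normalised density in $y_k$. We integrate $m(y)=\mathbb{E}_q[\prod_k p(y_k\mid w)]$ over all $y_k$ with $k\neq n$. Tonelli's theorem applies because the integrand is nonnegative, so the order of integration can be exchanged. Every factor except the $n$-th then integrates to one, which gives $\int m(y)\,\mathrm{d}y_{\setminus n}=\mathbb{E}_q[p(y_n\mid w)]=m_n(y_n)$. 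The same computation shows that $m$ itself is a normalised density on $y_{1:N}$. This confirms that $m_n$ is the $n$-th marginal of $m$, so Proposition~\ref{prop:proper_score}(i)'s data term is a sum of marginal log scores of one joint.

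\emph{Step 2 (identity).} At any $y$ with $m(y)>0$, each $m_n(y_n)>0$: the integrand defining $m_n(y_n)$ pointwise dominates the one defining $m(y)$ once the other likelihood factors are marginalised. We then add and subtract $\log m(y)$ in the expression $\sum_n-\log m_n(y_n)=-\log\prod_n m_n(y_n)$, which yields~\eqref{eq:aggregation} directly. The second term is the log-ratio of the joint to the product of its marginals, i.e.\ the pointwise total correlation. Taking its expectation under $m$ gives the multi-information $\mathrm{KL}\big(m\,\|\,\prod_n m_n\big)\ge0$, the expected coupling.

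\emph{Step 3 (vanishing).} We read ``vanishes identically'' as ``vanishes for $m$-almost every $y$.'' If the plates are independent under $m$, then $m=\prod_n m_n$ almost everywhere and the log-ratio is zero. Conversely, if the log-ratio is zero $m$-almost everywhere, then $m=\prod_n m_n$ on the support of $m$. Both sides integrate to one, so $\prod_n m_n$ places no mass off that support, and $m$ equals the product of its marginals almost everywhere; this is independence. Equivalently, the multi-information is zero iff $m=\prod m_n$ by Gibbs' inequality.

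\emph{Main obstacle.} The only delicate point is Step 3's measure-theoretic phrasing. ``Identically'' has to be understood almost everywhere, and the support argument is needed for the converse direction. The remaining care is Fubini/Tonelli for the marginalisation in Step 1, which is harmless because all integrands are nonnegative.
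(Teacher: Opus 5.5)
Your proposal is correct and follows the paper's own route: Fubini/Tonelli to show each $m_n$ is the marginal of $m$, then the add-and-subtract algebraic identity, then the multi-information $\mathrm{KL}(m\,\|\,\prod_n m_n)\ge 0$ for the vanishing criterion; your almost-everywhere reading of ``identically'' is slightly more careful than the paper, which argues that criterion only in expectation. One small fix: when $m(y)>0$, positivity of $m_n(y_n)$ should be argued from $q\{w:\prod_k p(y_k\mid w)>0\}>0$, which gives $p(y_n\mid w)>0$ on a $q$-non-null set, and not from pointwise domination, which fails when the other likelihood densities exceed one.
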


\emph{Proof (and the ELBO-reassembly claim of Section~\ref{sec:bethe_opt}).}
\emph{Reassembly:} conditional independence given $w$ gives
$\log p(y\mid w)=\sum_n\log p(y_n\mid w)$; taking $\mathbb{E}_q$ termwise
(linearity) reassembles the joint data term, so the plate-wise and joint ELBOs
are identical, and the standard decomposition
$\log Z=\mathrm{ELBO}(q)+\mathrm{KL}(q\,\|\,p(w\mid y))$ localises the entire
error in the KL term---unchanged by sharing $q$ across plates. Note only
conditional independence given $w$ is used, which the factor graph already
asserts; no assumption on the inputs is added.
\emph{The identity:} by Fubini, integrating
$m(y)=\mathbb{E}_q[\prod_k p(y_k\mid w)]$ over all plates but $n$ gives
$\int m(y)\,\mathrm{d}y_{\setminus n}
 =\mathbb{E}_q\!\big[p(y_n\mid w)\big]=m_n(y_n)$,
since each remaining likelihood factor integrates to one: each $m_n$ is the
corresponding marginal of $m$. Equation~\eqref{eq:aggregation} is then the
algebraic identity $-\sum_n\log m_n=-\log m+\log\!\big(m/\prod_n m_n\big)$.
The coupling term is the pointwise total correlation of the plates under $m$:
its expectation under $m$ is the multi-information
$\mathrm{KL}(m\,\|\,\prod_n m_n)\ge0$, zero iff the plates are independent
under $m$; at the observed data it is an identity and carries no fixed sign.
At a point belief $q=\delta(w-w^*)$, $m(y)=\prod_n p(y_n\mid w^*)$ factorises,
so the coupling term vanishes identically and the data sum reassembles into
the log-likelihood (Remark~\ref{rem:bethe_map}). \qed

\begin{remark}[Bethe and MAP]
\label{rem:bethe_map}
The point-belief limit $q(w)\to\delta(w-w^*)$ collapses the Bethe loss
to the full negative log-joint, recovering MAP as a special case: at a point
belief the coupling term of Proposition~\ref{prop:aggregation} vanishes
identically and the data sum reassembles into the log-likelihood---the
aggregation gap measures exactly the posterior uncertainty the belief retains
(final step of the proof above).
\end{remark}

\emph{Score optimality (Proposition~\ref{prop:proper_score}).}
(i)~By Definition~\ref{def:fsc},
$Z_n=\int p(y_n\mid f_n)\,q(f_n)\,\mathrm{d}f_n=m_n(y_n)$: each plate
term is the log score of the plate predictive evaluated at the
observation, with no further approximation.
(ii)~By Gibbs'
inequality $\mathbb{E}_{y\sim p}[-\log m]=H(p)+\mathrm{KL}(p\,\|\,m)\ge H(p)$, with
equality iff $m=p$. Applied per plate to the predictive
$m_n(y)=\int p(y\mid f)\,q(f_n)\,\mathrm{d}f$ and aggregated over $x$ this is
Equation~\eqref{eq:proper_score}, minimised---over beliefs whose predictive can
represent it---at $m(\cdot\mid x)=p_\text{true}(\cdot\mid x)$.
(iii)~The prior and
intermediate-factor terms of~\eqref{eq:fsc_def} contain no $y_n$ and act as
regularisers on the belief:
the prior terms number one per probabilistic weight layer---fixed as
$N$ grows, hence $O(1/N)$ per plate against the $N$-term data
sum---while the intermediate-factor terms, one per plate and layer, vanish
exactly on deterministic backbones
(Appendix~\ref{app:deterministic}). \qed

\emph{Routing (Lemma~\ref{lem:routing}).}
(i)~The image of the binding map is a subset of the belief family,
$\mathcal{Q}^\star\subseteq\mathcal{Q}$, so the free minimum over
$\mathcal{Q}$ is no larger than the closed minimum over $\mathcal{Q}^\star$.
(ii)~Immediate from Proposition~\ref{prop:proper_score}: the population data
term is uniquely minimised at $m=p_\text{true}$, which the free route attains
whenever the family's predictive can represent it; restricted to the family,
the free route reaches the family's population score optimum, since it
minimises over all of $\mathcal{Q}$.
(iii)~By the same Gibbs decomposition~\eqref{eq:proper_score}, the closed
route's population data term exceeds the entropy floor by
$\mathbb{E}_x\,\mathrm{KL}(p_\text{true}\,\|\,m_{\eta^\star})$, zero iff the
bound predictive equals $p_\text{true}$; since the score is fixed, this
excess is determined by the image of the binding alone. The routing
gap---the closed route's excess over the \emph{free} route---subtracts
the free family's own floor distance:
$\mathbb{E}_x\,\mathrm{KL}(p_\text{true}\,\|\,m_{\eta^\star})
-\inf_{q\in\mathcal{Q}}\mathbb{E}_x\,\mathrm{KL}(p_\text{true}\,\|\,m_q)
\ge0$ by the population form of (i); the subtracted infimum is zero
exactly when the family represents $p_\text{true}$, where the two
notions coincide. \qed

\emph{Single-pass hyperparameter learning (Corollary~\ref{cor:emp_bayes}).}
By inspection of~\eqref{eq:fsc_def}: a prior hyperparameter appears
only in its factor's $-\log Z_{w_l}$, a likelihood parameter only in its
plates' $-\log Z_n$, and every term is a closed-form differentiable
function of its arguments (Section~\ref{sec:method}), so the joint
gradient step over beliefs, backbone, and hyperparameters optimises them
under $F_\text{SC}$ with no outer loop. The corner claim is exact: at the
(sequential, closed) corner $F=-\log p(y\mid X;\xi)$
(Proposition~\ref{prop:seq_evidence}), so $\partial_\xi F=0$ is the
classical type-II maximum-likelihood condition. Away from that corner
the optimised criterion is $F_\text{SC}$ itself---a generalised,
overlap-based hyperparameter criterion, not a marginal
likelihood. \qed

\begin{corollary}[Any-likelihood heads]
\label{cor:any_likelihood}
For any bounded observation factor $p(y\mid f)$, the per-plate loss
$-\log Z_n$ is well-defined at the same single-pass cost and remains the
strictly proper score of Proposition~\ref{prop:proper_score}: the
one-dimensional convolution in~\eqref{eq:scroll_msg} is either analytic
(Gaussian; probit, Proposition~\ref{prop:probit_loss}) or, for smooth
factors, evaluated by
$m$-node Gauss--Hermite quadrature with certified error
(Lemma~\ref{lem:gh_error}).
\end{corollary}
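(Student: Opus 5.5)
The plan splits the claim by likelihood class and rests on Proposition~\ref{prop:scroll_fsc} together with Proposition~\ref{prop:proper_score}. The first step is reduction to one dimension. At the last layer $f_n=w^\top\psi_n$ with $\psi_n$ deterministic, so the Gaussian belief $q(w)=\mathcal{N}(\mu,\Sigma)$ pushes forward exactly to $q(f_n)=\mathcal{N}(\mu^\top\psi_n,v_n(\Sigma))$. Hence
\[
Z_n=\int p(y_n\mid f)\,\mathcal{N}\big(f;\mu^\top\psi_n,v_n\big)\,\mathrm{d}f .
\]
Boundedness of $p(y\mid\cdot)$ by some $B$ gives $0<Z_n\le B$, with positivity whenever $p(y_n\mid f)>0$ on a set of positive measure. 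So $-\log Z_n$ is finite and well-defined for every belief in the family, including the $\varepsilon I$-floored covariances of Definition~\ref{def:scroll}.

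Properness then costs nothing extra. Part (i) of Proposition~\ref{prop:proper_score} identifies $Z_n$ with $m_n(y_n)$ whatever the factor is. Part (ii) is Gibbs' inequality applied to that predictive, and it holds verbatim for any normalised $p(y\mid f)$. The evaluation scheme only approximates $Z_n$; it does not change the estimand.

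The cost claim comes next. Each plate term is a single scalar integral against a Gaussian, computed after one forward pass through $\theta$ and the $O(H^2)$ (or $O(H)$) quadratic form $v_n$. There is no sampling, no inner loop, and no dependence on other plates.

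Then comes the case split on how $Z_n$ is evaluated.
\begin{itemize}
\item \emph{Gaussian factor:} use the convolution identity~\eqref{eq:gauss_conv}, which gives~\eqref{eq:Lreg}.
\item \emph{Probit factor:} use $\int\Phi(f)\,\mathcal{N}(f;m,v)\,\mathrm{d}f=\Phi\big(m/\sqrt{1+v}\big)$, cited as Proposition~\ref{prop:probit_loss}.
\item \emph{Smooth factors:} substitute $f=\mu^\top\psi_n+\sqrt{2v_n}\,t$, so $Z_n=\pi^{-1/2}\int e^{-t^2}g(t)\,\mathrm{d}t$. Then apply $m$-node Gauss--Hermite quadrature with the error bound of Lemma~\ref{lem:gh_error}.
\end{itemize}
Each branch is differentiable in $(\mu,\Sigma,\theta,\xi)$, so the route stays single-pass.

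The main obstacle is making the certified quadrature error meaningful for the \emph{log} loss. A bound on $|Z_n-\hat Z_n|$ alone does not control $|\log Z_n-\log\hat Z_n|$ when $Z_n$ is small. I would first combine Lemma~\ref{lem:gh_error} with a lower bound $Z_n\ge z_{\min}>0$ on the relevant parameter set. The mean-value theorem then gives
\[
|\log Z_n-\log\hat Z_n|\le\frac{|Z_n-\hat Z_n|}{\min(Z_n,\hat Z_n)} .
\]
If a uniform $z_{\min}$ is unavailable, I would state the certification relative to $Z_n$, or restrict to compact parameter sets. Properness of the exact score is unaffected in either case.
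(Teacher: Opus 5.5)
Your proposal is correct and follows the paper's proof essentially step for step: the exact Gaussian pushforward reduces $Z_n$ to a one-dimensional convolution, boundedness gives finiteness, propriety carries over verbatim from the Gibbs-inequality proof of Proposition~\ref{prop:proper_score}, and the evaluation splits into an analytic case and a Gauss--Hermite case. Your two additions are sensible refinements: you require $Z_n>0$ so that $-\log Z_n$ is finite, and you use the relative log-error bound, which is already stated in Lemma~\ref{lem:gh_error} itself and so is not a new obstacle.
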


\emph{Proof.}
With the head linear and the belief Gaussian, the forward message
$q(f_n)=\mathcal{N}(\mu^\top\psi_n,\psi_n^\top\Sigma\psi_n)$~\eqref{eq:scroll_msg}
is exactly Gaussian \emph{regardless} of the observation factor, so
$Z_n=\int p(y_n\mid f)\,q(f_n)\,\mathrm{d}f$ is a one-dimensional
convolution: finite whenever $p(y_n\mid\cdot)$ is bounded, one evaluation
per plate---the same single-pass cost as the Gaussian case---closed-form
when the convolution is analytic (probit: Proposition~\ref{prop:probit_loss}) and
Gauss--Hermite quadrature otherwise, with error bounded by
Lemma~\ref{lem:gh_error} (Appendix~\ref{app:general_consistency}). Propriety is
Proposition~\ref{prop:proper_score} verbatim: its proof used only Gibbs'
inequality on the predictive $m_n(y)=\int p(y\mid f)\,q(f_n)\,\mathrm{d}f$,
with no assumption on the form of the likelihood. \qed

\input{sections/app_deterministic}

\subsection{The Shared In-Sample Score Tracks Leave-One-Out}
\label{app:shared_loo}

The body states that the shared-cavity data term tracks the leave-one-out
predictive when $N\gg H$. The following lemma makes the statement exact in
the closed (conjugate) case, where both sides are available in closed form;
its role is to locate the \emph{estimand} the shared score defines, so it is
stated at the exact posterior belief---for a free-routed belief the shared
sum is the objective itself. We state the limit explicitly: no
finite-sample optimism bound is claimed for the \emph{freely-fit}
belief---the lemma licenses the estimand, not the generalisation of the
free optimiser, whose in-sample optimism is contained empirically at
$N\gg H$ (the regime of every benchmark here) and by early stopping on
a held-out score (Appendix~\ref{app:setup}).

\begin{lemma}[In-sample vs.\ leave-one-out, closed case]
\label{lem:shared_loo}
Fix $(\alpha,\sigma_\text{obs})$, let $q=\mathcal{N}(\mu,\Sigma)$ be the
exact Gaussian posterior on all $N$ plates, $r_n=y_n-\mu^\top\!\psi_n$ the
in-sample residual, and
$h_n=\sigma_\text{obs}^{-2}\,\psi_n^\top\Sigma\,\psi_n\in[0,1)$ the leverage
(the hat-matrix diagonal; $\sum_n h_n = d_\text{eff}\le H$). Then, per
plate, exactly,
\begin{equation}
  -\log p(y_n\mid y_{\setminus n})
  \;-\;\bigl(-\log Z_n\bigr)
  \;=\;
  \frac{r_n^2}{\sigma_\text{obs}^2}\,\frac{h_n}{1-h_n^2}
  \;-\;\tfrac{1}{2}\log\bigl(1-h_n^2\bigr)
  \;\ge\;0 .
  \label{eq:shared_loo_gap}
\end{equation}
Under balanced leverage ($h_n\le C\,d_\text{eff}/N$) and at bounded
residual scale ($r_n^2=O(\sigma_\text{obs}^2)$) the per-plate gap is
$O(1/N)$ and the summed gap is $O(d_\text{eff})=O(H)$, i.e.\ $O(H/N)$ of
the total: the shared score is a uniformly optimistic,
$O(1/N)$-per-plate approximation of GP LOO-CV. Both qualifiers are
active: a plate with outlying leverage or residual pays the exact
price~\eqref{eq:shared_loo_gap}, which grows without bound as
$h_n\!\to\!1$.
\end{lemma}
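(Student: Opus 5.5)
The proof is an exact conjugate computation: express the leave-one-out predictive in terms of full-posterior quantities by a rank-one downdate, and subtract the in-sample Gaussian score~\eqref{eq:Lreg}. With $q=\mathcal{N}(\mu,\Sigma)$ the exact posterior~\eqref{eq:R1} on all $N$ plates, the in-sample plate term is Gaussian by~\eqref{eq:gauss_conv}:
\[
-\log Z_n=\frac{r_n^2}{2V_n}+\tfrac12\log(2\pi V_n),\qquad V_n=\sigma_\text{obs}^2+\psi_n^\top\Sigma\psi_n=\sigma_\text{obs}^2(1+h_n),
\]
where the last equality is the definition of $h_n$. The LOO predictive $p(y_n\mid y_{\setminus n})$ is the same convolution, now taken against the posterior $\mathcal{N}(\mu_{-n},\Sigma_{-n})$ on the remaining plates. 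So it suffices to express $\mu_{-n}^\top\psi_n$ and $\psi_n^\top\Sigma_{-n}\psi_n$ through $r_n$ and $h_n$.

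\textbf{Step 1: downdate the posterior.} Removing plate $n$ subtracts $\psi_n\psi_n^\top/\sigma_\text{obs}^2$ from the precision $\Sigma^{-1}$. Sherman--Morrison then gives
\[
\Sigma_{-n}=\Sigma+\frac{\Sigma\psi_n\psi_n^\top\Sigma}{\sigma_\text{obs}^2-\psi_n^\top\Sigma\psi_n}.
\]
Hence $v_{-n}:=\psi_n^\top\Sigma_{-n}\psi_n=\sigma_\text{obs}^2h_n/(1-h_n)$, and the LOO predictive variance is $V_n^{\text{loo}}=\sigma_\text{obs}^2+v_{-n}=\sigma_\text{obs}^2/(1-h_n)$.

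For the mean, write $\mu=\Sigma b$ with $b=\Psi^\top y/\sigma_\text{obs}^2$, and $\mu_{-n}=\Sigma_{-n}(b-\psi_ny_n/\sigma_\text{obs}^2)$. Expanding this is routine and yields the classical PRESS identity: the LOO residual is $y_n-\mu_{-n}^\top\psi_n=r_n/(1-h_n)$. I would verify this identity carefully, since the sign conventions in the rank-one update are the only place an error could enter. I would also check that $h_n<1$ holds whenever $\alpha>0$, so that the downdate is well defined.

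\textbf{Step 2: subtract.} The LOO score is $\frac{r_n^2}{2\sigma_\text{obs}^2(1-h_n)}+\tfrac12\log\frac{2\pi\sigma_\text{obs}^2}{1-h_n}$. Subtracting $-\log Z_n$ gives
\[
\frac{r_n^2}{2\sigma_\text{obs}^2}\Big[\frac1{1-h_n}-\frac1{1+h_n}\Big]-\tfrac12\log\big((1-h_n)(1+h_n)\big),
\]
which is exactly~\eqref{eq:shared_loo_gap}, because $\frac1{1-h}-\frac1{1+h}=\frac{2h}{1-h^2}$. Nonnegativity is immediate: for $h_n\in[0,1)$ both terms are $\ge0$. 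The facts $\sum_nh_n=d_\text{eff}\le H$ and $h_n\in[0,1)$ follow because $h_n$ is the $n$-th diagonal entry of the ridge hat matrix $\Psi\Sigma\Psi^\top/\sigma_\text{obs}^2$. Its eigenvalues are $\lambda_i/(\lambda_i+\alpha\sigma_\text{obs}^2)\in[0,1)$, and there are at most $H$ nonzero ones, so the trace is at most $H$.

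\textbf{Step 3: asymptotics.} Under $h_n\le Cd_\text{eff}/N\to0$ and $r_n^2=O(\sigma_\text{obs}^2)$, the first term is $O(h_n)$ and $-\tfrac12\log(1-h_n^2)=O(h_n^2)$, so the per-plate gap is $O(1/N)$. Summing, the first term contributes $O(\sum_nh_n)=O(d_\text{eff})$ and the second $O(\sum_nh_n^2)\le O(\max_nh_n\cdot d_\text{eff})$. The total is therefore $O(H)$, which is $O(H/N)$ relative to the $N$-term sum. The unbounded growth as $h_n\to1$ is read directly from the $1/(1-h_n^2)$ and $-\log(1-h_n^2)$ factors.

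The main obstacle is the mean downdate in Step~1. The rest is bookkeeping, and I would carry out that downdate through the PRESS identity rather than by direct expansion of $\mu_{-n}$.
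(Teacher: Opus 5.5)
Your proposal is correct and follows the paper's proof essentially step for step. Both use a rank-one downdate of the precision plus Sherman--Morrison to get the LOO residual $r_n/(1-h_n)$ and LOO variance $\sigma_\text{obs}^2/(1-h_n)$, set against the in-sample $V_n=\sigma_\text{obs}^2(1+h_n)$, then subtract the two Gaussian log-densities, read nonnegativity off $h_n\in[0,1)$, and get the rates from $\sum_n h_n=d_\text{eff}$. The mean downdate you flag as the main obstacle is one line from your own $\Sigma_{-n}$: $\psi_n^\top\Sigma_{-n}=\psi_n^\top\Sigma/(1-h_n)$ gives $\mu_{-n}^\top\psi_n=(\mu^\top\psi_n-h_n y_n)/(1-h_n)$ and hence the PRESS residual directly. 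Your hat-matrix eigenvalue argument also justifies $h_n\in[0,1)$ and $d_\text{eff}\le H$, which the paper takes as given.
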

\begin{proof}
Rank-one downdate of the posterior precision
$\Lambda_{\setminus n}=\Lambda-\sigma_\text{obs}^{-2}\psi_n\psi_n^\top$ and
Sherman--Morrison give the standard leave-one-out
identities~\citep{Williams2006,Sundararajan2001}: the LOO residual is
$r_n/(1-h_n)$ and the LOO predictive variance is
$\sigma_\text{obs}^2/(1-h_n)$, against the in-sample
$V_n=\sigma_\text{obs}^2(1+h_n)$. Substituting both Gaussian log-densities
and simplifying yields~\eqref{eq:shared_loo_gap}; nonnegativity follows from
$h_n\in[0,1)$, and the ratio of variances shows where the optimism sits:
in-sample \emph{adds} the leverage to the predictive variance while LOO
\emph{divides} by its complement. For the rates, $h_n\le C\,d_\text{eff}/N$
bounds each term of~\eqref{eq:shared_loo_gap} by $O(h_n)$, and
$\sum_n h_n=d_\text{eff}$.
\end{proof}

\paragraph{Measured.}
Table~\ref{tab:loo_gap} instantiates the lemma on the shipped models'
learned features: per dataset, the exact conjugate posterior is fit at
the corner's own evidence hyperparameters $(\alpha,\sigma_\text{obs})$
on the trained backbone's features and the per-plate
gap~\eqref{eq:shared_loo_gap} is evaluated in closed form. The realized
gap tracks the $O(H/N)$ rate across the benchmark---$0.23$
nats per plate at $H/N=0.27$ (yacht) down to $5\times10^{-3}$ at
$H/N=0.007$ (naval)---with $\text{gap}/(H/N)$ an $O(1)$ constant
($0.55$--$0.96$) throughout, even where the balanced-leverage qualifier
is stressed (worst plate $h_{\max}=0.95$, on the smallest dataset).

\input{tables/appendix_loo_gap}

The gap is the price of tying \emph{at the closed reference}. On the
free row, the na\"ive LOO realization---rescaling each plate variance
by the downdate identity $\sigma_\text{obs}^2/(1-h_n)$ with the free
belief's $v_n$ in place of the leverage---is degenerate: nothing binds
$v_n$ below $\sigma_\text{obs}^2$, the correction diverges as
$v_n\to\sigma_\text{obs}^2$, and its test NLL collapses by an order of
magnitude on small datasets. A faithful $(\text{LOO},\text{free})$
cell requires per-plate message bookkeeping, forfeiting the
batchability that motivates the shared cavity; the shared score is the
stable, batchable member of the family.

\subsection{The Gaussian Last Layer (Section~\ref{sec:method}):
Theorem~\ref{thm:pred_evidence}}

\emph{SCROLL is the shared-cavity loss
(Proposition~\ref{prop:scroll_fsc}).}
Instantiate Definition~\ref{def:fsc} at the last-layer graph with
$q(w)=\mathcal{N}(\mu,\Sigma)$.
\emph{Plate terms:} the map $f_n=w^\top\psi_n$ is linear, so the
belief it induces at the output is exactly Gaussian,
$q(f_n)=\mathcal{N}(\mu^\top\psi_n,\;\psi_n^\top\Sigma\psi_n)$---first
two moments of a linear image, no approximation---and
$Z_n=\int p(y_n\mid f;\xi)\,q(f_n)\,\mathrm{d}f$ of
Definition~\ref{def:fsc} is the one-dimensional
convolution~\eqref{eq:scroll_msg}, for any observation factor.
\emph{Intermediate terms:} the backbone is deterministic, so every
$-\log Z_a$ vanishes on the consistent forward pass
(Proposition~\ref{rem:deterministic}, Appendix~\ref{app:deterministic}).
\emph{Prior term:} with one probabilistic layer the sum over $l$ has a
single term, the Gaussian product normaliser
$Z_w=\int q(w)\,p(w)\,\mathrm{d}w
 =\mathcal{N}(\mu;\,0,\,\Sigma+\alpha^{-1}I)$---the convolution
identity~\eqref{eq:gauss_conv} in $w$. Summing the three groups gives
$F_\text{SC}(q)=\mathcal{L}$ of~\eqref{eq:Lscroll}. \qed

\emph{The exact corner (Proposition~\ref{prop:seq_evidence}).}
By the chain rule,
$-\log p(y\mid X)=\sum_n-\log p(y_n\mid y_{<n})$. Under the Gaussian
factor each predictive is conjugate:
$p(y_n\mid y_{<n})=\mathcal{N}\big(\mu_{<n}^\top\psi_n,\,
V_n^{\mathrm{seq}}\big)$ by the convolution~\eqref{eq:gauss_conv}
against the running posterior~\eqref{eq:R1} restricted to plates
$1,\dots,n-1$---the prior predictive at $n=1$---which is
term-for-term~\eqref{eq:T11seq}. \qed

The first two steps below use no last-layer structure: they hold for any
model whose plate predictive is Gaussian with fixed mean and variance
profile $\{V_n\}$ ranging over a feasible set, so the score--evidence gap
``$=$ residual heteroscedasticity'' is a property of the feasible
variance profiles alone. The last layer enters only in the final two
steps, through the exact belief$\to$predictive map~\eqref{eq:scroll_msg} that
determines which profiles each route can realise.

\paragraph{Reduction and pointwise optimum.}
The data term of~\eqref{eq:Lreg} is
$D=\sum_n d(r_n,V_n)$ with $d(r,V)=\tfrac{r^2}{2V}+\tfrac12\log V$, which depends
on $(\mu,\Sigma,\sigma_\text{obs})$ only through $\{r_n\}$ and $\{V_n\}$.
For fixed $r$, $\partial_V d=\tfrac{V-r^2}{2V^2}$ vanishes at $V=r^2$, where
$\partial_V^2 d=\tfrac{1}{2r^4}>0$; hence $V=r^2$ is the unique minimiser.

\paragraph{Population optimum.}
Minimising the expected loss over a variance function $V(\cdot)$,
$\mathbb{E}_{x,y}[d(y-\mu^\top\psi(x),V(x))]$, is pointwise in $x$:
$\mathbb{E}_y[d\mid x]=\tfrac{\mathbb{E}[(y-\mu^\top\psi)^2\mid x]}{2V(x)}
 +\tfrac12\log V(x)$, minimised at
$V^\star(x)=\mathbb{E}[(y-\mu^\top\psi(x))^2\mid x]$ by the pointwise argument.
$V^\star$ is constant iff the conditional residual variance is homoscedastic.

\paragraph{The closed route cannot represent heteroscedastic $V^\star$.}
Under the binding~\eqref{eq:R1},
$\Sigma=(\Psi^\top\Psi/\sigma_\text{obs}^2+\alpha I)^{-1}$ (its KL-projection
onto the V2 diagonal family:
$\sigma_d^2=1/(\Psi^\top\Psi/\sigma_\text{obs}^2+\alpha I)_{dd}$), so
$v_n=\psi_n^\top(\Psi^\top\Psi/\sigma_\text{obs}^2+\alpha I)^{-1}\psi_n$ is the
ridge/GP posterior (leverage) variance: a function of $(\alpha,\sigma_\text{obs},
\Psi)$ alone, \emph{independent of the residuals} $\{r_n\}$, and with
$v_n=O(1/N)$ as $\Psi^\top\Psi$ grows. Thus $V_n^{\mathrm{cl}}
=\sigma_\text{obs}^2+v_n$ is asymptotically homoscedastic and can equal
$V^\star$ only when $V^\star$ is constant.

\paragraph{The free route attains the family optimum; two gaps.}
Under free routing (Section~\ref{sec:method}), $\Sigma\succeq0$ is free. The feasible set of
predictive-variance profiles
$\mathcal{V}_{\psi}=\{\sigma_\text{obs}^2+\psi(\cdot)^\top\Sigma\psi(\cdot):
 \Sigma\succeq0,\ \sigma_\text{obs}^2\ge0\}$
contains the closed-route profile (the posterior $\Sigma$ is one feasible PSD
choice), so
$\mathcal{V}_{\psi}\supseteq\mathcal{V}_{\mathrm{cl}}$ and
$\inf_{\mathcal{V}_{\psi}}D\le\inf_{\mathcal{V}_{\mathrm{cl}}}D$, with strict
inequality whenever $\{V^\star_n\}\notin\mathcal{V}_{\mathrm{cl}}$---under
heteroscedasticity the generic case at finite $N$ (the closed profiles form a
two-parameter family of residual-independent leverage curves) and the guaranteed
one as $N$ grows, since the leverage flattens ($v_n=O(1/N)$) while $V^\star$
stays input-dependent. Writing $R(V)$ for the population score of a
profile $V$, the closed route's excess over the truth decomposes as
\begin{equation}
  R(V_{\mathrm{cl}})-R(V^\star)
  \;=\;\underbrace{R(V_{\mathrm{cl}})
      -\inf_{V\in\mathcal{V}_\psi}R(V)}_{\text{routing gap}}
  \;+\;\underbrace{\inf_{V\in\mathcal{V}_\psi}R(V)
      -R(V^\star)}_{\text{representation gap}},
  \label{eq:two_gap}
\end{equation}
both terms nonnegative. The free route minimises $D$ over
$\mathcal{V}_\psi$, so it removes the first term and only the first: the
second is a property of the family at fixed $\psi$, closed by no
routing---though the backbone can shrink it by moving $\psi$. It
vanishes iff $V^\star$'s log-score projection onto $\mathcal{V}_\psi$
is $V^\star$ itself, in particular whenever
$V^\star\in\mathcal{V}_\psi$; under that representability hypothesis
the free route is score-optimal and the whole gap is the residual
heteroscedasticity.

\paragraph{The closed-form gap~\eqref{eq:routing_gap}.}
As the leverage flattens ($v_n=O(H/N)$ uniformly), the closed feasible
set converges to the constants $\{V\equiv c:c>0\}$. Over constants the
population risk $\mathbb{E}[d]=\tfrac{\mathbb{E}[V^\star]}{2c}
+\tfrac12\log c$ is minimised at $c^\star=\mathbb{E}[V^\star]$ with
value $\tfrac12\big(1+\log\mathbb{E}[V^\star]\big)$; under the
representability hypothesis $V^\star\in\mathcal{V}_\psi$ the free route
attains $V^\star$ with value
$\tfrac12\big(1+\mathbb{E}[\log V^\star]\big)$. Subtracting gives the
per-plate gap
$\Delta_\text{rout}=\tfrac12\big(\log\mathbb{E}[V^\star]
-\mathbb{E}[\log V^\star]\big)$: the Jensen gap of the strictly concave
$\log$, nonnegative, and zero iff $V^\star$ is a.s.\ constant. For the
lower bound, $V^\star\le b$ gives $(\log)''(v)=-v^{-2}\le-b^{-2}$ on
the support, so $v\mapsto\log v+v^2/(2b^2)$ is concave and Jensen
yields
$\mathbb{E}[\log V^\star]\le\log\mathbb{E}[V^\star]
-\operatorname{Var}[V^\star]/(2b^2)$, i.e.\
$\Delta_\text{rout}\ge\operatorname{Var}[V^\star]/(4b^2)$. This
$\Delta_\text{rout}$ is the Jensen gap $J$ of
Proposition~\ref{prop:routing_selection}, there measured on learned
features; at finite $N$ the closed route retains the
residual-independent $O(H/N)$ leverage correction to the constant
profile. \qed

\subsection{Corollary~\ref{cor:pathologies}}

\paragraph{(i) The split of $V_n$, and the observation-noise collapse.}
\emph{Non-identification of the split:} by the reduction above, the data term
depends on $(\Sigma,\sigma_\text{obs})$ only through the totals $\{V_n\}$, so
any reallocation between $\sigma_\text{obs}^2$ and $\psi_n^\top\Sigma\psi_n$
that preserves every $V_n$ changes only the prior term $-\log Z_w$---a single
term, $O(1/N)$ per plate. On homoscedastic data ($V^\star$ constant) the
choice $\Sigma=0$, $\sigma_\text{obs}^2=V^\star$ attains the pointwise data
optimum, so the frequently observed $\Sigma\to0$ drift there is
score-optimal, not a failure.
\emph{The collapse:} because $v_n=\psi_n^\top\Sigma\psi_n\ge0$, every plate obeys
$\sigma_\text{obs}^2\le V_n$, so $\sigma_\text{obs}^2\le\min_n V_n$. For fixed
$r_n\neq0$ the plate term $d(r_n,V_n)$ is bounded below (minimised at
$V_n=r_n^2$), so the collapse requires the residual itself to vanish: at its
pointwise optimum the plate term is $d(r_n,r_n^2)=\tfrac12+\tfrac12\log r_n^2$,
which decreases without bound as $r_n\to0$. On near-noiseless data the jointly
trained mean \emph{can} drive $r_n\to0$, so gradient descent shrinks $r_n$ and
$V_n$ together---rewarded at rate $\log r_n^2$---and
$\sigma_\text{obs}^2\le\min_n V_n\to0$ follows: the standard
heteroscedastic-NLL degeneracy \citep{Seitzer2022,Stirn2023}.

\paragraph{(ii) Prior-precision non-identification.}
Under the binding~\eqref{eq:R1}, $\alpha$ enters every plate variance through
$\Sigma(\alpha)$, so the data term identifies it; under the free route only
$-\log Z_w$ depends on $\alpha$, through $P=\Sigma+\alpha^{-1}I$ with
$\partial P/\partial\alpha=-\alpha^{-2}I$. Using
$\partial_\alpha\tfrac12\log\det P=\tfrac12\operatorname{tr}(P^{-1}\partial_\alpha P)
 =-\tfrac12\alpha^{-2}\operatorname{tr}P^{-1}$ and
$\partial_\alpha\tfrac12\mu^\top P^{-1}\mu
 =-\tfrac12\mu^\top P^{-1}(\partial_\alpha P)P^{-1}\mu
 =\tfrac12\alpha^{-2}\mu^\top P^{-2}\mu$,
\begin{equation}
  \frac{\partial\mathcal{L}_\text{Gauss}}{\partial\alpha}
  =-\tfrac12\alpha^{-2}\big(\operatorname{tr}P^{-1}-\mu^\top P^{-2}\mu\big).
  \label{eq:dL_dalpha}
\end{equation}
Once $\lambda_{\min}(\Sigma)\gg\alpha^{-1}$ we have $P\to\Sigma$, so
$\partial_\alpha\mathcal{L}_\text{Gauss}
 =-\tfrac12\alpha^{-2}(\operatorname{tr}\Sigma^{-1}-\mu^\top\Sigma^{-2}\mu)
   +o(\alpha^{-2})=O(\alpha^{-2})\to0$:
$\alpha$ lies on a vanishing-gradient flat direction. The sign is fixed by
$K=\operatorname{tr}\Sigma^{-1}-\mu^\top\Sigma^{-2}\mu$; gradient descent drives
$\alpha\to\infty$ (unbounded runaway) iff $K>0$, and to a finite stationary point
iff $K<0$. As a check, for $\Sigma=0$ (V1, $P=\alpha^{-1}I$),
Equation~\eqref{eq:dL_dalpha} gives the closed-form update
$\alpha^\star=H/\lVert\mu\rVert^2$, the classical type-II form. \qed

\subsection{The Overdispersion Channel Beyond Gaussian}
\label{app:overdispersion}

Theorem~\ref{thm:pred_evidence} characterises the routing gap exactly at
the conjugate corner. Its driver is the second-moment channel stated in
prose in Section~\ref{sec:method}:

\begin{remark}[Second-moment reduction]
\label{rem:second_moment}
For every observation factor, $\mathcal{L}$ depends on the belief $q$ only
through the per-plate message moments
$(m_n,v_n)=(\mu^\top\!\psi_n,\,\psi_n^\top\Sigma\psi_n)$:
the belief enters per plate as a mean plus an
\emph{overdispersion channel} $v_n$---for the Gaussian factor
$V_n=\sigma_\text{obs}^2+v_n$.
\end{remark}

The channel survives off the conjugate corner at first order, with no
conjugacy and no posterior-contraction assumptions---only the smoothness
already used in Lemma~\ref{lem:gh_error}:

\begin{proposition}[Score-optimal overdispersion channel]
\label{prop:overdispersion}
Fix a mean profile $f(\cdot)$ and let
$m(y;f,v)=\int p(y\mid t)\,\mathcal{N}(t;f,v)\,\mathrm{d}t$ be the
$v$-smoothed factor, with $p(y\mid\cdot)\in C^2$, $p$ and its first two
$f$-derivatives bounded, and $p(y\mid f(x))>0$ almost surely. Define the
\emph{generalised-overdispersion score}
\begin{equation}
  \kappa(x) \;=\; \mathbb{E}_{y\mid x}\!\left[
    \frac{\partial_f^2\,p(y\mid f)}{p(y\mid f)}\bigg|_{f=f(x)}\right]
  \;=\; \mathbb{E}_{y\mid x}\!\left[\partial_f^2\log p
        +\bigl(\partial_f\log p\bigr)^2\right].
  \label{eq:kappa}
\end{equation}
Then:
\emph{(i)}
$\partial_v\,\mathbb{E}_{y\mid x}\!\left[-\log m(y;f(x),v)\right]
 \big|_{v=0^+}=-\kappa(x)/2$; wherever $\kappa(x)>0$, some $v>0$ has
strictly smaller pointwise risk than $v=0$, so the population score
optimum, where attained, has $v^\star(x)>0$: the channel is active.
\emph{(ii)} If $p_\text{true}(\cdot\mid x)=p(\cdot\mid f(x))$
(well-specified plug-in), then $\kappa(x)=0$.
\emph{(iii)} For the Gaussian factor,
$\kappa(x)=\bigl(\mathbb{E}[r^2\mid x]-\sigma_\text{obs}^2\bigr)/
\sigma_\text{obs}^4$---residual heteroscedasticity; for the Poisson
log-link factor with matched conditional mean $\lambda(x)$,
$\kappa(x)=\mathrm{Var}(y\mid x)-\lambda(x)$---classical overdispersion.
\end{proposition}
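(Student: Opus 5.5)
The plan is to reduce everything to a second-order expansion of the smoothed factor in $v$ at $v=0$, then read off the three parts. By the heat-equation (Gaussian smoothing) identity, $m(y;f,v)=\mathbb{E}_{Z\sim\mathcal{N}(0,1)}[p(y\mid f+\sqrt{v}Z)]$. With $p(y\mid\cdot)\in C^2$ and bounded second derivative, a Taylor expansion with integral remainder gives $m(y;f,v)=p(y\mid f)+\tfrac{v}{2}\partial_f^2p(y\mid f)+o(v)$. The odd term vanishes because $\mathbb{E}Z=0$, and the remainder is $o(v)$ by dominated convergence, using continuity and boundedness of $\partial_f^2p$. Equivalently, $\partial_v m=\tfrac12\partial_f^2 m$ for $v>0$, and the right derivative at $0^+$ is $\tfrac12\partial_f^2p$.

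\textbf{Part (i).} Since $p(y\mid f(x))>0$ a.s., the chain rule gives
\[
\partial_v\bigl[-\log m\bigr]\big|_{0^+}=-\tfrac12\,\frac{\partial_f^2p}{p}.
\]
The step I expect to be the main obstacle is exchanging $\partial_v$ with $\mathbb{E}_{y\mid x}$. I would handle it through the difference quotient $\bigl(-\log m(y;f,v)+\log p(y\mid f)\bigr)/v$. It converges pointwise, and I would dominate it using boundedness of $\partial_f^2p$ together with a lower bound on $m$ near $v=0$. If the lower bound is not uniform in $y$, I would state the exchange where the expectation of $|\partial_f^2p|/p$ is finite, which the definition of $\kappa$ already presupposes.

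Given the exchange, the right derivative of the pointwise risk is $-\kappa(x)/2<0$ wherever $\kappa(x)>0$. So for small $v>0$ the risk is strictly below its value at $v=0$, and any attained optimum therefore has $v^\star(x)>0$. The second form of \eqref{eq:kappa} follows from the identity $\partial_f^2p/p=\partial_f^2\log p+(\partial_f\log p)^2$.

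\textbf{Part (ii).} Under $p_\text{true}=p(\cdot\mid f(x))$ the expectation reduces to an integral:
\[
\kappa=\int\partial_f^2p(y\mid f)\,dy=\partial_f^2\!\int p(y\mid f)\,dy=\partial_f^2 1=0.
\]
Differentiating under the integral is justified by the bounded-derivative hypothesis, with dominated convergence applied on the $y$-space (sum for discrete $y$).

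\textbf{Part (iii).} Both cases are direct computations from the second form of \eqref{eq:kappa}.
\begin{itemize}
\item For the Gaussian factor, $\partial_f\log p=r/\sigma_\text{obs}^2$ and $\partial_f^2\log p=-1/\sigma_\text{obs}^2$, with $r=y-f$. This gives $\kappa=(\mathbb{E}[r^2\mid x]-\sigma_\text{obs}^2)/\sigma_\text{obs}^4$, consistent with the pointwise optimum $V^\star$ in the proof of Theorem~\ref{thm:pred_evidence}.
\item For the Poisson log-link, with $f=\log\lambda$, we have $\log p=yf-e^f-\log y!$. Then $\partial_f\log p=y-\lambda$ and $\partial_f^2\log p=-\lambda$, so $\kappa=\mathbb{E}[(y-\lambda)^2\mid x]-\lambda=\mathrm{Var}(y\mid x)-\lambda(x)$ when the conditional mean matches $\lambda(x)$.
\end{itemize}
I would note in passing that the Poisson factor is not bounded in $f$ globally. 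I would therefore either restrict $f$ to a compact range or simply verify the Taylor step directly via the explicit moment generating function.
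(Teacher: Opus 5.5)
Your argument has the same skeleton as the paper's proof: an expansion of $m$ in $v$ at $0^+$, the chain rule through $-\log$, an exchange with $\mathbb{E}_{y\mid x}$, Bartlett's second identity for (ii), and direct differentiation for (iii). The one variation is how you get $\partial_v m|_{v=0^+}=\tfrac12\partial_f^2p$. The paper uses the heat equation for the Gaussian kernel and two integrations by parts; that spends the boundedness of $p$ and $\partial_fp$ on the boundary terms and gives $\partial_v m$ for every $v>0$. Your second-order Taylor expansion of $p(y\mid f+\sqrt v Z)$ with integral remainder needs only a bounded continuous $\partial_f^2p$, and yields exactly the one-sided derivative that (i) uses. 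That part is fine.

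The gap is the exchange of $\partial_v$ with $\mathbb{E}_{y\mid x}$. You rightly flag it, and the paper's proof also only asserts it (``dominated''). Your fallback is finiteness of $\mathbb{E}_{y\mid x}\bigl[|\partial_f^2p|/p\bigr]$ at $f=f(x)$. That is integrability of the limit, not an envelope for the quotients $Q_v(y)=v^{-1}\log\bigl(p(y\mid f)/m(y;f,v)\bigr)$, and it does not suffice.

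To see this, take outcomes $y_k$ with $p(y_k\mid t)=8^{-k}h\bigl(2^k(t-f(x))\bigr)$, plus one outcome carrying the remaining mass. Here $h$ is a smooth symmetric bump, nonincreasing in $|t|$, equal to $1$ on $[-1,1]$ and vanishing outside $[-2,2]$.
\begin{itemize}
\item All hypotheses hold with bounds uniform in $y$.
\item Every $p(y\mid\cdot)$ has $\partial_f^2p=0$ at $f(x)$, so $\kappa(x)=0$.
\item Yet $m(y_k)/p(y_k)$ decays like $(4^kv)^{-1/2}$ once $4^kv\gtrsim1$.
\item With true masses proportional to $4^{-k}$, one gets $\liminf_{v\downarrow0}\mathbb{E}_{y\mid x}[Q_v]>0=-\kappa(x)/2$.
\end{itemize}
So (i) needs an extra integrability hypothesis.

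A general one is as follows. Let $M_2(y)=\sup_t|\partial_t^2p(y\mid t)|$. The inequality $(\partial_tp)^2\le2M_2\,p$ for nonnegative $C^2$ functions gives $p(y\mid f+h)\ge p(y\mid f)/2$ for $|h|\le\tfrac14\sqrt{p(y\mid f)/M_2(y)}$. Together with $|m-p|\le vM_2/2$, this yields $|Q_v(y)|\le C\,M_2(y)/p(y\mid f(x))$ for all $v>0$, with an absolute constant $C$. Hence $\mathbb{E}_{y\mid x}[M_2/p]<\infty$ licenses dominated convergence.

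That covers finite-support factors (probit, ordinal) but not the Gaussian. There $M_2/p\propto e^{r^2/(2\sigma_\text{obs}^2)}$, which is non-integrable for Gaussian residuals of variance at least $\sigma_\text{obs}^2$, i.e.\ precisely when $\kappa\ge0$. For the Gaussian, use the closed form $m=\mathcal{N}(y;f,\sigma_\text{obs}^2+v)$ instead. The $v$-derivative of $-\log m$ is bounded by $r^2/(2\sigma_\text{obs}^4)+1/(2\sigma_\text{obs}^2)$, which is integrable under a finite conditional second moment. The same envelope caveat applies to differentiating under $\int\mathrm{d}y$ in (ii).

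Finally, your Poisson aside is off. For fixed $y$, $e^{yf-e^f}/y!\le1$ and every $f$-derivative is bounded (the paper notes $\sup_fe^{kf-e^f}=(k/e)^k$). What fails is uniformity in $y$, i.e.\ the same envelope issue. In any case, (iii) itself is only a computation of $\kappa$.
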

\begin{proof}
\emph{(i)} The Gaussian kernel solves the heat equation,
$\partial_v\mathcal{N}(t;f,v)=\tfrac12\partial_t^2\mathcal{N}(t;f,v)$; two
integrations by parts (boundary terms vanish: $p,\partial_f p$ bounded
against Gaussian tails) give
$\partial_v m=\tfrac12\int\partial_t^2p(y\mid t)\,
\mathcal{N}(t;f,v)\,\mathrm{d}t$, and by dominated convergence
$\partial_v m\to\tfrac12\partial_f^2p(y\mid f)$, $m\to p(y\mid f)$ as
$v\to0^+$. Hence
$\partial_v[-\log m]\big|_{0^+}=-\tfrac12\,\partial_f^2p/p$ pointwise, and
taking $\mathbb{E}_{y\mid x}$ (dominated; $p(y\mid f(x))>0$) yields
$-\kappa(x)/2$. A negative one-sided derivative at $v=0$ gives the
activation claim.
\emph{(ii)} With $y\sim p(\cdot\mid f(x))$ the density cancels:
$\kappa(x)=\int\partial_f^2p(y\mid f)\,\mathrm{d}y
=\partial_f^2\!\int p(y\mid f)\,\mathrm{d}y=\partial_f^2\,1=0$
(Bartlett's second identity; differentiation under the integral by
boundedness).
\emph{(iii)} Gaussian:
$\partial_f\log p=r/\sigma_\text{obs}^2$,
$\partial_f^2\log p=-1/\sigma_\text{obs}^2$, so
$\kappa=\mathbb{E}[r^2\mid x]/\sigma_\text{obs}^4-1/\sigma_\text{obs}^2$.
Poisson, $\lambda=e^f$: $\partial_f\log p=y-\lambda$,
$\partial_f^2\log p=-\lambda$, so
$\kappa=\mathbb{E}[(y-\lambda)^2\mid x]-\lambda$, which at matched mean is
$\mathrm{Var}(y\mid x)-\lambda(x)$.
\end{proof}

The proposition is the first-order account. The higher-order terms in $v$
involve higher conditional moments of $p_\text{true}$ and higher
derivatives of the factor; for the Gaussian factor this dependence
vanishes---the channel is closed within the family
($\mathcal{N}(\sigma_\text{obs}^2)\mapsto
\mathcal{N}(\sigma_\text{obs}^2+v)$), so only conditional second moments
enter at any order---and the exact all-orders statement is
Theorem~\ref{thm:pred_evidence}. Proposition~\ref{prop:routing_selection}
prices the integrated channel at the corner: at the asymptotically
selected constant variance $c^\star=\mathbb{E}[s^2]$ the score reads
$\kappa(x)=\bigl(s^2(x)-\mathbb{E}[s^2]\bigr)/\mathbb{E}[s^2]^2$---the
channel activates exactly on the above-average-noise region---and the
Jensen gap $J$ forfeited by keeping it closed is first-order in the
routing while the selection criterion is second-order.

\subsection{Certified Quadrature for Non-Analytic Factors}
\label{app:general_consistency}

Proposition~\ref{prop:proper_score} holds for \emph{any} likelihood;
evaluating a non-analytic plate factor requires quadrature, and the
following bound certifies that the evaluated loss is the intended one.

\begin{lemma}[Gauss--Hermite error of the per-plate factor]
\label{lem:gh_error}
Let $Z=\int p(y\mid f)\,\mathcal{N}(f;\mu,v)\,\mathrm{d}f$ with $v>0$ and
let $\hat Z_m$ be its $m$-node Gauss--Hermite approximation under
$f=\mu+\sqrt{2v}\,t$ (positive weights, so $\hat Z_m>0$ whenever
$p(y\mid\cdot)>0$). If $p(y\mid\cdot)\in C^{2m}(\mathbb{R})$ with
$M_{2m}:=\sup_f\lvert\partial_f^{2m}p(y\mid f)\rvert<\infty$, then
\begin{equation}
  \bigl|Z-\hat Z_m\bigr|
  \;\le\; \frac{m!}{(2m)!}\,v^m M_{2m}
  \;\le\; \Bigl(\frac{v}{m}\Bigr)^{\!m} M_{2m},
  \label{eq:gh_error}
\end{equation}
and the loss error is the relative one,
$\lvert\log Z-\log\hat Z_m\rvert\le\lvert Z-\hat Z_m\rvert/\min(Z,\hat Z_m)$.
For the probit factor $p(y\mid f)=\Phi(yf/c)$ the constants collapse:
$\lvert Z-\hat Z_m\rvert\le(v/2c^2)^m$, geometric in $m$ whenever
$v<2c^2$.
\end{lemma}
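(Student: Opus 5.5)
The plan is to reduce the bound to the classical Gauss--Hermite remainder formula through the change of variables in the statement, and then handle the three add-ons separately: the cruder factorial bound, the log-loss transfer, and the probit constants.

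\emph{Step 1 (rescaling to the Hermite weight).} With $f=\mu+\sqrt{2v}\,t$ we have
\[
Z=\pi^{-1/2}\int e^{-t^2}g(t)\,\mathrm{d}t, \qquad g(t):=p(y\mid\mu+\sqrt{2v}\,t),
\]
and $\hat Z_m=\pi^{-1/2}\sum_i w_i\,g(t_i)$ with positive weights $w_i$. Positivity of the weights immediately gives $\hat Z_m>0$ whenever $p(y\mid\cdot)>0$.

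\emph{Step 2 (classical remainder).} I will invoke the standard Gauss--Hermite error formula for $g\in C^{2m}$:
\[
\int e^{-t^2}g-\sum_i w_i g(t_i)=\frac{m!\sqrt\pi}{2^m(2m)!}\,g^{(2m)}(\xi)
\]
for some $\xi$. If needed, this can be derived via Hermite interpolation at the nodes together with the orthogonality of $H_m$, whose leading coefficient is $2^m$ and which satisfies $\|H_m\|^2=2^m m!\sqrt\pi$. By the chain rule $g^{(2m)}=(2v)^m\,\partial_f^{2m}p$, so $|g^{(2m)}|\le (2v)^m M_{2m}$. Dividing by $\sqrt\pi$, the powers of two cancel and we obtain exactly $\tfrac{m!}{(2m)!}v^mM_{2m}$.

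\emph{Step 3 (the cruder bound).} Write $(2m)!/m!=\prod_{k=1}^m(m+k)\ge m^m$, which gives the bound $(v/m)^mM_{2m}$.

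\emph{Step 4 (log-loss transfer).} By the mean value theorem for $\log$ on the interval between $Z$ and $\hat Z_m$, $|\log Z-\log\hat Z_m|=|Z-\hat Z_m|/\zeta$ for some $\zeta$ in that interval. Since $\zeta\ge\min(Z,\hat Z_m)>0$, the relative bound follows.

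\emph{Step 5 (probit constants).} With labels $y=\pm1$ we have $\partial_f^{2m}\Phi(yf/c)=(y/c)^{2m}\phi^{(2m-1)}(yf/c)$, so $M_{2m}=c^{-2m}\sup_x|\phi^{(2m-1)}(x)|$. The claim $|Z-\hat Z_m|\le(v/2c^2)^m$ is therefore equivalent to
\[
\sup_x\bigl|\phi^{(2m-1)}(x)\bigr|\le\frac{(2m)!}{2^m m!}=(2m-1)!!.
\]
I expect this step to be the main obstacle, since it needs a uniform bound on the derivatives of the Gaussian density.

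My first attempt will use $\phi^{(n)}=(-1)^n\mathrm{He}_n\,\phi$ together with Cramér's inequality $|\mathrm{He}_n(x)|e^{-x^2/4}\le 1.086\sqrt{n!}$. This yields
\[
\sup_x|\phi^{(n)}(x)|\le\frac{1.086}{\sqrt{2\pi}}\sqrt{n!}\,\sup_x e^{-x^2/4}<0.44\sqrt{n!}.
\]
Take $n=2m-1$. Pairing factors term by term shows $\bigl((2m-1)!!\bigr)^2\ge(2m-1)!!\,(2m-2)!!=(2m-1)!$, because each odd factor dominates the even factor just below it. Hence $0.44\sqrt{(2m-1)!}\le(2m-1)!!$, which closes the bound.

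For the geometric rate, $(v/2c^2)^m$ decays geometrically in $m$ exactly when $v<2c^2$. If Cramér's constant proves awkward to cite, the fallback is a direct Fourier estimate: $|\phi^{(n)}(x)|\le\frac1{2\pi}\int|s|^ne^{-s^2/2}\,\mathrm{d}s$, which I would then compare against $(n)!!$-type moments.
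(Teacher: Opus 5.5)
Your proposal is correct and follows the paper's proof step for step. The common steps are the substitution $f=\mu+\sqrt{2v}\,t$, the classical Gauss--Hermite remainder with the chain-rule factor $(2v)^m$, the product bound $(2m)!/m!\ge m^m$, the mean-value theorem for $\log$, and Cram\'er's inequality applied to the Hermite representation of $\varphi^{(2m-1)}$. The only difference is the last piece of constant bookkeeping in the probit case. Your pairing argument $\sqrt{(2m-1)!}\le(2m-1)!!$ closes the bound directly with constant $0.44$, whereas the paper uses $\binom{2m}{m}\ge 4^m/(2m+1)$ and absorbs an extra $\sqrt{(2m+1)/(2m)}$ into $0.54$. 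So your Fourier fallback is never needed.
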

\begin{proof}
Substituting $f=\mu+\sqrt{2v}\,t$ gives
$Z=\pi^{-1/2}\!\int e^{-t^2}g(t)\,\mathrm{d}t$ with
$g(t)=p(y\mid\mu+\sqrt{2v}\,t)$. The classical $m$-node Gauss--Hermite
error for a $C^{2m}$ integrand is
$\frac{m!\sqrt{\pi}}{2^m(2m)!}\,g^{(2m)}(\xi)$ for some
$\xi\in\mathbb{R}$~\citep{DavisRabinowitz1984}; by the chain rule
$\sup\lvert g^{(2m)}\rvert=(2v)^m M_{2m}$, and $(2v)^m/2^m=v^m$. The
second inequality in~\eqref{eq:gh_error} is
$(2m)!/m!=\prod_{k=1}^m(m+k)\ge m^m$; the log statement is the mean-value
theorem. For probit,
$\partial_f^{2m}\Phi(yf/c)=c^{-2m}\varphi^{(2m-1)}(yf/c)$ with
$\varphi^{(k)}(x)=(-1)^k\mathrm{He}_k(x)\varphi(x)$, and Cram\'er's
inequality $\lvert\mathrm{He}_k(x)\rvert e^{-x^2/4}\le K\sqrt{k!}$,
$K<1.09$~\citep{AbramowitzStegun1964}, gives
$M_{2m}\le K\,c^{-2m}\sqrt{(2m-1)!}/\sqrt{2\pi}$. Combining
with $\binom{2m}{m}\ge 4^m/(2m{+}1)$,
$\frac{m!}{(2m)!}\sqrt{(2m-1)!}
 = \frac{m!}{\sqrt{(2m)!}\,\sqrt{2m}}
 \le \sqrt{\tfrac{2m+1}{2m}}\;2^{-m}$,
and $K\sqrt{3/2}/\sqrt{2\pi}<0.54$ absorbs every constant below one.
\end{proof}
At $m{=}32$ nodes the probit bound is geometric for forward-message
variances $v_n<2c^2$ (in practice the rule reproduces the probit
closed form to $10^{-5}$); for the Poisson log-link factor
$e^{yf-e^f}/y!$ every $f$-derivative is bounded
($\sup_f e^{kf-e^f}=(k/e)^k$), so the bound applies at every order
$m$.

%% file: figures/pred_evidence_toy.tex
\begin{figure}[t]
\centering
\begin{tikzpicture}
\begin{groupplot}[
  group style={group size=2 by 1, horizontal sep=0.9cm},
  width=0.485\linewidth, height=3.9cm,
  xmin=-3, xmax=3, ymin=-2.6, ymax=2.6,
  tick align=outside, tick pos=left,
  title style={font=\scriptsize}, label style={font=\scriptsize},
  tick label style={font=\tiny},
  every axis plot/.append style={line join=round},
]
\nextgroupplot[title={(a) closed routing (bound to posterior)}]
  \addplot[only marks, mark=*, mark size=0.4pt, black!45] table {figures/data/pe_scatter.dat};
  \addplot[name path=evhi, draw=none] table[x=x, y=hi_ev] {figures/data/pe_curve.dat};
  \addplot[name path=evlo, draw=none] table[x=x, y=lo_ev] {figures/data/pe_curve.dat};
  \addplot[red!60!black, opacity=0.18] fill between[of=evhi and evlo];
  \addplot[red!60!black, very thick] table[x=x, y=mu] {figures/data/pe_curve.dat};
  \addplot[black, densely dashed, thick] table[x=x, y=ftrue] {figures/data/pe_curve.dat};
  \node[anchor=south west, font=\tiny, red!55!black, align=left]
    at (axis cs:-2.9,-2.5) {band $=\sigma_0^2+\psi^\top\!\Sigma\psi$ (leverage):\\~constant, ignores the noise};
\nextgroupplot[title={(b) free routing (score-optimal)}, yticklabel=\empty]
  \addplot[only marks, mark=*, mark size=0.4pt, black!45] table {figures/data/pe_scatter.dat};
  \addplot[name path=prhi, draw=none] table[x=x, y=hi_pr] {figures/data/pe_curve.dat};
  \addplot[name path=prlo, draw=none] table[x=x, y=lo_pr] {figures/data/pe_curve.dat};
  \addplot[blue!65!black, opacity=0.18] fill between[of=prhi and prlo];
  \addplot[blue!65!black, very thick] table[x=x, y=mu] {figures/data/pe_curve.dat};
  \addplot[black, densely dashed, thick] table[x=x, y=ftrue] {figures/data/pe_curve.dat};
  \node[anchor=south west, font=\tiny, blue!55!black, align=left]
    at (axis cs:-2.9,-2.5) {band $=V^\star(x)=\mathbb{E}[r^2\mid x]$:\\tracks the heteroscedastic noise};
\end{groupplot}
\end{tikzpicture}
\caption{The two routes on a 1-D toy with input-dependent noise (dashed:
true mean; shaded: $\pm2$ predictive sd). \textbf{(a)}~Bound to the
posterior, the predictive band is the leverage variance---near-constant,
blind to the noise pattern. \textbf{(b)}~Free-routed, the band reaches the
score optimum $V^\star(x)=\mathbb{E}[r^2\mid x]$ of
Theorem~\ref{thm:pred_evidence} and tracks it.}
\label{fig:pred_evidence_toy}
\end{figure}

%% file: figures/cavity_graph.tex

\begin{figure}
\centering
\begin{tikzpicture}[
  wvar/.style   ={circle, draw=black, very thick, minimum size=3.4ex, font=\small},
  plate/.style  ={draw=black!70, rounded corners=2pt, minimum size=3.2ex, font=\footnotesize},
  scored/.style ={draw=blue!80, very thick, fill=blue!8, rounded corners=2pt, minimum size=3.2ex, font=\footnotesize},
  future/.style ={draw=black!25, text=black!35, rounded corners=2pt, minimum size=3.2ex, font=\footnotesize},
  inmsg/.style  ={->, >=stealth, thick, black!70},
  fwd/.style    ={->, >=stealth, thick, blue!80},
  nomsg/.style  ={->, >=stealth, dashed, black!25},
  ttl/.style    ={font=\footnotesize},
  sub/.style    ={font=\footnotesize, text=black!45},
]
\def\s{0.95}   

\begin{scope}[shift={(0,0)}]
  \node[wvar]   (w)  at (0,1.9)     {$w$};
  \node[plate]  (p1) at (-2*\s,0)   {$1$};
  \node[plate]  (p2) at (-1*\s,0)   {$2$};
  \node[scored] (p3) at ( 0,0)      {$n$};
  \node[future] (p4) at ( 1*\s,0)   {$4$};
  \node[future] (p5) at ( 2*\s,0)   {$5$};
  \draw[inmsg] (p1) -- (w);
  \draw[inmsg] (p2) -- (w);
  \draw[nomsg] (p4) -- (w);
  \draw[nomsg] (p5) -- (w);
  \draw[fwd]   (w)  -- (p3);
  \node[ttl] at (0,-0.95) {(a) sequential $q_{<n}$};
  \node[sub] at (0,-1.45) {plates $<n$ only};
\end{scope}

\begin{scope}[shift={(4.5,0)}]
  \node[wvar]   (w)  at (0,1.9)     {$w$};
  \node[plate]  (p1) at (-2*\s,0)   {$1$};
  \node[plate]  (p2) at (-1*\s,0)   {$2$};
  \node[scored] (p3) at ( 0,0)      {$n$};
  \node[plate]  (p4) at ( 1*\s,0)   {$4$};
  \node[plate]  (p5) at ( 2*\s,0)   {$5$};
  \draw[inmsg] (p1) -- (w);
  \draw[inmsg] (p2) -- (w);
  \draw[inmsg] (p4) -- (w);
  \draw[inmsg] (p5) -- (w);
  \draw[fwd]   (w)  -- (p3);
  \node[ttl] at (0,-0.95) {(b) leave-one-out $q_{(-n)}$};
  \node[sub] at (0,-1.45) {all plates but $n$};
\end{scope}

\begin{scope}[shift={(9,0)}]
  \node[wvar]   (w)  at (0,1.9)     {$w$};
  \node[plate]  (p1) at (-2*\s,0)   {$1$};
  \node[plate]  (p2) at (-1*\s,0)   {$2$};
  \node[scored] (p3) at ( 0,0)      {$n$};
  \node[plate]  (p4) at ( 1*\s,0)   {$4$};
  \node[plate]  (p5) at ( 2*\s,0)   {$5$};
  \draw[inmsg] (p1) -- (w);
  \draw[inmsg] (p2) -- (w);
  \draw[inmsg] (p4) -- (w);
  \draw[inmsg] (p5) -- (w);
  \draw[inmsg] (p3) -- (w);                 
  \draw[fwd]   (w)  to[bend left=32] (p3);  
  \node[ttl] at (0,-0.95) {(c) shared $q(w)$};
  \node[sub] at (0,-1.45) {all plates ($\beta_n\!\approx\!1$)};
\end{scope}

\end{tikzpicture}
\caption{%
  The cavity axis: how the belief on the shared last-layer weight $w$ that
  \emph{scores} plate $n$ is assembled from the other plates' messages
  $\beta_i(w)$ (black arrows into $w$); \textbf{blue} marks the scored plate and the
  forward message $\mathcal{N}(\mu^\top\!\psi_n,v_n)$ it receives.
  \textbf{(a)}~The \emph{sequential} cavity conditions on predecessors $1,\dots,n-1$
  only; it telescopes to the exact evidence $-\log Z$ but is order-dependent and not
  batchable.
  \textbf{(b)}~The \emph{leave-one-out} cavity excludes plate $n$'s own factor,
  giving the symmetric LOO predictive (GP LOO-CV).
  \textbf{(c)}~The \emph{shared} cavity reuses one posterior $q(w)$ for every
  plate---so plate $n$ feeds its own cavity (the reaction-free $\beta_n\!\approx\!1$
  limit, blue arrow into $w$). This is SCROLL's choice: batchable in a single pass,
  and $\approx$ (b) for $N\gg H$, but not the exact evidence of~(a).
}
\label{fig:cavity}
\end{figure}

%% file: sections/app_deterministic.tex
\subsection{Deterministic Subgraphs Drop Out of the Shared-Cavity Loss
(Proposition~\ref{rem:deterministic})}
\label{app:deterministic}

We derive the reduction of Proposition~\ref{rem:deterministic}, used
throughout Section~\ref{sec:method}: on a deterministic backbone the
intermediate factor terms $-\log Z_a$ in the
decomposition~\eqref{eq:fsc_def} vanish, leaving only the prior terms
$-\log Z_{w_l}$ over probabilistic weights and the observation terms
$-\log Z_n$.

\input{figures/factor_graph}

\begin{proof}[Proof of Proposition~\ref{rem:deterministic}]
\emph{Deterministic layer factor.}
In the feedforward factor graph (Section~\ref{sec:bethe_opt}) each intermediate
factor $a$ is an activation transition $p(h_{n,l}\mid h_{n,l-1},w_l)$.
A \emph{deterministic} layer with point-mass weight belief
$q(w_l)=\delta(w_l-w_l^\star)$ has
\begin{equation}
  f_a(h_{n,l-1},h_{n,l})
   = \delta\!\big(h_{n,l}-\varphi_l(h_{n,l-1})\big),
  \qquad
  \varphi_l(\cdot) := \phi\big(w_l^\star\,\cdot\big),
  \label{eq:det_factor}
\end{equation}
where $\phi$ is the fixed elementwise nonlinearity and the affine map uses the
point-mass weight $w_l^\star$.

\emph{Local partition function.}
The Bethe contribution of factor $a$ is $-\log Z_a$ with $Z_a=\int b_a$, the
normaliser of the factor belief $b_a=f_a\prod_{i\in a}n_{i\to a}$ formed from
the incoming variable beliefs (the shared-cavity messages,
Section~\ref{sec:bethe_opt}).
With the input $x_n$ conditioned and all upstream layers deterministic, the
incoming belief at $h_{n,l-1}$ is the point mass
$b_{l-1}=\delta(h_{n,l-1}-\hat h_{n,l-1})$ at the forward-propagated value.
Local consistency forces the outgoing belief to be its pushforward,
$b_l=\delta(h_{n,l}-\hat h_{n,l})$ with $\hat h_{n,l}=\varphi_l(\hat h_{n,l-1})$,
and the factor belief lives on the constraint graph,
\begin{equation}
  b_a(h_{n,l-1},h_{n,l})
   = b_{l-1}(h_{n,l-1})\,\delta\!\big(h_{n,l}-\varphi_l(h_{n,l-1})\big).
\end{equation}
Integrating out both arguments, the $\delta$ contributes unity over $h_{n,l}$
and $b_{l-1}$ is normalised:
\begin{equation}
  Z_a=\!\int\!\!\int b_a\,\mathrm{d}h_{n,l-1}\,\mathrm{d}h_{n,l}
     =\!\int b_{l-1}(h_{n,l-1})\,\mathrm{d}h_{n,l-1}=1,
  \qquad\Longrightarrow\qquad
  -\log Z_a=0 .
\end{equation}
The same holds at every deterministic layer and plate, so
$\sum_a(-\log Z_a)=0$ on the consistent forward pass.

\emph{Soft-factor limit.}
Equivalently, soften~\eqref{eq:det_factor} to
$f_a^\tau=\mathcal{N}\!\big(h_{n,l};\varphi_l(h_{n,l-1}),\tau^2 I\big)$.
The forward message is then $\mathcal{N}(h_{n,l};\hat h_{n,l},\tau^2 I)$, and the
$\tau$-dependent normaliser of the factor belief is matched by the entropy of
the activation belief it induces; the two cancel in $F_\text{Bethe}$, and the
deterministic limit $\tau\to 0$ leaves no residual term---the standard
cancellation of deterministic relations in the Bethe / tree
reparameterisation~\citep{Yedidia2001}.

\emph{Consequence.}
Only $\sum_l(-\log Z_{w_l})$ (probabilistic weights) and $\sum_n(-\log Z_n)$
(observations) survive in~\eqref{eq:fsc_def}.
The deterministic parameters $\theta=\{w_l^\star\}$ enter solely through the
forward features $\psi_n=\hat h_{n,L-1}$ inside $Z_n$, and are therefore trained
by ordinary backpropagation through the surviving terms---exactly the reduction
instantiated under the last-layer Gaussian posterior in
Section~\ref{sec:method}.
\end{proof}

%% file: figures/factor_graph.tex

\begin{figure}[t]
\centering
\resizebox{0.98\linewidth}{!}{%
\begin{tikzpicture}[
  var/.style={circle, draw=black, very thick, minimum size=3ex},
  fac/.style={rectangle, draw=black, fill=black, minimum size=2ex, inner sep=0pt},
  font=\small
]


\node[fac] (prior)  at ( 0.0, 0) {};
\node[var] (w)      at ( 2.0, 0) {$w$};
\node[fac] (det)    at ( 4.3, 0) {};
\node[var] (fn)     at ( 6.8, 0) {$f_n$};
\node[fac] (lik)    at ( 9.3, 0) {};
\node[var] (yn)     at (10.9, 0) {$y_n$};
\node[fac] (delta)  at (12.5, 0) {};

\draw[very thick] (prior)--(w)--(det)--(fn)--(lik)--(yn)--(delta);


\node[above=10pt of prior, align=center]
  {$p(w)$\\[-1pt]\footnotesize$\mathcal{N}(0,\alpha^{-1}I)$};
\node[above=10pt of det, align=center]
  {$f_n = w^\top\!\psi_n$};
\node[above=10pt of lik, align=center]
  {$p(y_n|f_n)$};
\node[above=10pt of delta, align=center]
  {$\delta(y\!-\!y_n)$};


\draw[dashed, gray!55, rounded corners=5pt, thick]
  (3.2, -1.4) rectangle (13.3, 1.6);
\node[anchor=south east, font=\footnotesize, gray!55] at (13.3, -2)
  {$n = 1,\ldots,N$};


\node[font=\footnotesize] (xn) at (4.3, -1.2) {$x_n$};
\draw[->, dashed, thick, >=stealth] (xn)
  -- node[right=2pt, font=\footnotesize] {$\psi_n\!=\!NN(x_n)$}
  (det.south);


\node[above=7pt, font=\scriptsize, blue!85]
  at ($(det.east)!0.5!(fn.west)$)
  {$\longrightarrow\;\mathcal{N}(\mu^\top\!\psi_n,\,v_n)$};

\node[below=9pt, font=\scriptsize, red!85]
  at ($(fn.east)!0.5!(lik.west)$)
  {$\propto\,p(y_n|f_n)\;\longleftarrow$};

\node[below=9pt, font=\scriptsize, black]
  at ($(yn.east)!0.5!(delta.west)$)
  {$\delta(y_n\!-\!y_n^{\rm obs})\;\longleftarrow$};

\node[above=7pt, font=\scriptsize, black!75]
  at ($(w.east)!0.35!(det.west)$)
  {$\longrightarrow\;q(w)$};

\node[below=7pt, font=\scriptsize, black!75]
  at ($(w.east)!0.35!(det.west)$)
  {$\beta_n(w)\;\longleftarrow$};

\end{tikzpicture}%
}

\caption{%
  Factor graph underlying the shared-cavity objective.
  Circles: variable nodes; filled squares: factor nodes; plate: $n=1,\ldots,N$.
  The weight belief $q(w)=\mathcal{N}(\mu,\Sigma)$ is shaped by the
  prior $p(w)=\mathcal{N}(0,\alpha^{-1}I)$ and backward messages
  $\beta_n(w)$ from all $N$ plates, and projects to forward messages
  $\mathcal{N}(\mu^\top\!\psi_n,\,v_n)$ at each $f_n$ (\textbf{blue}),
  with $v_n=\psi_n^\top\!\Sigma\,\psi_n$.
  The likelihood $p(y_n|f_n)$ is probit (classification) or Gaussian
  (regression).
  The graph is a tree, and the plate integral
  $Z_n=\int\!\mathcal{N}(f_n;\mu^\top\!\psi_n,v_n)\,p(y_n|f_n)\,\mathrm{d}f_n$
  is closed-form for both likelihoods: the Gaussian
  convolution~\eqref{eq:gauss_conv} (regression) and its probit
  analogue~\eqref{eq:probit_conv} (classification).
}
\label{fig:factor_graph}
\end{figure}

%% file: tables/appendix_loo_gap.tex
\begin{table}[h]
\centering\small
\caption{Lemma~\ref{lem:shared_loo} instantiated on learned features
(10 seeds $\times$ 4 backbones): exact conjugate posterior at the corner's own evidence
hyperparameters on the shipped backbone's features; per-plate shared-vs-LOO
gap~\eqref{eq:shared_loo_gap} evaluated in closed form.
$d_\text{eff}$ and gap are means over runs; $h_{\max}$ is the worst
plate across runs. The gap tracks the $O(H/N)$ rate across three orders of
magnitude (last column an $O(1)$ constant; naval is the near-noiseless
regime).}
\label{tab:loo_gap}
\begin{tabular}{lrrrrrr}
\toprule
 & $N_\text{train}$ & $H/N$ & $d_\text{eff}$ & $h_{\max}$ &
gap/plate (nats) & gap$/(H/N)$ \\
\midrule
yacht & 184 & 0.272 & 27.7 & 0.95 & 0.2265\,{\scriptsize$\pm$0.1577} & 0.83 \\
boston & 303 & 0.165 & 32.0 & 0.72 & 0.1477\,{\scriptsize$\pm$0.0271} & 0.90 \\
energy & 460 & 0.109 & 41.4 & 0.89 & 0.1049\,{\scriptsize$\pm$0.0130} & 0.96 \\
concrete & 618 & 0.081 & 42.5 & 0.43 & 0.0726\,{\scriptsize$\pm$0.0109} & 0.90 \\
wine & 959 & 0.052 & 25.2 & 0.36 & 0.0288\,{\scriptsize$\pm$0.0067} & 0.55 \\
kin8nm & 4914 & 0.010 & 35.4 & 0.16 & 0.0069\,{\scriptsize$\pm$0.0022} & 0.68 \\
power & 5740 & 0.009 & 36.2 & 0.08 & 0.0062\,{\scriptsize$\pm$0.0011} & 0.71 \\
naval & 7160 & 0.007 & 34.9 & 0.41 & 0.0052\,{\scriptsize$\pm$0.0043} & 0.74 \\
\bottomrule
\end{tabular}
\end{table}

%% file: sections/app_design_instances.tex
\section{The Design Space: Instances and Consequences}
\label{app:design_consequences}

The proofs of Appendix~\ref{app:pred_evidence} characterise the routing
gap analytically; this section instantiates it on the design-space map of
Figure~\ref{fig:lead_map}a: a reading of the two axes and their coupling,
a controlled separation of routing from hyperparameter selection, and the
depth-zero (linear) instance, where both routes are classical estimators.
The empirical continuations of the map---the depth trajectory and the
routing--OOD duality---are benchmark studies and live with the experiment
appendices (Appendices~\ref{app:depth_trajectory} and~\ref{app:ood}).

\subsection{Reading the Last-Layer Design Space}
\label{app:design_space}

Figure~\ref{fig:lead_map}a maps the cavity $\times$ routing cells of
Section~\ref{sec:relaxations} for the Gaussian last layer, each cell a named
estimator. The axes are \emph{not} symmetric. \emph{Routing} sets predictive
quality: closed binds $\Sigma$ to the conjugate posterior, free reaches
the family's score optimum, and the gap is the expressible residual
heteroscedasticity (Theorem~\ref{thm:pred_evidence}). \emph{Cavity} sets
batchability, and the hyperparameter-selection criterion it governs (evidence versus
leave-one-out) is \emph{second-order} in the data-rich regime $N\gg H$ where
last-layer models operate (Appendix~\ref{app:routing_selection}). The axes
couple: the sequential cavity is defined by per-prefix beliefs, so
free-routing it would need an amortised prefix-to-belief map rather than a
single shared belief---the (sequential, free) cell is unoccupied, left to
future work (Section~\ref{sec:conclusion}). Only the shared cavity therefore
makes free routing at once batchable and distinct from the exact corner. The
covariance family (V1--V3) is an orthogonal expressiveness/cost knob crossing
every cell. The free column has a classical anchor of its own: at depth zero
the (shared, free) cell is random-coefficient
ML~\citep{HildrethHouck1968}---like the closed column's neural-linear corner,
a known estimator---so what is new in this cell is its composition with a
trained backbone and with likelihoods beyond the Gaussian
(Appendix~\ref{app:linear_routing}). Finally, these identifications are a property of the conjugate
last layer, not of the framework: off this corner---several stochastic layers,
a non-conjugate likelihood---the closed column loses its analytic fixed
point and the sequential row its tree, and the (shared, free) cell is the
one that remains directly tractable as an objective
(Section~\ref{sec:relaxations}).

\subsection{Separating Routing from Selection}
\label{app:routing_selection}

SCROLL differs from the references it is benchmarked against on two axes
at once: the covariance it \emph{routes} (free versus bound to the
posterior) and the criterion that \emph{selects} its hyperparameters
(learned in the pass versus the references' cross-validated $\lambda$). A
benchmark win therefore has two candidate sources, and
Section~\ref{sec:experiments}(ii) attributes it to the routing. The
benchmarks cannot fully ground that attribution---there, hyperparameters
are validation-selected and the two axes move together. This subsection
grounds it in a controlled conjugate model where both selection criteria
are available in closed form and each axis can be moved alone.

\paragraph{Setup.}
Data are drawn from the heteroscedastic linear model
\begin{equation}
  y = \beta^\top x + s(x)\,\varepsilon, \qquad
  s(x) = 0.3 + h\,\lvert x_1\rvert, \qquad
  x\sim\mathcal{N}(0,I_H),\quad \varepsilon\sim\mathcal{N}(0,1),
  \label{eq:rsel_model}
\end{equation}
with $H=8$, $\beta\sim\mathcal{N}(0,I)$ redrawn per repetition, and $h\ge0$
the heteroscedasticity knob; the features are the identity $\psi(x)=x$
(fixed, no backbone) and the prior is scalar isotropic,
$w\sim\mathcal{N}(0,\alpha^{-1}I)$. For hyperparameters
$\xi=(\alpha,\sigma_\text{obs}^2)$ let $m_\xi$ denote the closed-cell
predictive: conjugate posterior mean $\mu_\xi$ and leverage variance
$V_\xi(x)=\sigma_\text{obs}^2+\psi(x)^\top\Sigma_\xi\,\psi(x)$. Two criteria
select $\xi$ on the training set, both exact in this conjugate setting:
$\xi_\text{ev}$ maximises the evidence (type-II maximum likelihood) and
$\xi_\text{loo}$ minimises the closed-form leave-one-out NLL. The two gaps
are
\begin{align}
  \Delta_\text{sel}
   &= \mathbb{E}\big[-\log m_{\xi_\text{ev}}(y\mid x)\big]
    - \mathbb{E}\big[-\log m_{\xi_\text{loo}}(y\mid x)\big],
  \label{eq:sel_gap}\\
  \Delta_\text{rout}
   &= \mathbb{E}\big[-\log\mathcal{N}\big(y;f(x),V_{\xi_\text{ev}}(x)\big)\big]
    - \mathbb{E}\big[-\log\mathcal{N}\big(y;f(x),V^\star(x)\big)\big],
  \qquad f(x)=\mu_{\xi_\text{ev}}^\top\psi(x),
  \label{eq:rout_gap}
\end{align}
both estimated on held-out test data. $\Delta_\text{sel}$ moves only the
selection criterion at a fixed (closed) covariance family---the
hyperparameter-selection axis; $\Delta_\text{rout}$ moves only the variance
profile at a fixed mean---from the closed leverage $V_{\xi_\text{ev}}$ to
the score optimum $V^\star(x)=\mathbb{E}[(y-f(x))^2\mid x]$ of
Theorem~\ref{thm:pred_evidence}, the population target of the free route.
The separation is not merely empirical---the two gaps have different
asymptotic orders, the pointwise driver of the routing limit being the
overdispersion score of Proposition~\ref{prop:overdispersion}:

\begin{proposition}[Selection is second-order; routing is first-order]
\label{prop:routing_selection}
In the model~\eqref{eq:rsel_model} with fixed $H$ and $N\to\infty$:
\emph{(i)} the evidence is the prequential score,
$-\log Z(\xi)=\sum_n-\log p(y_n\mid y_{<n};\xi)$
(Proposition~\ref{prop:seq_evidence}), and the normalised evidence and
leave-one-out criteria converge to the \emph{same} population functional
$L(\xi)=\mathbb{E}\big[-\log\mathcal{N}(y;\beta_\xi^\top x,
\sigma_\text{obs}^2)\big]$, $\beta_\xi$ the ridge limit of the posterior
mean; under standard M-estimation regularity the two selected $\hat\xi$
converge to the same $\xi^\star$ and $\Delta_\text{sel}\to0$.
\emph{(ii)} Every closed-cell variance profile collapses to a constant
($v_\xi(x)=O_P(H/N)$), so
$\Delta_\text{rout}\to
J:=\tfrac12\big(\log\mathbb{E}[s^2]-\mathbb{E}[\log s^2]\big)$,
the Jensen gap of the noise profile, strictly positive unless $s$ is
almost-surely constant.
\end{proposition}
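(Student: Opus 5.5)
Part (i) comes first. The prequential identity is Proposition~\ref{prop:seq_evidence} verbatim: the chain rule gives $-\log Z(\xi)=\sum_n-\log p(y_n\mid y_{<n};\xi)$, each factor the conjugate Gaussian predictive against the running posterior. The plan is to show that both normalised criteria share one pointwise limit.

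For the evidence, I would write the $n$-th prequential term as $d(r_n^{<},V_n^{\mathrm{seq}})$ with $d(r,V)=\tfrac{r^2}{2V}+\tfrac12\log(2\pi V)$. Standard ridge consistency (fixed $H$, $\mathbb{E}[xx^\top]=I$, bounded fourth moments of $s(x)\varepsilon$) gives $\mu_{<n}\to\beta_\xi$ a.s. It also gives $\psi_n^\top\Sigma_{<n}\psi_n=O_P(\|x\|^2/n)$, so $V_n^{\mathrm{seq}}\to\sigma_\text{obs}^2$. A Cesàro/strong-law argument, uniform over $\xi$ in compacts, then gives $-\tfrac1N\log Z(\xi)\to L(\xi)$.

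For the leave-one-out criterion, I would use the downdate identities from the proof of Lemma~\ref{lem:shared_loo}: LOO residual $r_n/(1-h_n)$ and variance $\sigma_\text{obs}^2/(1-h_n)$, with $\max_n h_n=O_P(\log N/N)$ under Gaussian design. The normalised LOO NLL therefore differs from the in-sample plug-in $\tfrac1N\sum_n d(r_n,\sigma_\text{obs}^2)$ by $o_P(1)$, uniformly on compacts. The ULLN gives the same limit $L(\xi)$.

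Identical limiting criteria plus standard argmin-consistency (a well-separated minimiser $\xi^\star$ of $L$, which holds since $L$ is strictly convex in $\sigma_\text{obs}^2$ at fixed mean and smooth in $\alpha$) yield $\hat\xi_\text{ev},\hat\xi_\text{loo}\to\xi^\star$. Continuity of $\xi\mapsto\mathbb{E}[-\log m_\xi]$ then gives $\Delta_\text{sel}\to0$.

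I expect the main obstacle to be uniformity in $\alpha$. As $N\to\infty$ the prior's effect on $\beta_\xi$ vanishes at rate $\alpha/N$, so $L$ is asymptotically flat in $\alpha$ and $\xi^\star$ is only identified in its $\sigma_\text{obs}^2$ component. I would therefore first try to state the convergence for $\hat\sigma^2_\text{obs}$ and for the predictive $m_{\hat\xi}$ itself, which is all $\Delta_\text{sel}$ needs. The required claim then becomes that any $\alpha$ in a bounded set yields the same limiting predictive. Both estimators converge to $\beta$ and $\mathbb{E}[s^2]$ regardless of $\alpha$, which is what makes the "second-order" reading honest. Unbounded $\alpha$ would be excluded by restricting to a compact hyperparameter set, as the "standard M-estimation regularity" clause of the statement allows.

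For (ii), I would reuse the final paragraph of the proof of Theorem~\ref{thm:pred_evidence}. With $\Sigma_\xi=(X^\top X/\sigma_\text{obs}^2+\alpha I)^{-1}$, the leverage term is $v_\xi(x)\le\sigma_\text{obs}^2\|x\|^2/\lambda_{\min}(X^\top X)=O_P(H/N)$, so $V_{\hat\xi_\text{ev}}(x)\to c^\star=\hat\sigma^2_\text{obs}\to\mathbb{E}[s^2]$. This uses part (i) and the fact that $f\to\beta^\top x$, so the residual variance at $x$ is $s(x)^2$. Dominated convergence ($s\ge0.3$ bounds the variances below) gives $\Delta_\text{rout}\to\tfrac12\mathbb{E}[\log\mathbb{E}s^2+s^2/\mathbb{E}s^2-\log s^2-1]=J$. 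Strict positivity is strict Jensen for $\log$, which is attained iff $s^2$ is a.s.\ constant, i.e.\ $h=0$.
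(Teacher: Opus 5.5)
Your proposal is correct and follows the paper's skeleton. For the evidence you use the chain rule plus Ces\`aro averaging of prefix predictives. You then show the leave-one-out criterion has the same limit $L(\xi)$, apply argmin consistency, and for (ii) use the collapse $v_\xi(x)=O_P(H/N)$ followed by the Jensen computation.

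Where you depart from the paper, you are more careful. For the LOO criterion the paper only notes that each LOO predictive conditions on $N-1$ plates. Your downdate identities (residual $r_n/(1-h_n)$, variance $\sigma_\text{obs}^2/(1-h_n)$), together with $\max_n h_n=O_P(\log N/N)$, supply the uniformity over $n$ that the averaged criterion actually needs.

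The more substantive difference is at the end of (i). The paper closes it by invoking a unique minimiser $\xi^\star$ of $L$. But $\beta_\xi=\beta$ for every $\alpha$, so $L$ does not depend on $\alpha$ and that hypothesis cannot hold. The selected $\alpha$'s are governed by lower-order terms that the two criteria need not share. Your fallback shows $\hat\sigma^2_\text{obs}\to\mathbb{E}[s^2]$ under both criteria, and that $m_\xi\to\mathcal{N}(\beta^\top x,\sigma^2_\text{obs})$ uniformly over $\alpha$ in a compact set. That is what actually yields $\Delta_\text{sel}\to0$. The price is that you no longer assert the two $\hat\alpha$ share a limit, which is the right concession.

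Two small repairs. First, drop your earlier sentence claiming a well-separated minimiser ``since $L$ is \ldots smooth in $\alpha$''; your next paragraph contradicts it. Second, $L$ is not convex in $\sigma^2_\text{obs}$ globally, since its second derivative changes sign at $2\mathbb{E}[s^2]$. It is strictly convex in $\log\sigma^2_\text{obs}$, or simply unimodal with a unique minimiser at $\mathbb{E}[s^2]$, and that is all argmin consistency requires.
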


\begin{proof}
\emph{(i)} The prequential identity is the chain rule of the marginal
likelihood (Proposition~\ref{prop:seq_evidence}). Fix $\xi$. With $H$
fixed and $\alpha>0$ the posterior mean converges to the ridge limit
$\beta_\xi=\mathbb{E}[xx^\top]^{-1}\mathbb{E}[xy]$ ($=\beta$ here, the
mean being well-specified) and the posterior covariance is $O_P(H/N)$, so
the predictive conditioned on any growing subset of the plates converges
to $\mathcal{N}(\beta_\xi^\top x,\sigma_\text{obs}^2)$. Each
leave-one-out predictive conditions on $N{-}1$ plates; the prefix
predictives condition on $n{-}1$, and their Ces\`aro average has the same
limit (the short-prefix terms are finitely many, hence $o(N)$ in total).
Both normalised criteria therefore converge to $L(\xi)$; with uniform
convergence over a compact hyperparameter set and a unique minimiser
$\xi^\star$ of $L$, both selections converge to $\xi^\star$, and
$\Delta_\text{sel}\to0$ by continuity of the test NLL in $\xi$.
\emph{(ii)} $v_\xi(x)=x^\top(\Psi^\top\Psi/\sigma_\text{obs}^2
+\alpha I)^{-1}x=O_P(H/N)$ by the law of large numbers for
$\Psi^\top\Psi/N$, so the closed profile converges to the constant
$\sigma_\text{obs}^2$, and the selected constant, by \emph{(i)}, to the
best one, $c^\star=\arg\min_c\tfrac12\big(\log c
+\mathbb{E}[s^2]/c\big)=\mathbb{E}[s^2]$. The population excess of the
constant-$c^\star$ predictive over $V^\star=s^2$ at the common mean is
$\tfrac12\log\mathbb{E}[s^2]-\tfrac12\mathbb{E}[\log s^2]=J$, strictly
positive by Jensen unless $s^2$ is a.s.\ constant.
\end{proof}

\paragraph{Protocol.}
Panel~(a) of Figure~\ref{fig:pred_evidence_tradeoff} probes the data-rich
regime: $N_\text{tr}=400$ ($N/H=50$), $N_\text{te}=4000$, $h$ swept over
$[0,3]$ at seven values, eight repetitions. Panel~(b) fixes $h=1$ and sweeps
the data richness $N/H$ from $1.5$ to $50$ over $20$ repetitions, reporting
the RMS magnitude of $\Delta_\text{sel}$ (its mean is ${\approx}0$
throughout: neither criterion is systematically better).

\paragraph{Findings.}
The experiment confirms both limits and locates their finite-$N$ boundary.
At $N/H=50$ the gap lives in the routing: $\Delta_\text{rout}$ rides the
$y=J$ identity of Proposition~\ref{prop:routing_selection}(ii) (slope
$0.91$, $\rho=0.998$) and vanishes in the homoscedastic limit, while
$\Delta_\text{sel}$ is below $10^{-3}$
nats---Proposition~\ref{prop:routing_selection}(i) already effective at
this data richness. We emphasise what this does and does not say. It does
\emph{not} say the prior is unimportant: $\alpha$ sets $\Sigma$ and matters
throughout. It says that, \emph{given enough data per last-layer feature},
which criterion selects the prior is second-order relative to the routing,
so the predictive advantage is the covariance family the free route reaches,
not the selection criterion. The proposition is asymptotic, and
panel~(b) locates its boundary: the RMS selection gap grows from
$4\times10^{-4}$ nats at $N/H=50$ to ${\approx}0.19$ nats at
$N/H\!\to\!1.5$---when the last layer is not data-rich the two criteria can
disagree substantially per dataset. The claim is therefore explicitly the
$N\gg H$ regime, which is where last-layer Bayesian models operate; a
scalar isotropic prior with fixed features is assumed, and we do not extend
the selection claim to per-feature (ARD) precisions or to jointly optimised
feature maps. The routing gap, by contrast, is a misspecification effect
that persists at every $N$ under heteroscedastic noise and closes only in
the homoscedastic (well-specified) limit, where the two routes, the
evidence, and the score optimum coincide.

\begin{figure}[t]
  \centering
  \includegraphics[width=0.92\textwidth]{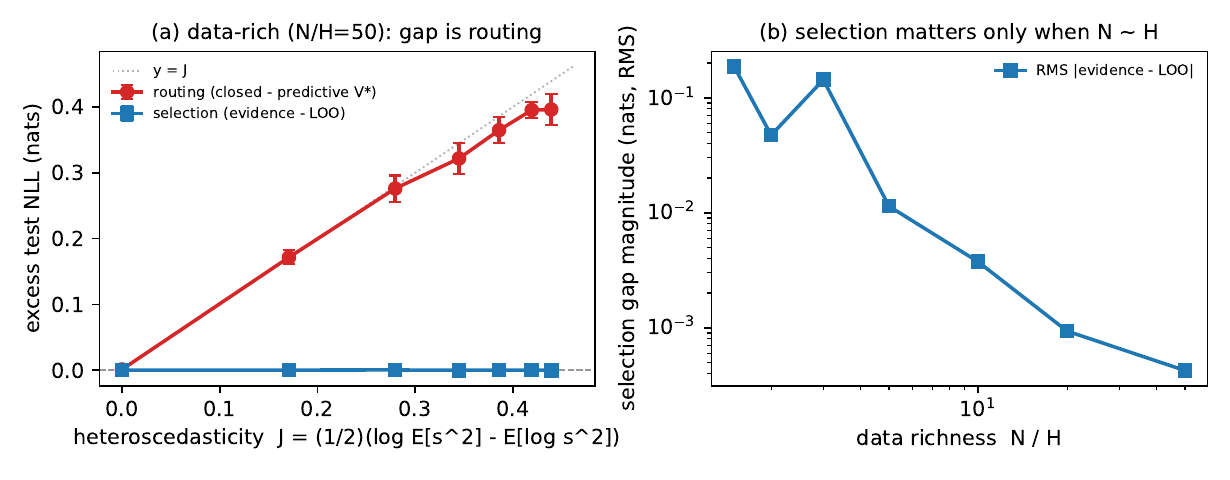}
  \caption{The score--evidence gap is \emph{routing}, not \emph{selection}---in
    the data-rich regime. \textbf{(a)} At $N/H=50$, excess test NLL versus
    heteroscedasticity $J=\tfrac12(\log\mathbb{E}[s^2]-\mathbb{E}[\log s^2])$: the
    routing gap $\Delta_\text{rout}$~\eqref{eq:rout_gap} tracks the
    $y=J$ identity and vanishes at $J=0$, while the selection gap
    $\Delta_\text{sel}$~\eqref{eq:sel_gap} is flat at zero.
    \textbf{(b)} The selection gap is second-order only when data-rich: its RMS
    magnitude falls $\sim$450$\times$ from $N/H\approx1.5$ to $N/H=50$ (mean
    $\approx0$ throughout; neither criterion systematically wins).}
  \label{fig:pred_evidence_tradeoff}
\end{figure}

\subsection{Routing at Depth Zero: The Linear Model}
\label{app:linear_routing}

Definition~\ref{def:routing} does not require a backbone. With identity
features $\psi(x)=x$ the free route minimises the shared-cavity
loss~\eqref{eq:Lreg} directly over
$(\mu,\Sigma,\sigma_\text{obs},\alpha)$---no gradient-descent pipeline
needed; any smooth optimiser (quasi-Newton in our implementation)
suffices---while the closed route is conjugate empirical Bayes:
evidence-optimised $(\alpha,\sigma_\text{obs})$ with the leverage
predictive.

\paragraph{The depth-zero free cell is a classical estimator.}
The free-route data term at $\psi(x)=x$ is
$\sum_n-\log\mathcal{N}\!\big(y_n;\,\mu^\top x_n,\;\sigma_\text{obs}^2
 +x_n^\top\Sigma\,x_n\big)$---exactly the marginal likelihood of the
\emph{random-coefficients} linear model of
\citet{HildrethHouck1968}: $y_n=x_n^\top\beta_n$ with
$\beta_n\sim(\mu,\Sigma)$ drawn per observation ($\Sigma$ diagonal in their
formulation, our V2; full for V3). The duality this rests on---quadratic-form
heteroscedasticity $\equiv$ random coefficient variation---is classical
\citep{BreuschPagan1979}. So, just as the closed corner recovers
neural-linear/GP-evidence (Proposition~\ref{prop:seq_evidence}), the depth-zero free cell
recovers random-coefficient ML; what the Bethe construction adds at this
depth is the $-\log Z_w$ prior term---the prior learned in the same pass,
absent from the classical estimator---and what is genuinely new lies off this
corner: the general-likelihood plates (below) and the backbone. The
degeneracies of Corollary~\ref{cor:pathologies} also echo classical
findings: global ML for random-coefficient models is known to be
ill-behaved, consistently with the near-noiseless unboundedness of the free
route.

\paragraph{The routing gap at depth zero.}
With fixed features the gap of Theorem~\ref{thm:pred_evidence} reduces
to the component of $V^\star$ representable in
$\{\sigma^2+\sum_d v_d x_d^2 : v\succeq0\}$---non-negative quadratics in the
inputs. Two consequences, verified on synthetic linear data ($N{=}4000$,
$D{=}5$, diagonal belief):
\emph{heteroscedastic noise} ($\mathrm{sd}(x)=0.3+1.5\lvert x_1\rvert$): the
free route concentrates the belief variance on the noisy coordinate
($v=[3.32,\,0.02,\,{\sim}10^{-4},\dots]$), drives $\sigma_\text{obs}^2$ to
the residual floor, and reaches test NLL $1.649$ against $1.970$ for the
closed route---within $0.01$ nats of the oracle $V^\star(x)$ predictive at
the same mean, with $\mathrm{corr}(V_\text{free},V^\star)=0.999$: the
mechanism of Figure~\ref{fig:pred_evidence_toy} with no hidden layer.
\emph{Homoscedastic noise}: the two routes coincide to four decimals (test
NLL $1.4154$ for both), the free covariance collapsing benignly to
$\Sigma\approx0$ with $\sigma_\text{obs}^2$ carrying $V^\star$---exactly
Corollary~\ref{cor:pathologies}(i). A linear-Gaussian benchmark with
near-homoscedastic residuals therefore cannot separate the routes; what a
trainable backbone adds is not the route but the \emph{span}---features
whose squares can represent an arbitrary $V^\star(x)$, co-adapted with the
mean.

\paragraph{On the benchmark.}
The \emph{lin} columns of Table~\ref{tab:depth_trajectory}
(Appendix~\ref{app:depth_trajectory}) run this depth-zero head on the eight
UCI datasets: a heteroscedastic family (Diag or Full) is the best depth-zero
model on seven of eight---the routing gain is visible on real data with no
backbone at all---the exception being near-noiseless naval, where the
homoscedastic None wins, exactly as Corollary~\ref{cor:pathologies}(i)
predicts.

For non-conjugate likelihoods the asymmetry is absolute at every depth,
zero included: the binding map has no closed form
(Definition~\ref{def:routing}), so closed routing must be implemented by
approximate inference and the free route is the only directly tractable
one.

%% file: sections/app_setup.tex
\section{Experimental Setup}
\label{app:setup}

\paragraph{Architecture and optimisation.}
All neural methods share a deterministic backbone of one hidden layer
($H=50$ units, no bias); two-layer variants stack a second hidden layer of the
same width. Each method is run under four backbone architectures---$\{$relu,
relu+LayerNorm, tanh, tanh+LayerNorm$\}$---from which the validation protocol
selects (below). Training uses full-batch Adam (learning rate $0.03$) for up to
$10{,}000$ gradient steps with early stopping on the method's validation
objective (patience $50$; see \emph{Validation criterion} below); the
lowest-validation-loss checkpoint is used for evaluation. All computations are
in 32-bit precision, with an $\ell_2$ penalty of $0.01$ on the backbone weights.
For SCROLL, the prior precision $\alpha$, observation noise $\sigma_\text{obs}^2$,
covariance $\Sigma$, and backbone are optimised jointly (no outer
$\lambda$ loop); reference
methods use a last-layer regularisation weight $\lambda_\text{ll}$ in its place.
A small constant $\varepsilon I$ ($\varepsilon=10^{-4}$, fixed across all
experiments) is added to $\Sigma$ in V2 and V3 for numerical
stability; it also floors $v_n$ against the noise collapse of
Corollary~\ref{cor:pathologies}(i).
Fit accounting, per dataset and seed: a SCROLL variant trains four
networks (one per backbone); a $\lambda$-tuned reference trains sixteen
(four backbones $\times$ the four-point $\lambda$ grid
$\{10^{-4},10^{-3},10^{-2},10^{-1}\}$); multi-pass methods multiply
their tier's count by the ensemble size.
On the large-scale tabular and deep-feature datasets the backbone widens to
$H{=}100$ and the shared-cavity loss is mini-batched (batch size $1024$, up to
$3{,}000$ gradient steps---$9{,}000$ for Year); all other settings (the four
architectures, validation selection, 32-bit precision, $\ell_2$ on the backbone)
are unchanged.

\paragraph{The representation gauge, per configuration.}
\label{app:gauge}
\begin{remark}[Representation gauge]
\label{rem:gauge}
For any $c>0$ the rescaling
$(\psi,\mu,\Sigma,\alpha)\mapsto(c\psi,\mu/c,\Sigma/c^2,c^2\alpha)$
leaves every plate term $-\log Z_n$ of~\eqref{eq:Lscroll}---hence every
prediction---invariant, while $-\log Z_w$ falls by $H\log c$: without a
scale-fixing mechanism the objective would prefer $c\to\infty$, and the
individually learned $(\mu,\Sigma,\alpha)$ are meaningful only relative
to the convention that pins $c$.
\end{remark}
Each shipped configuration closes the direction. \textbf{LayerNorm arms} (relu+LN, tanh+LN): the normalisation
is non-affine---no trainable gain or shift---so the feature norm is
pinned \emph{per sample},
$\lVert\psi_n\rVert^2=H\,s_n^2/(s_n^2+\varepsilon_\text{LN})\le H$ with
$s_n^2$ the activation variance across units: equal to $H$ up to
$\varepsilon_\text{LN}$ for every non-degenerate sample, and bounded by
$H$ always. Upstream weight rescalings leave $\psi$ unchanged, and no
gain parameter exists through which the direction could reopen. \textbf{tanh arms}: the activation is bounded and not
positively homogeneous, so $\psi\mapsto c\psi$ is not realisable by any
parameter change. \textbf{relu without LayerNorm}: the direction exists
(positive homogeneity), and the $\ell_2$ penalty covers every backbone
parameter---the backbone carries no biases, and both weight matrices are
penalised---so the criterion along the direction behaves as
$-H\log c+\lambda_\theta c^2 A$, coercive with a finite preferred scale.
\textbf{End-to-end runs} (Appendices~\ref{app:multiclass},
\ref{app:e2e_regression}): SGD weight decay is applied to \emph{all}
backbone parameters, including the normalisation gains and biases of the
ResNet, closing the same direction. In every case the
$\varepsilon I$ floor bounds $\Sigma$ from below and validation early
stopping caps residual drift. Consequently the learned $\alpha$ is interpretable only relative to the
fixed architecture and regularisation convention; the gauge-invariant
quantities---the induced predictive mean and total predictive
variance---are comparable throughout. The floor bounds the belief
contribution as $v_n\ge\varepsilon\lVert\psi_n\rVert^2$, effective
wherever $\lVert\psi\rVert$ is bounded away from zero (LayerNorm
guarantees this; bare tanh does not), which is why containment is
credited to the combination---floor, representation convention,
full-coverage $\ell_2$, held-out selection---rather than the floor
alone.

\begin{figure}[t]
\refstepcounter{algorithm}\label{alg:scroll}%
\hrule\vspace{3pt}
\noindent\textbf{Algorithm~\thealgorithm}\quad SCROLL---Gaussian last-layer
regression instance (Equation~\eqref{eq:Lreg}); the \emph{free} route. The UCI
headline is full-batch; the large-scale runs replace the data sum by
$\tfrac{N}{B}\sum_{n\in\text{batch}}$ with the prior term at full
weight.
\vspace{3pt}\hrule
\begin{algorithmic}[1]
\Require data $\{(x_n,y_n)\}$; parameters $\theta$ (backbone), $\mu,\Sigma$
  (belief), $\sigma_\text{obs}$, $\alpha$
\Statex \textbf{Training} (all parameters in one gradient loop)
\Repeat
  \State $\psi_n \gets \mathrm{NN}_\theta(x_n)$
    \Comment{deterministic features}
  \State $V_n \gets \sigma_\text{obs}^2 + \psi_n^\top\Sigma\,\psi_n$
    \Comment{predictive variance, Equation~\eqref{eq:scroll_msg}}
  \State $\mathcal{L} \gets -\log\mathcal{N}\!\big(\mu;\,0,\,\Sigma+\alpha^{-1}I\big)
    + \sum_n\!\Big[\tfrac{(y_n-\mu^\top\psi_n)^2}{2V_n}+\tfrac12\log V_n\Big]$
    \Comment{Equations~\eqref{eq:Lscroll} and~\eqref{eq:Lreg}}
  \State Adam step on $(\theta,\mu,\Sigma,\sigma_\text{obs},\alpha)$
    \Comment{$\alpha$ learned in the pass: Equation~\eqref{eq:fsc_def}}
\Until{early stopping on validation loss}
\Statex \textbf{Inference} (one deterministic forward pass)
\State $\psi_\star \gets \mathrm{NN}_\theta(x_\star)$
\State $p(y_\star\mid x_\star)
   \gets \mathcal{N}\!\big(\mu^\top\psi_\star,\;
     \sigma_\text{obs}^2+\psi_\star^\top\Sigma\,\psi_\star\big)$
  \Comment{the convolution~\eqref{eq:gauss_conv}; no sampling}
\end{algorithmic}
\vspace{3pt}\hrule
\end{figure}

\paragraph{Closed routing in Algorithm~\ref{alg:scroll}.}
Algorithm~\ref{alg:scroll} is the free route: $(\mu,\Sigma)$ are trained
parameters. The closed corner of Section~\ref{sec:regression} differs in one
move: $(\mu,\Sigma)$ leave the parameter set and are instead \emph{computed}
each step from the binding~\eqref{eq:R1},
$\Sigma\gets\big(\Psi^\top\Psi/\sigma_\text{obs}^2+\alpha I\big)^{-1}$,
$\mu\gets\Sigma\,\Psi^\top y/\sigma_\text{obs}^2$, inserted before the $V_n$
line; the Adam step then updates only $(\theta,\sigma_\text{obs},\alpha)$,
with gradients flowing \emph{through} the binding. Inference is identical.

\paragraph{Validation selection.}
To avoid test-set leakage, every method is validation-selected per
$(\text{dataset},\text{seed})$: SCROLL picks the backbone architecture by
validation loss, and the references additionally select $\lambda_\text{ll}$ over a
logarithmic grid---the same architecture (and, for references, $\lambda$) search
applied identically to every method. Every headline method---SCROLL and all
references---is evaluated over the same $20$ seeds ($5$--$24$); only the appendix
two-layer track ($10$ seeds) and the order-averaged Diag-sequential corner of the
attribution block ($5$ seeds) use fewer.

\paragraph{Per-variant win counts.}
The headline counts report the single fixed variant SCROLL-Full
(Section~\ref{sec:experiments}) against the tuned single-pass
references (MAP, Laplace, VBLL, MVN): best-or-tied on $7/8$ (NLL) and
$6/8$ (calibration), failing only near-noiseless naval and concrete
calibration. Adding the multi-pass references (the two $5\times$
ensembles, $50$-pass MC Dropout) to the pool moves three NLL cells
(concrete, kin8nm, boston) and naval calibration to them, at
$5$--$50\times$ the cost. The best-of-three counts (outright best 6/8
for both metrics, best-or-tied 8/8 NLL and 7/8 calibration, single-pass
pool) credit SCROLL when at least one of the three fixed variants wins;
they are not an artefact of fielding three variants: SCROLL-Diag by
itself is best-or-tied on $5/8$ (NLL) and $6/8$ (calibration).

\paragraph{Validation criterion.}
Each method selects its architecture (and $\lambda_\text{ll}$) by its \emph{own}
validation objective: the heteroscedastic predictive methods (SCROLL, MVN, VBLL)
by validation NLL, while the MAP-trained methods (MAP, Laplace, Deep Ensembles,
MC Dropout) use validation MSE---they carry no in-training predictive variance
(the observation noise is fit post hoc), so MSE is their natural early-stopping
signal. This is not a confound: on the UCI suite
the architecture chosen by each method's validation criterion differs from the
NLL-optimal (oracle) architecture by a median of $0$ nats and a mean of at most
$0.09$ nats \emph{for every method, including SCROLL}, so the criterion neither
favours SCROLL nor systematically penalises the MSE-selected baselines.

\paragraph{Regression benchmarks.}
Eight UCI datasets ($n$: examples, $d$: input features):
Yacht ($n{=}308$, $d{=}6$), Boston ($n{=}506$, $d{=}13$),
Energy ($n{=}768$, $d{=}8$), Concrete ($n{=}1030$, $d{=}8$),
Wine ($n{=}1599$, $d{=}11$), Power ($n{=}9568$, $d{=}4$),
Kin8nm ($n{=}8192$, $d{=}8$), Naval ($n{=}11934$, $d{\le}16$).
Near-zero-variance features are dropped; each dataset is split $60/20/20$
train/val/test (\texttt{sklearn}, run seed as \texttt{random\_state}). Inputs are
standardised on the training fold; targets are centred by the training mean (no
variance scaling). Metrics: Gaussian NLL (including the $\tfrac12\log2\pi$
constant), RMSE in original target units, and calibration error (mean absolute
gap between nominal and empirical central-interval coverage, averaged over $19$
levels in $[0.05,0.95]$).

\paragraph{Large-scale and deep-feature regression.}
To probe scale and learned representations beyond tabular UCI, we add three large
tabular datasets---California housing ($n{=}20{,}640$, $d{=}8$), Protein
($n{=}45{,}730$, $d{=}9$; OpenML~44963), and Year ($n{=}515{,}345$, $d{=}90$;
OpenML~44027)---and two frozen deep-embedding datasets: SICK
sentence-relatedness ($n{=}9{,}840$, target $\in[1,5]$), encoded by a frozen
\texttt{bert-base-uncased} cross-encoder as the $768$-d \texttt{[CLS]} state, and
UTKFace age regression ($n{=}24{,}102$, ages $1$--$116$), encoded by a frozen
ImageNet ResNet-50 as the $2048$-d global-average-pooled penultimate features.
Embeddings are computed once, with no fine-tuning. Splits ($60/20/20$), input
standardisation, target centring, and metrics match the UCI protocol; for the two
deep-embedding datasets we additionally $\ell_2$-normalise each (standardised)
embedding so the last layer sees unit-norm inputs, whereas the large tabular sets
use standardisation only. All methods are validation-selected over the same four
architectures (references additionally over $\lambda_\text{ll}$) and the same $20$
seeds.

\paragraph{Baselines.}
MAP (deterministic training with $\ell_2$); Laplace~\citep{Daxberger2021}
(post-hoc Gaussian about the MAP solution; full-covariance \texttt{Laplace-Full}
for regression); VBLL~\citep{Harrison2024} (variational Bayesian last layer);
MVN, a heteroscedastic mean--variance network---two linear heads (mean,
log-variance) on the shared backbone---trained by
$\beta$-NLL~\citep{Seitzer2022} with $\beta{=}0.5$ ($\beta{=}0$ recovers the
Gaussian-NLL net of~\citealp{NixWeigend1994}), $\lambda$-tuned like the other
references; Deep Ensembles ($M{=}5$)~\citep{Lakshminarayanan2017}, instantiated
as ensembles of MSE-trained MAP networks scored by the member-mean variance
plus a shared post-hoc observation noise (the heteroscedastic Gaussian-NLL base
learner of the original recipe appears separately as MVN); MVN-Ens, the
original \citet{Lakshminarayanan2017} recipe itself---five $\beta$-NLL MVN
members, the mixture predictive collapsed to its first two moments (the
Gaussian moment-match of the original recipe, \S2.4, applied identically
to every ensembled arm, SCROLL-Ens included; rescoring the identical
members by the exact member mixture shifts every ensemble row by at most
$0.11$ nat and improves SCROLL-Ens on all eight UCI datasets, mean
$-0.03$---the shared convention, if anything, understates its lead;
per-dataset values: Table~\ref{tab:appendix_ens_mixture}),
$\lambda$-tuned like MVN; SCROLL-Ens, the same five-member recipe
applied to the shipped SCROLL-Full head---no $\lambda$ grid, each member
learning its prior in its own pass; MC
Dropout~\citep{Gal2016} (dropout $p{=}0.1$ at training and over $50$ stochastic
forward passes at inference); and, for regression only, a GP-RBF
reference (RBF kernel with tuned length-scale and noise; fit on a $1000$-point
training subsample where the training fold is larger), shown for context and
excluded from ranking.

\input{tables/mean_baseline_note}
\input{tables/appendix_ens_mixture}

\paragraph{Note on Rich-BLL.}
Rich-BLL~\citep{CalvoOrdonez2026} defaults to a two-layer architecture and fixes
the data split while varying only the initialisation across seeds; we use a
single hidden layer for direct method comparison and vary both split and
initialisation per seed, a more conservative estimate of generalisation variance.
Its NTK last-layer approximation is orthogonal to the Bethe objective, and
combining the two is a natural future direction.

%% file: tables/mean_baseline_note.tex
\paragraph{Sanity floor.}
The unconditional train-mean predictor---$\mathcal{N}(\bar y_\text{tr},\widehat{\mathrm{Var}}[y_\text{tr}])$, no inputs used---attains test NLL yacht $4.17$, concrete $4.22$, energy $3.73$, kin8nm $0.08$, naval $-2.80$, power $4.26$, wine $1.20$, boston $3.64$ (20 seeds, same protocol).
Every method and reference in Table~\ref{tab:headline_nll} clears this
floor on every dataset, with two exceptions on near-noiseless naval
(SCROLL-Full and VBLL; Appendix~\ref{app:pred_evidence}).

%% file: tables/appendix_ens_mixture.tex
\begin{table}[ht]
\centering
\footnotesize
\setlength{\tabcolsep}{4pt}%
\caption{Ensemble scoring convention: test NLL of every ensemble row under the shipped moment-collapsed predictive (\emph{moment}) and under the exact equal-weight member mixture (\emph{mixture}), from the \emph{identical} trained members (the retrained rows reproduce the shipped table values). Selection protocol as the headline (validation over architecture$\times\lambda$; 20 seeds). The mixture improves SCROLL-Ens on all eight datasets: the shared moment convention, if anything, understates its lead.}
\label{tab:appendix_ens_mixture}
\begin{tabular}{llcccccccc}
\toprule
 & & yacht & concrete & energy & kin8nm & naval & power & wine & boston \\
\midrule
SCROLL-Ens & moment & 1.715 & 3.136 & 0.680 & $-$0.756 & $-$2.188 & 2.827 & 0.946 & 2.526 \\
 & mixture & 1.611 & 3.118 & 0.636 & $-$0.757 & $-$2.221 & 2.815 & 0.944 & 2.493 \\
\addlinespace[2pt]
MVN-Ens & moment & 1.919 & 3.301 & 0.734 & $-$0.726 & $-$3.054 & 2.838 & 0.953 & 2.613 \\
 & mixture & 1.853 & 3.289 & 0.700 & $-$0.727 & $-$3.123 & 2.835 & 0.951 & 2.601 \\
\addlinespace[2pt]
Deep Ensemble & moment & 2.861 & 3.205 & 0.735 & $-$0.589 & $-$2.976 & 2.851 & 0.965 & 2.679 \\
 & mixture & 2.970 & 3.208 & 0.736 & $-$0.589 & $-$2.976 & 2.851 & 0.965 & 2.700 \\
\bottomrule
\end{tabular}
\end{table}

%% file: sections/app_experiments.tex
\section{Headline Calibration}
\label{app:calibration}

Table~\ref{tab:table1_headline_caliberr} reports the calibration error on the eight
UCI datasets, under the same validation-selected protocol as the headline NLL
(Table~\ref{tab:headline_nll}). A SCROLL variant is best-or-tied on 7/8, matching the
NLL pattern: the predictive route that fits the residual heteroscedasticity
(Theorem~\ref{thm:pred_evidence}) is also the better-calibrated one.

\input{tables/table1_headline_caliberr}

\section{Full Design-Space Attribution}
\label{app:attribution_full}

Table~\ref{tab:appendix_attribution_full_nll} gives the complete design-space
grid behind the main-text Table~\ref{tab:headline_nll}, adding the
symmetric leave-one-out cavity (\emph{LOO}) and the ELBO contrast (\emph{ELBO})
that we omit from the main text.
Cells are named \emph{family} \emph{variant} against the shared-cavity, free-routed
reference \mbox{SCROLL-\emph{family}}: \emph{closed}, \emph{seq}, and \emph{LOO}
denote closed routing at the shared, sequential, and leave-one-out cavity (for
the Diag family the sequential cell is a diagonal assumed-density filter,
order-dependent and therefore averaged over four random plate orderings);
\emph{ELBO} keeps the shipped cell's free \emph{parametrization} but swaps
its objective for the ELBO free energy---the $\mathrm{KL}(q\,\|\,p)$ prior
term and the expected-log-likelihood data term in place of $-\log Z_w$ and
$-\log Z_n$; by Lemma~\ref{lem:binding} its optimum over the belief is the
binding map, so this arm isolates the \emph{objective}, not the
parametrization;
and V1 ($\Sigma{=}0$, nothing to route) appears only as SCROLL-None.
Two readings complete the attribution.
First, the exact-posterior \emph{closed} cell and its leave-one-out twin
\emph{LOO} agree within noise on six of the eight datasets, confirming that
the shared-cavity (reaction-free) simplification is benign at this scale. The
exceptions are exactly the near-noiseless regime of
Corollary~\ref{cor:pathologies}: on naval the cavity choice moves the NLL by
over a nat ($-7.51$ closed vs.\ $-5.89$ LOO for Full, $-7.40$ vs.\ $-6.00$ for
Diag---the shared cell on the better side), and on yacht, the smallest
dataset, by $0.4$ for Full.
Second, the \emph{ELBO} cell---the shipped cell with the objective swapped for
the ELBO free energy---is worse on
seven of the eight datasets (energy is the exception, where it is best in the
table), consistent with the marginal-predictive data term carrying the
gain (Proposition~\ref{prop:bethe_elbo}).

\input{tables/appendix_attribution_full_nll}

\section{Point-Estimate Accuracy (RMSE)}
\label{app:rmse}

RMSE is the point-estimate metric, and SCROLL is \emph{not} optimised for it. The
free route reweights each residual by its predicted variance
(Theorem~\ref{thm:pred_evidence}), so it deliberately trades point accuracy for
predictive density. Table~\ref{tab:headline_rmse} reports test RMSE on the same eight
UCI datasets and validation-selected protocol as the headline.

\input{tables/table_headline_rmse}

Three readings. First, SCROLL-None---which carries no covariance and so performs no
variance reallocation---tracks MAP, isolating the routing as the lever rather than
the last-layer parametrisation. Second, the free-routed SCROLL-Full/Diag concede
RMSE on the small, high-variance datasets (yacht, concrete), exactly where reweighting
the fit by predicted variance most reshapes the learned mean. Third, the methods that
attain the best RMSE---Deep Ensembles (averaging $M{=}5$ means, at $5\times$ cost) and
the exact evidence/posterior corners (Table~\ref{tab:appendix_attribution_full_rmse},
whose mean is the ridge/evidence optimum)---are \emph{not} the methods that win NLL or
calibration. This is the operational content of ``under
heteroscedasticity the evidence optimum is not the score optimum'': the
point estimate and the predictive density are optimised by
different objects, and SCROLL targets the latter.

\input{tables/appendix_attribution_full_rmse}

\paragraph{Combining a strong mean with SCROLL's calibrated variance.}
The two metrics are not separable. SCROLL's predictive variance
$V_n=\sigma_\text{obs}^2+\psi_n^\top\Sigma\psi_n$ is fit so that $V_n\approx
\mathbb{E}[r^2\mid x]$ for SCROLL's \emph{own} residuals
(Theorem~\ref{thm:pred_evidence}). Pairing it with a different, lower-residual
mean (e.g.\ a Deep Ensemble's) lowers the data term $r^2/2V_n$---nominally improving
NLL---but leaves $V_n$ too large for those residuals, so the predictive turns
under-confident and \emph{calibration} degrades: calibration is a joint property of
the mean and the variance, not of either alone. Recovering both therefore requires
re-fitting the variance to the better mean, which the SCROLL head does on any
backbone. Two routes follow, both inside the same shared-cavity objective.
\textbf{(i)~Ensembling SCROLL} (multi-pass, orthogonal, as for any base learner):
averaging $M$ SCROLL means recovers the ensemble's point accuracy while each member
contributes a calibrated structured variance and the members' disagreement supplies
the epistemic term---ensemble-quality RMSE with single-model calibration, at
$M\times$ cost. \textbf{(ii)~Decoupling the mean and variance gradients} in a single
pass, so the mean is fit by (near-)least-squares while the variance is fit by the
marginal-predictive log-loss---a $\beta$-NLL-style
reweighting~\citep{Seitzer2022,Stirn2023}---targeting MAP-quality RMSE and
SCROLL-quality density together. We leave a full study of both to future work.

\section{Prior-Term Ablation: Structure vs.\ Regularisation}
\label{app:prior_ablation}

The free-variance MVN head differs from SCROLL on two axes at once: the
\emph{structure} of the predictive variance (free per-sample output vs.\ the
belief's forward message $\psi_n^\top\Sigma\psi_n$) and the
\emph{regulariser} (a cross-validated weight decay $\lambda$ vs.\ the prior
factor's own $-\log Z_w$ term, which learns $\alpha$ in the pass at no
tuning cost). Table~\ref{tab:appendix_prior_ablation} separates
them: within each covariance family we compare the shipped head against the
$-\log Z_w$ form at a cross-validated \emph{fixed} $\alpha$ and against the
prior term deleted entirely with the references' tuned $\lambda$ in its
place---each tuned arm validation-selected over architecture${\times}$knob,
exactly the protocol the references enjoy in Table~\ref{tab:headline_nll}.

Three readings. \emph{The margin over MVN is carried by the structure, not the
regulariser}: with the prior term deleted and the same cross-validated
$\lambda$, the belief-structured heads still outrank the free head---MVN
never takes a column best and is significantly behind on six of eight
datasets, with the structured no-prior arms ahead of it where the shipped
heads are (significantly so on energy, kin8nm, concrete, and by a nat on
naval). \emph{The $-\log Z_w$ term is a tuning-free stand-in for the
cross-validated knob, not a hidden advantage}: the shipped heads are never
significantly behind their tuned-$\alpha$ and tuned-$\lambda$ siblings
except on yacht and naval---and are ahead on boston in both families, and
on concrete and power for Full, despite tuning nothing. The two
concessions are precisely the smallest-$N$ and near-noiseless regimes of
Corollary~\ref{cor:pathologies} where the learned $\alpha$ is
least identified and a cross-validated one is safer.
\emph{The KL substitute prices the pairing at the equation level}
(Section~\ref{sec:related}): swapping only the prior factor for the
$\mathrm{KL}(q\,\|\,p)$ regulariser of the PPGPR objective---the sole
equation-level difference from~\eqref{eq:Lreg}---leaves NLL
tied in aggregate (SCROLL significantly ahead on
concrete, boston and power; the KL form on naval) while costing
calibration on six of eight datasets (pooled mean $+0.017$,
significantly on concrete). The score, shared by both objectives,
carries the NLL margin over the references; the $-\log Z_w$ pairing is
what keeps the resulting belief calibrated---and removes the last
tuned regularisation weight.

\input{tables/appendix_prior_ablation}

\section{Stability of the Objective, and Train/Test Optimism}
\label{app:stability}

Two knobs accompany the shipped objective: the $\varepsilon I$ floor on
$\Sigma$ that contains the near-noiseless collapse of
Corollary~\ref{cor:pathologies}(i), and early-stopping patience.
Table~\ref{tab:appendix_sensitivity} varies each on the headline
architecture. NLL is flat from $\varepsilon=10^{-6}$ to the shipped
$10^{-4}$---slightly \emph{better} at smaller floors---and degrades
only at $10^{-3}$ and above, on exactly the two lowest-noise datasets
(kin8nm, naval) where the floor itself caps attainable precision: the
knob needs no tuning, only a small value. Patience is flatter still:
patience $200$ and no early stopping coincide (the best-validation
checkpoint saturates within the $10^4$-step budget) and match the
shipped patience $50$. No numerical divergence was observed over the
full $10^4$-step budget, and increasing patience did not change the
selected checkpoint.

Table~\ref{tab:appendix_traintest_gap} addresses the remaining
optimism question: does free routing buy its test NLL by overfitting
the training-set score? At the same objective and features, the free
route's train$\to$test gap is comparable to the closed route's on
every dataset and half of it on yacht: the freed belief generalises
its density estimate exactly as well as the bound one.

\input{tables/appendix_sensitivity}
\input{tables/appendix_traintest_gap}

\section{Mini-batching}
\label{app:batching}

Two parts of the headline already run mini-batched---the large-scale tabular and
deep-feature datasets use batch size $1024$ (Appendix~\ref{app:setup})---so SCROLL's
mini-batchability is exercised there directly. That evidence is \emph{at scale} but
not \emph{controlled}: those datasets are run mini-batched only, with no full-batch
baseline, so they cannot isolate the effect of sub-sampling from the change of
dataset. This section isolates it instead, on a fixed footing.
The UCI headline is full-batch, because the exact sequential-cavity and
closed-covariance corners of Table~\ref{tab:headline_nll} are inherently
full-batch---the sequential cavity is an order-dependent recursion over all $N$
plates---so full-batch is what lets every design-space cell be computed on the
same footing.
The shipped SCROLL loss carries no such constraint: it is the shared-cavity
per-plate sum of Equation~\eqref{eq:Lreg}.
Across all mini-batched runs the estimand is unchanged: SCROLL's summed
data term is scaled by $N/B$ per batch with the prior added once at full
weight, and the mean-form references keep their per-batch means. An audit
of every batched path confirmed the convention; the VBLL rows of
Table~\ref{tab:table_scale_nll} were retrained after the audit found the
library's mean-form loss rescaled a second time, over-weighting data
against the backbone $\ell_2$.
Re-running the \emph{same} UCI datasets and relu+LN backbone at batch size 256,
Table~\ref{tab:appendix_batching} bears this out---the shared-cavity variants shift
test NLL by under $0.2$ nat on every dataset, improving on some (energy, naval) and
degrading on others (kin8nm), within the same perturbation envelope as MAP and with
no collapse.

\input{tables/appendix_batching}

\section{Two-Layer Results}
\label{app:two_layer}

Table~\ref{tab:appendix_2l_nll} repeats the design-space attribution with a
depth-2 deterministic backbone.
Routing stays the leading axis---the free-routed SCROLL cells are outright best
on five of the eight datasets---but the margin narrows with depth: the
exact-evidence corners (closed/seq) now reclaim concrete, naval, and boston, rather
than naval alone as at depth one (naval remaining the degenerate near-noiseless
case of Corollary~\ref{cor:pathologies}). The trajectory also extends
\emph{below} depth one: Appendix~\ref{app:depth_trajectory} adds the
architecture-free linear head (depth zero) and shows the depth gain is
front-loaded in the first layer.

\input{tables/appendix_2l_nll}

Table~\ref{tab:appendix_2l_baselines_nll} repeats the headline external
comparison at depth two: SCROLL beside the baselines, validation-selected over
architecture per seed.
Two caveats make this read differently from the headline, both cutting
\emph{against} SCROLL: the two-layer track runs no $\lambda$-grid, so the
references use their default prior strength rather than a cross-validated one
(SCROLL is given no cross-validation-cost advantage here, and tuned references
would if anything close the gap further); and it is restricted to $10$ seeds rather
than $20$, widening the significance band and so \emph{over}-counting ties.

On that footing the single-pass advantage narrows with depth, and we report it
plainly. SCROLL remains best or tied on six of eight datasets at a
single forward pass, and its calibration lead is undiminished
(Table~\ref{tab:appendix_2l_baselines_caliberr}: SCROLL best-or-tied almost
everywhere, every multi-pass row poorly calibrated throughout---the ensembles'
moment-collapse over-dispersion persists at depth). Its outright-best NLL count
falls from five to four, conceding concrete---where the extra deterministic
layer lets the mean-fitting baselines pull ahead---and naval, the degenerate
near-noiseless regime (Corollary~\ref{cor:pathologies}). The multi-pass tier,
however, repeats the headline ordering: the five-member recipe applied to
SCROLL is the best ensemble on five of eight (yacht by $1.5$ nats), leaving
Deep Ensembles concrete and naval---at depth, too, ensembling composes with
the head rather than substitutes for it.

\input{tables/appendix_2l_baselines_nll}
\input{tables/appendix_2l_baselines_caliberr}

\paragraph{Where the single-pass advantage lives, and how to extend it.}
The narrowing is consistent with SCROLL's advantage being a \emph{last-layer-regime}
advantage. SCROLL places one Bayesian layer, its hyperparameters learned in
the pass, on top of a feature map; its edge is largest when that map is fixed or shallow, so the
last-layer covariance $\psi_n^\top\Sigma\psi_n$ is the principal route to
input-dependent uncertainty. Trainable depth \emph{below} the head erodes this from
two sides: the added deterministic capacity absorbs residual heteroscedasticity the
covariance had been carrying, and ensembles gain from averaging over richer, more
diverse deep features---capacity and averaging substituting for the last-layer
Bayesian structure. The region of dominance is therefore the shallow /
frozen-feature setting, which is precisely the practical last-layer deployment the
headline targets: a single Bayesian head over a frozen large encoder (the
large-scale tabular and frozen text/vision results of Section~\ref{sec:experiments},
and the OOD study of Appendix~\ref{app:ood}). Three routes, all orthogonal to the
present instantiation, could carry the advantage into depth: (i) making layers
\emph{below} the head probabilistic too---the loopy-Bethe extension of
Section~\ref{sec:conclusion}, where the shared-cavity free route applies
unchanged---so the Bayesian treatment scales with depth rather than competing with
the backbone; (ii) decoupling the mean and variance gradients (a $\beta$-NLL-style
reweighting~\citep{Seitzer2022,Stirn2023}) to recover MAP-quality RMSE where the
mean-fitting baselines currently win; and (iii) ensembling the backbone beneath a
Bayesian head---SCROLL is orthogonal to ensembling and applies on top of any
backbone, so the Deep-Ensemble NLL gains and SCROLL's calibration can be combined
rather than traded off.

\section{The Depth Trajectory}
\label{app:depth_trajectory}

\input{tables/appendix_depth_trajectory}

Table~\ref{tab:depth_trajectory} places the depth-zero head of
Appendix~\ref{app:linear_routing} at the foot of a depth sweep: the same
free-routed loss, the same covariance families, at zero, one, and two hidden
layers. One reminder against a natural confusion: \emph{depth zero has no
backbone}, so there is no architecture axis, no early stopping, and no
selection of any kind---each linear cell is a single full-batch fit---while
the deeper cells are validation-selected over four architectures per seed.
The comparison is thus tilted \emph{against} the linear head, which makes
its showing conservative. Three patterns:

\textbf{(i) The gain is front-loaded.} Where depth helps, the
linear$\to$one-layer step carries most of it: on energy, kin8nm, and boston
the first layer takes ${\sim}60$--$100\%$ of the total NLL gain, and the second
layer adds little or hurts (boldface splits between 1L and 2L). Yacht is
the one dataset where depth keeps paying through the second layer. The
depth-two erosion of Appendix~\ref{app:two_layer} therefore starts from a
shallow model that is already competitive: the first layer buys the span
(features whose squares can track $V^\star$), further depth mostly does
not.

\textbf{(ii) Wine and power are near-linear.} The entire trajectory moves
the NLL by at most ${\sim}0.1$ nats: the depth-zero cell---classical
random-coefficient ML (Appendix~\ref{app:linear_routing})---already sits
within a few hundredths of the deep model. On such datasets the benchmark
measures the head, not the backbone.

\textbf{(iii) Naval is anomalous in both directions.} The homoscedastic
None family is best \emph{at depth zero} (its linear cell beats its own
deeper cells), consistent with the near-noiseless regime of
Corollary~\ref{cor:pathologies}(i), while Diag at one layer is the best
cell in the entire table---the two ends of the variance-split
non-identification, realised on one dataset. The severity of the ceded
corner is worth stating plainly: on naval the free-routed Full variant's
NLL ($-2.22$, Table~\ref{tab:headline_nll}) is worse than even the
unconditional train-mean predictor ($-2.80$ under the same protocol; as
is VBLL's $-2.77$), while free-routed Diag clears it ($-4.08$) and the
exact corner reaches $-7.5$. This is precisely the regime the
$\sigma_\text{obs}^2\!\to\!0$ diagnostic flags, and reimposing the
binding (the closed route) recovers it; on every other dataset every
method clears the mean-predictor floor by wide margins
(Appendix~\ref{app:setup}).

\section{Architecture Selection}
\label{app:architecture}

Every SCROLL number in the headline is validation-selected: for each seed we
train all four backbones---\texttt{relu}, \texttt{relu+LN}, \texttt{tanh},
\texttt{tanh+LN}---and keep the one with the lowest validation loss.
Tables~\ref{tab:appendix_arch_full_nll} and~\ref{tab:appendix_arch_diag_nll} give
the per-architecture test NLL behind that choice for the two shipped variants.

\input{tables/appendix_architecture_nll}

\paragraph{Selection predominantly recovers ReLU.}
A ReLU-family backbone (\texttt{relu} or \texttt{relu+LN}) is the best or
statistically-tied choice on \textbf{7 of 8} datasets for both SCROLL-Full and
SCROLL-Diag; plain \texttt{relu} alone is best on 4/8 and 5/8 respectively.
The headline therefore does not rest on per-dataset exotic-architecture
tuning---a fixed ReLU backbone would reproduce most of it. Quantitatively,
fixing \texttt{relu} outright (no selection at all) costs a mean of $+0.06$
($+0.02$) nats test NLL for SCROLL-Full (Diag) over the eight datasets,
concentrated on yacht ($+0.29$, Full) and energy ($+0.38$, Diag); on yacht the
Diag selection is even counterproductive---fixed \texttt{relu} improves on the
validation choice by $0.33$ nats.

\paragraph{The exception is informative.}
The sole dataset that strictly prefers a \texttt{tanh} backbone is energy
(\texttt{tanh+LN} best for both variants, e.g.\ SCROLL-Full
$1.273\!\to\!0.763$ over \texttt{relu+LN}), which is also our weakest NLL
dataset: architecture selection earns its keep precisely where the free route
is least dominant.

\paragraph{LayerNorm is not a free win.}
LN helps on kin8nm (best for both variants) but clearly hurts on power
(SCROLL-Full $2.841\!\to\!2.999$) and naval (SCROLL-Diag $-4.075\!\to\!-1.755$),
so it is a genuine selected knob rather than a default---which is why it stays in
the grid rather than being fixed on.

\paragraph{Why \texttt{tanh} is volatile---a direction forward.}
A \texttt{tanh} unit operates in three regimes: linear (small pre-activations),
non-linear, and saturated (large pre-activations).
Which regime the backbone settles into is sensitive to scale and initialisation,
so \texttt{tanh} performance carries a ``luck'' component in where the operating
point lands.
Because the Bethe objective moves the effective output scale through
$\sigma_\text{obs}$ and the learned covariance, it can nudge the backbone into a
favourable regime (as on energy) or out of one (collapsing performance), and the
effect is both dataset- and method-dependent.
ReLU, scale-invariant on its active half-line, does not show this regime
sensitivity.
Deliberately aligning the operating point---e.g.\ coupling a scale or
normalisation control to $\sigma_\text{obs}$---is a promising route to further
gains, which we leave to future work.

\section{The Routing--OOD Duality}
\label{app:ood}

Theorem~\ref{thm:pred_evidence} says the free route is score-optimal
at the population data risk within $\mathcal{V}_\psi$,
reallocating variance onto the input-dependent $\psi_n^\top\Sigma\psi_n$; the closed
corner instead pins that variance to the ridge/GP leverage
$v_n=\psi_n^\top(\Psi^\top\Psi/\sigma_\text{obs}^2+\alpha I)^{-1}\psi_n$, a function of
the inputs alone. Leverage is exactly the quantity that grows off the training support,
so the same routing choice that costs the closed corner its predictive density should
buy it covariate-shift detection. We test this directly.

\paragraph{Protocol.}
For each of three regression tasks we treat the held-out test embeddings as
in-distribution (ID) and pass a foreign dataset through the \emph{same frozen encoder}
as out-of-distribution (OOD): SICK relatedness vs.\ AG-News (text), UTKFace age vs.\
CIFAR (vision), and wine-red vs.\ wine-white quality (tabular; the same ID/OOD
pair as \citealp{CalvoOrdonez2026}). Every method scores each
point by its epistemic variance $\mathrm{sd}^2-\sigma_\text{obs}^2$, and we report the
AUROC separating the held-out ID test set from the foreign OOD set. Training, splits,
backbones, and validation selection follow the headline exactly
(Appendix~\ref{app:setup}; $H{=}100$ / batch $1024$ for the two deep tasks, $H{=}50$
full-batch for wine), with two OOD-specific choices. First, selection is by
\emph{in-distribution} validation loss only---never the OOD set---over $10$ seeds.
Second, unlike the deep headline we do \emph{not} $L_2$-normalise the embeddings
(standardising with ID-train statistics and applying them unchanged to the OOD set), so
the input-norm cue that covariate shift moves is preserved for detection.
The same shared-cavity head appears under both routings
(Figure~\ref{fig:lead_map}a): the free route (SCROLL) and its closed corner, both
single-pass, nothing cross-validated.

\input{tables/table_ood}

\paragraph{The closed corner does OOD; the free route does not.}
Table~\ref{tab:ood}. The closed corner is best or tied with the strongest
baseline on all three modalities (text $0.88$, vision $0.91$, wine $0.95$), beating its
own free route by $+0.21$ to $+0.31$ AUROC. The free route---the headline's NLL
and calibration winner---is the weakest structured method here, and pays for the closed
corner's detection in reverse: the OOD gain costs $+0.07$ (text), $+1.28$ (vision), and
$+0.06$ (wine) nats of in-distribution NLL. This is
Theorem~\ref{thm:pred_evidence} read backwards: input-dependent leverage variance
and residual-reallocated predictive variance are the two ends of the routing axis, and a
method cannot sit at both.
The free route's weak separation is therefore \emph{expected, not a defect}. Its
$\psi_n^\top\Sigma\psi_n$ carries \emph{aleatoric} heteroscedasticity---how noisy the
target is (Figure~\ref{fig:pred_evidence_toy}b)---not distance from support,
so an off-support point can be assigned a large predictive variance and hence an
unremarkable NLL: correct behaviour for predictive density, but invisible to a
leverage-based score. The AUROC here scores support geometry, which the
score-optimal route does not encode by construction; a low value reflects the
quantity being measured, not a modelling failure.

\paragraph{Scope, and what we claim.}
Two controls confirm the mechanism is structured leverage and nothing else. MAP and
SCROLL-None hold no last-layer covariance, so their epistemic score is constant and their
AUROC is $0.5$ by construction. The free-variance heteroscedastic head (MVN) is
unreliable---anti-correlated with shift on wine ($0.37<0.5$)---because its variance fits
the ID target noise, not the input geometry; it is the \emph{structured}
$\psi^\top\Sigma\psi$, not free variance, that detects shift. VBLL's strong AUROC
is the duality's third control, from the other side: its covariance is trained
freely, but the ELBO it maximises has the binding map as its optimum over the
belief family (Lemma~\ref{lem:binding}), so it tracks the posterior's leverage
geometry---free \emph{parameters}, closed-cell \emph{optimum}. The duality is
carried by which objective's optimum the belief reaches, not by how it is
parameterised. We therefore do not promote
SCROLL as an OOD detector: the closed corner is the \emph{known} neural-linear/leverage
model (Proposition~\ref{prop:seq_evidence}), and where it merely ties the leverage baseline (Laplace)
it does so more cheaply, single-pass with its prior learned in the pass against a cross-validated
reference. The claim is the framework, not the instance: one single-pass objective,
hyperparameters learned in the pass, reaches a score-optimal interior (calibration and NLL, the headline) and
the OOD-appropriate evidence/leverage corner by the routing choice alone.

\section{End-to-End Regression: the Attenuation Regime}
\label{app:e2e_regression}

The paper's regression claims are last-layer claims: the belief sits on a
fixed or shallow feature map, and Theorem~\ref{thm:pred_evidence}
characterises what free routing buys there. This appendix asks what
survives when a deep backbone trains \emph{through} the score: UTKFace age
regression (the same source as the frozen-embedding rows), a ResNet-18 at
$64$px trained jointly with each head under the CIFAR-10 recipe of
Appendix~\ref{app:multiclass}, in three conditions---from scratch,
ImageNet-pretrained frozen, and pretrained fine-tuned
(Table~\ref{tab:appendix_utkface_e2e}).

\paragraph{The hazard is the score's own precision weighting.}
Per plate, the mean-pathway gradient through the features is
$-\,(r_n/V_n)\,\partial f_n/\partial\theta$. At fixed features the $1/V_n$
factor is correct inference---generalised least squares under the current
variance model---but for a trained backbone it is a per-sample
learning-rate schedule: high-residual regions receive high $V_n$ (the
channel doing its job) and exactly thereby stop training the features, a
self-confirming loop (large $r\to$ large $V\to$ vanishing feature
gradient$\,\to r$ stays large). This is the underfitting pathology of
joint heteroscedastic training~\citep{Seitzer2022}; it requires
input-dependent variance (homoscedastic SCROLL-None never shows it) and is
not specific to SCROLL---any free-variance Gaussian head carries it, which
is why the MVN reference is trained under the $\beta$-NLL stabiliser. The
table shows it plainly: fine-tuned SCROLL-Diag reaches competitive NLL at
RMSE $19.1\pm6.5$---the variance channel explains hard examples away and
the mean underfits, erratically across seeds---while MVN, with its
stabiliser, holds both.

\paragraph{The repair is estimand-preserving, and it closes the gap.}
\begin{remark}[Detached reweighting preserves the estimand]
\label{rem:estimand_reweight}
Let $s_n(f,V)$ be the per-plate score and $w_n>0$ any weight treated as a
constant of the optimisation (detached). Each plate's minimiser is
unchanged, $\arg\min_{f,V} w_n s_n = \arg\min_{f,V} s_n$, so the
population optimum of the weighted objective is the same score-optimal
$(f^\star,V^\star)=(\mathbb{E}[y\mid x],\,\mathbb{E}[r^2\mid x])$ of
Theorem~\ref{thm:pred_evidence}---provided the weight is held fixed
with respect to the optimised prediction and the family attains the
pointwise optimum; the weighting then alters the optimisation
path and the finite-capacity allocation only.
\end{remark}
The SCROLL-$\beta$ rows train the \emph{identical} objective under the
detached unit-mean $V^{\beta}$ weight ($\beta{=}0.5$)---MVN's stabiliser
grafted onto the SCROLL score. The attribution is complete:
SCROLL-Diag-$\beta$ matches the dedicated $\beta$-NLL head on NLL
($3.670{\pm}0.01$ vs.\ $3.674{\pm}0.03$), repairs the RMSE pathology
($19.1\to11.1$, past MVN's $11.5$), and matches its
calibration---while keeping the prior learned in the pass, nothing cross-validated. The end-to-end gap was joint-training dynamics, not the
objective.

\paragraph{What this means for the claims.}
End-to-end, the last-layer \emph{win-gap} closes to parity: with the
standard stabiliser every heteroscedastic head lands in the same place,
much as the probabilistic heads did on CIFAR-10
(Appendix~\ref{app:multiclass}), and SCROLL competes there at its usual
cost point. The classification instance of the same hazard---the GaussReg
collapse under joint training---is dissected in the CIFAR study below;
the mechanism is one and the same, there amplified by the mis-specified
Gaussian factor on one-hot targets. The headline claims are last-layer
claims and are unaffected; what does not carry to depth is dominance, not
function.

\input{tables/appendix_utkface_e2e}

\section{Classification: a Non-Gaussian Likelihood Demonstration}
\label{app:classification_details}
\label{app:multiclass}

We include classification for two reasons, neither of them a
superiority claim. First, it demonstrates that free routing admits
non-Gaussian likelihoods: the construction below reuses the regression
machinery with only the observation factor swapped. Second, it is the
predicted \emph{null} of the theory, and the null is worth measuring.
The routing gain lives in the overdispersion channel
(Proposition~\ref{prop:overdispersion}), active only where the plug-in
likelihood cannot match the conditional it faces---residual
heteroscedasticity in the Gaussian case, whose variance is a single
shared parameter. A categorical likelihood has no such structural
misfit: for a well-specified plug-in whose conditional probabilities
the logit pathway can represent, the score $\kappa$ of
Proposition~\ref{prop:overdispersion}(ii) vanishes and the freed
belief has nothing left to buy. This is a property of the likelihood,
not of classification tasks: a count or ordinal factor retains a
dispersion misfit, and the one outright win below---the ordinal head
on ordered labels---comes from exactly that likelihood choice, not
from routing. Under the categorical factor the best any head can do
is tie, so the finding to check is that SCROLL ties---and what a tie
is worth depends on what it costs: SCROLL reaches it with no
regularisation weight to cross-validate, its prior learned in the
pass, while on CIFAR-10, where the reference heads run at their own
canonical defaults, parity reads as parity---certifying that the
machinery carries to non-Gaussian likelihoods and to end-to-end deep
learning unchanged. Two studies back this: an
eight-dataset tabular benchmark, reported at backbone depths zero, one,
and two, and an end-to-end CIFAR-10 study in which the same heads train
jointly with a from-scratch ResNet-18, beat the equal-cost
cross-entropy reference, and match VBLL and last-layer Laplace.
Classification reuses the entire regression construction and swaps only the
observation factor; the Gaussian last-layer belief $q(w)=\mathcal{N}(\mu,\Sigma)$,
the Gaussian forward message $q(f_n)=\mathcal{N}(\mu^\top\!\psi_n,v_n(\Sigma))$, the
prior term $-\log Z_w$, and the covariance routing all carry over unchanged.
Convolving a probit likelihood $\Phi(yf/c)$ (scale $c>0$) with the Gaussian
forward message gives the closed-form integral
\begin{equation}
  \int \Phi\!\left(\frac{yf}{c}\right)\mathcal{N}(f;\mu_f,v_f)\,\mathrm{d}f
  = \Phi\!\left(\frac{y\mu_f}{\sqrt{c^2+v_f}}\right),
  \label{eq:probit_conv}
\end{equation}
the standard GP classification predictive~\citep{Williams2006,MacKay1992}, used
here as a per-plate training loss.

\begin{proposition}[Analytic probit shared-cavity loss]
\label{prop:probit_loss}
Under the probit likelihood and prior $p(w)=\mathcal{N}(0,\alpha^{-1}I)$,
the shared-cavity loss~\eqref{eq:fsc_def} evaluates to
\begin{equation}
  \mathcal{L}_\text{class}(\mu,\Sigma,\alpha)
  = -\log Z_w + \sum_{n=1}^N -\log\Phi(t_n),
  \qquad
  t_n = \frac{y_n\,\mu^\top\!\psi_n}{\sqrt{c^2 + v_n(\Sigma)}}.
  \label{eq:Lclass}
\end{equation}
\end{proposition}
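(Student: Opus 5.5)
The plan is to instantiate Definition~\ref{def:fsc} on the last-layer graph with the probit observation factor and evaluate each of its three groups of terms, exactly as in the proof of Proposition~\ref{prop:scroll_fsc}. That proof already shows the group structure is independent of the observation factor. The intermediate terms vanish on the deterministic backbone (Proposition~\ref{rem:deterministic}). The prior group is the single Gaussian product normaliser $-\log Z_w=-\log\mathcal{N}(\mu;0,\Sigma+\alpha^{-1}I)$, obtained from the convolution identity~\eqref{eq:gauss_conv} in $w$. Neither group involves the likelihood, so both carry over verbatim. What remains is to evaluate the plate terms $Z_n$ in closed form.

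For the plates, I would first note that the map $f_n=w^\top\psi_n$ is linear. Hence $q(f_n)=\mathcal{N}(\mu^\top\psi_n,v_n(\Sigma))$ exactly, as in~\eqref{eq:scroll_msg}, and $Z_n=\int\Phi(y_nf/c)\,\mathcal{N}(f;\mu^\top\psi_n,v_n)\,\mathrm{d}f$ is a one-dimensional integral. I would then establish~\eqref{eq:probit_conv}. Write $\Phi(y_nf/c)=\Pr(\varepsilon\le y_nf/c)$ with $\varepsilon\sim\mathcal{N}(0,1)$ independent of $f$. Since $y_n\in\{\pm1\}$ and $y_n^2=1$, this is $\Pr(c\varepsilon-y_nf\le0)$. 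Under $f\sim\mathcal{N}(\mu_f,v_f)$, the variable $c\varepsilon-y_nf$ is Gaussian with mean $-y_n\mu_f$ and variance $c^2+v_f$. Its probability of being nonpositive is therefore $\Phi\big(y_n\mu_f/\sqrt{c^2+v_f}\big)$. Substituting $\mu_f=\mu^\top\psi_n$ and $v_f=v_n(\Sigma)$ gives $-\log Z_n=-\log\Phi(t_n)$. Summing the three groups then yields~\eqref{eq:Lclass}.

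The only step needing care is the latent-variable evaluation of the probit convolution. In particular, it requires $y_n\in\{\pm1\}$, so that the sign can be moved inside without rescaling the variance; I would state this label convention explicitly. The degenerate case $v_n=0$, arising under V1, is covered by continuity, since the formula then reduces to $\Phi(y_n\mu^\top\psi_n/c)$. Boundedness of $\Phi$ guarantees that every $Z_n$ is finite and positive, consistent with Corollary~\ref{cor:any_likelihood}.
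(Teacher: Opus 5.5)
Your proposal is correct and follows essentially the same route as the paper. You split $F_\text{SC}$ into the prior term $-\log Z_w$ and the plate terms (the intermediate terms vanishing on the deterministic backbone), and evaluate each plate term as the probit convolution~\eqref{eq:probit_conv} against the exact Gaussian forward message; the only difference is that you derive~\eqref{eq:probit_conv} from the latent-Gaussian representation of $\Phi$ and state the needed label convention $y_n\in\{\pm1\}$ explicitly, whereas the paper cites it as the standard GP-classification predictive.
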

\begin{proof}
The prior factor contributes $-\log Z_w$ (Equation~\eqref{eq:Lscroll}); each
likelihood factor contributes
$-\log\!\int\Phi(y_nf/c)\,\mathcal{N}(f;\mu^\top\!\psi_n,v_n)\,\mathrm{d}f
= -\log\Phi(t_n)$ by Equation~\eqref{eq:probit_conv}; summing gives~\eqref{eq:Lclass}.
\end{proof}

The probit scale $c>0$ plays the role of observation noise:
$c^2+v_n$ is the predictive uncertainty.
Unlike $\sigma_\text{obs}$, $c$ is fixed (typically $c=1$) rather than
optimised, preventing the variance-attribution degeneracy of
Section~\ref{sec:regression}.
On the Two-Moons demonstration this single-pass probit head yields decision
contours whose uncertainty expands away from the data
(Figure~\ref{fig:toy_demo}), where MAP is overconfident and a post-hoc Laplace
fit is poorly scaled.

\begin{figure}[t]
  \centering
  \includegraphics[width=0.72\textwidth]{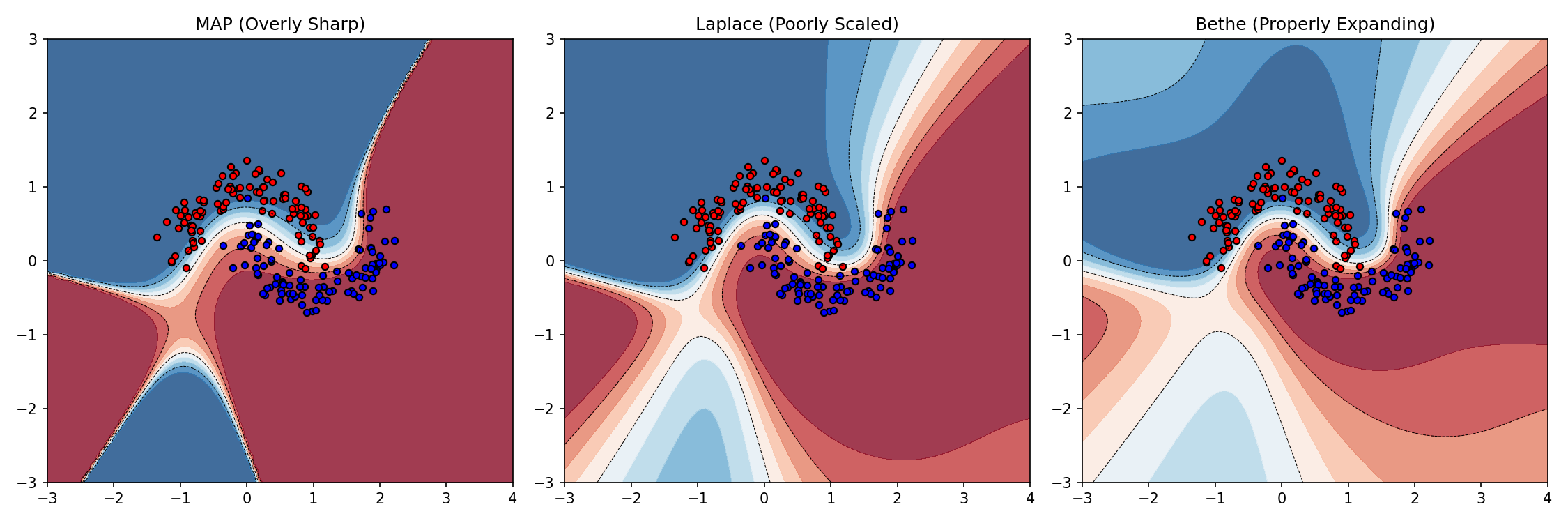}
  \caption{Two-Moons demonstration: MAP (overconfident), Laplace (poorly
    scaled), and SCROLL (uncertainty expanding away from the data).
    SCROLL produces its probabilistic contours in a single
    deterministic forward pass---no MC sampling, no ensembling, no
    post-hoc Hessian estimation.}
  \label{fig:toy_demo}
\end{figure}

Three strategies extend the binary formulation to $K$ classes. Each keeps
the per-plate factor analytic in the denominator
$D_n=\sqrt{c^2+v_n(\Sigma)}$ of Proposition~\ref{prop:probit_loss}, so all three remain
closed-form, single-pass instances of the same construction.
(i)~One-versus-all (OvA) trains $K$ independent binary probit heads, head
$k$ minimising the Proposition~\ref{prop:probit_loss} loss on the $\pm1$ targets of
$\mathds{1}[y_n{=}k]$; at test time the $K$ probit predictives are
normalised to sum to one (the propriety of
Proposition~\ref{prop:proper_score} applies to each binary factor's
score; the normalised categorical predictive is a post-hoc readout).
(ii)~Ordinal probit thresholds a single shared latent
$f_n\sim\mathcal{N}(\mu_n, v_n)$ by $K{-}1$ learned monotone boundaries
$\tau_1<\dots<\tau_{K-1}$ (with $\tau_0{=}{-}\infty$, $\tau_K{=}{+}\infty$):
the per-plate factor
$\Phi\big((\tau_{y_n}{-}\mu_n)/D_n\big)-\Phi\big((\tau_{y_n-1}{-}\mu_n)/D_n\big)$
is a difference of two probit convolutions~\eqref{eq:probit_conv}, and for
$K=2$ the model reduces exactly to $\mathcal{L}_\text{class}$
of Proposition~\ref{prop:probit_loss}.
(iii)~A Gaussian least-squares head (GaussReg) retains the conjugate
regression likelihood on one-hot $\pm1$ targets during
training~\citep{Rifkin2004} and applies the probit
convolution~\eqref{eq:probit_conv} only at prediction.
In the single-pass default, $\mu$, $\Sigma$, the backbone $\theta$,
$\alpha$, and (for the ordinal head) the thresholds $\tau$ are optimised
jointly; the probit scale $c$ stays fixed as above.
Each head composes with the same three covariance routings as regression
(Full, Diag, None), giving nine SCROLL classification heads in total.

\paragraph{Benchmark.}
Tables~\ref{tab:appendix_classification_nll}
and~\ref{tab:appendix_classification_ece} compare all nine heads against the
references that carry over from the regression benchmark---MAP with a softmax
cross-entropy head (CE-softmax), MC Dropout (here $p{=}0.2$, $20$ stochastic
passes), Deep Ensembles ($M{=}5$), VBLL (discriminative classification head),
and post-hoc Laplace---on eight tabular (OpenML) datasets: five binary,
australian ($n{=}690$), pima ($768$), blood ($748$), banknote ($1372$),
spambase ($4601$); and three multiclass, vehicle ($846$, $K{=}4$), segment
($2310$, $K{=}7$), and wine\_quality ($1599$, $K{=}6$, genuinely ordered
labels). Architecture, optimisation, splits, and early stopping follow
Appendix~\ref{app:setup}: one hidden layer ($H{=}50$), full-batch Adam
(learning rate $0.03$) for up to $10^4$ steps with patience $50$ on each
method's validation loss, $60/20/20$ train/val/test splits with standardised
inputs, and $20$ seeds redrawing both split and initialisation.
Classification-specific choices: computations run in $64$-bit precision, the
probit scale is fixed at $c{=}1.005$, ECE uses $10$ equal-width confidence
bins, and no method's regularisation is tuned---references use a fixed
last-layer $\ell_2$ weight $\lambda_\text{ll}{=}0.1$ alongside the shared
backbone penalty, while every SCROLL head learns $\alpha$ in the
pass. The one exception is VBLL, for which the fixed $0.1$ is
not a weight-decay knob but its KL weight, $50$--$100\times$ its canonical
$1/N$---at $0.1$ it collapses on banknote, vehicle and segment---so its
rows, in both depth legs, are validation-selected per dataset and seed
over a grid spanning $1/N$ to $0.1$, restoring the fair protocol. The same grid does not
move Laplace: validation selection returns the fixed $0.1$ on seven of
eight datasets and degrades test NLL where it does not (banknote,
$+0.20$), so the fixed-$\lambda$ rows already show Laplace at its
validated best. Tables show the headline relu+LN backbone; the three
supporting backbones (relu, tanh, tanh+LN) give the same qualitative
picture.

\input{tables/appendix_classification_nll}
\input{tables/appendix_classification_ece}

\paragraph{No head fails; calibration matches a $5\times$ ensemble.}
All nine heads train stably, and the learned prior precision settles at a
finite $\alpha$ of order $1$--$10$ on every dataset---the single-pass default
needs no intervention when the likelihood changes. On NLL the SCROLL heads
are best or tied on the binary datasets and concede the
near-separable multiclass problems (banknote, vehicle, segment) to the
five-member deep ensemble, whose averaging no single-pass method
matches there; against the equal-cost CE-softmax reference the gap is
within noise. On calibration the picture inverts: the conjugate Gaussian
head (GaussReg) takes the best cell on four of eight datasets and is never
far behind, on par with the deep ensemble at a fifth of its cost, while
the one failure mode sits with post-hoc Laplace (banknote, segment: a
post-hoc Gaussian around a near-separable MAP solution is mis-scaled);
VBLL, at its fair validation-selected KL weight, is at parity throughout.

\paragraph{The ordinal win: a non-conjugate likelihood pays off.}
Across the benchmark the conjugate route is the strongest default: the
Gaussian GaussReg head delivers the best calibration and the most
consistent binary NLL, mirroring the regression results. That makes the
one exception structurally valuable. On wine\_quality---the only dataset
whose labels are genuinely ordered---an ordinal head attains the best mean
NLL among all fourteen methods, and the win replicates on all four
backbones; conversely, on the unordered vehicle and segment the same head
collapses, as imposing a false class order must. This is the claim of
Section~\ref{sec:classification} made measurable: because free routing
never used conjugacy, the observation model is a per-plate modelling
choice rather than a fixed design commitment of the framework, and the
objective converts a structurally correct choice (ordered labels
$\rightarrow$ ordinal factor) into a measurable gain---the best NLL in the
table---while punishing an incorrect one transparently.
Swapping that single plug-in factor is the entire difference between the
ordinal head's best and worst cells in the table.

\paragraph{Depth trajectory: the likelihood matters most where the backbone
is weakest.}
Tables~\ref{tab:appendix_classification_0l_nll}
and~\ref{tab:appendix_classification_2l_nll} repeat the benchmark at depth
zero (no backbone; heads on the raw standardised features) and with two
hidden layers.
The two ends of the trajectory tell one story.
At depth zero the observation factor is the entire model, and choosing it
correctly dominates: the ordinal head---which for binary labels reduces
exactly to the single probit of Proposition~\ref{prop:probit_loss}, where OvA fits two
redundant heads and renormalises---is best on every binary dataset and on
the ordered wine\_quality, with gaps far wider than anything at depth one
(blood: $0.490$ versus $0.611$ for the nearest OvA head).
At two hidden layers the gaps compress and the picture rebalances: the
structural wins persist (ordinal still best-or-tied on wine\_quality on
this and every backbone; still collapsed on the unordered multiclass),
while the conjugate GaussReg edge on the binary datasets---a benefit of
the Gaussian factor's fit to shallow features, not of label
structure---narrows toward parity with CE-softmax
(e.g.\ spambase $0.174\!\to\!0.200$, australian $0.342\!\to\!0.348$ from
one to two layers).
Capacity absorbs what the likelihood mis-specifies; it cannot substitute
for label structure the likelihood alone expresses.
At two layers the five-member deep ensemble takes every NLL column---at
five times the cost of any single-model row, consistent with the
regression two-layer results (Appendix~\ref{app:two_layer}). This is not
a head-to-head loss but a different axis: ensembling averages
\emph{any} single-model method, ours included, and when it is applied to
both sides the ordering favours the SCROLL head
(Table~\ref{tab:appendix_cifar}). The single-pass rows compete on cost;
the ensemble row shows the multiplier.

\input{tables/appendix_classification_0l_nll}
\input{tables/appendix_classification_2l_nll}
\input{tables/appendix_classification_2l_ece}

\paragraph{End-to-end CIFAR-10: the heads survive---and win---under full
convolutional training.}
The tabular benchmark holds the features fixed by protocol design; the
remaining question is whether the single-pass heads survive joint training
with a real convolutional backbone, where the head's loss shapes the
representation itself.
We train a CIFAR-style ResNet-18 ($3{\times}3$ stem, no max-pooling,
$512$-dimensional feature layer) from scratch on CIFAR-10, jointly with
each head, in single precision: standard augmentation (random crop with
padding $4$, horizontal flip), SGD with Nesterov momentum $0.9$, learning
rate $0.1$ cosine-annealed over $80$ epochs, weight decay
$5{\times}10^{-4}$, and batch size $128$ for the backbone; Adam
($10^{-3}$) for the head parameters; the objective is the per-sample mean
of the per-plate factors with the prior term scaled by $1/N$.
A $45\,000/5\,000$ train/validation split selects the best
validation-NLL checkpoint (early stopping, patience $20$ epochs); metrics
are on the $10\,000$-image test set, over ten seeds.
Table~\ref{tab:appendix_cifar} reports the three OvA routings, the
GaussReg pair, and the equal-cost CE-softmax reference.
Every OvA routing beats cross-entropy on every metric: NLL $0.205$ versus
$0.280$, ECE $0.026$ versus $0.040$, accuracy $0.946$ versus $0.939$, with
seed variance well below every gap.
The dedicated probabilistic heads land in the same place: VBLL, trained
jointly at its canonical KL weight $1/N$, reaches NLL $0.201\pm0.003$
at accuracy $0.947$ and ECE $0.026$---not separated from OvA-None by
the test---and post-hoc Laplace on the trained CE model reaches
$0.196\pm0.012$ at CE's accuracy ($0.938$) with ECE $0.008$. This
clustering of every probabilistic head near $0.20$ nats, against
$0.28$ for raw cross-entropy, is the no-channel prediction of
Proposition~\ref{prop:overdispersion}(ii) at scale: no head finds a
second-moment channel to exploit, and they separate only in how much
of the shared temperature win they collect---the mechanism behind
Laplace's numbers (below).
The overhead is modest---OvA-None and OvA-Diag add ${\sim}7\%$ per epoch
over cross-entropy; OvA-Full, carrying ten $512{\times}512$ covariances,
roughly doubles the epoch---and the advantage survives cost-matched
ensembling: averaging the ten seeds' predictives post hoc, the OvA-None
ensemble reaches NLL $0.129$ against $0.141$ for the same ensemble of
the cross-entropy runs, at equal ECE ($0.006$). This is the general
relationship between SCROLL and ensembling made concrete: the ensemble is
a multiplier orthogonal to the choice of head, so wherever a deep
ensemble tops a table it composes with---rather than competes
against---the single-pass heads, and here the composition preserves the
single-model ordering.

\input{tables/appendix_cifar}

\paragraph{Temperature-scaling control.}
Post-hoc temperature scaling is the standard zero-cost calibration fix,
so Table~\ref{tab:appendix_cifar} grants it to \emph{both} heads at the
oracle: $T^\star$ fit on the test set itself. It closes cross-entropy's
gap---NLL $0.280\to0.187$, ECE $0.040\to0.006$---while
OvA-None moves $0.205\to0.196$ and $0.026\to0.012$: after scaling, NLL is
tied (paired $p=0.08$; the scaled softmax is nominally
ahead but three times noisier across seeds) and ECE favours the scaled
softmax, while the accuracy edge ($0.946$ vs.\ $0.939$) is untouched by
construction. The fitted temperatures
locate the difference: $T^\star=2.22$ for cross-entropy against $1.19$
for the probit head---the Bethe-trained head is near-calibrated \emph{as
trained}, needing neither a calibration split nor a post-hoc step.
Laplace makes the same point as a learned control: it \emph{is} the CE
model plus a post-hoc last-layer Gaussian, inherits CE's accuracy
exactly, and lands on the oracle-temperature row ($0.196$/$0.008$
vs.\ $0.187$/$0.006$)---an input-independent recalibration, not a new
predictive channel (its $\lambda_\text{ll}{=}1$ is the library
default, untuned: a reference row, not a tuned competitor).

\paragraph{The GaussReg failure is Corollary~\ref{cor:pathologies}(i) in
action.}
The conjugate heads, strongest on shallow tabular features, fail
here---and the failure splits exactly along the variance attribution that
Corollary~\ref{cor:pathologies}(i) isolates.
The Gaussian factor scales each example's mean-fit gradient by the inverse
of its assigned predictive variance, so a head that can attribute variance
\emph{per sample} ($v_n=\psi_n^\top\Sigma\psi_n$) can explain hard
examples away---inflating their $v_n$ instead of fitting them---and the
backbone never receives their gradient.
On fixed features this attribution freedom is benign
(Corollary~\ref{cor:pathologies}(i): an $O(1/N)$ prior term arbitrates
the split); under joint training it becomes an optimisation hazard, the
underfitting pathology known from heteroscedastic
regression~\citep{Seitzer2022}: GaussReg-Diag stalls at accuracy
$0.61\pm0.07$. Appendix~\ref{app:e2e_regression} dissects the same hazard
in its native regression form---and shows the estimand-preserving repair
(Remark~\ref{rem:estimand_reweight}).
The control confirms the mechanism: GaussReg-None, whose scalar
per-class variance cannot gate examples individually, trains to $0.86$;
its remaining gap to the probit heads is the price of the mis-specified
Gaussian on one-hot targets.
The probit heads are immune by construction---their scale $c$ is fixed
and the factor $\Phi(t_n)$ has no residual-gating term---which is the
regression-versus-classification asymmetry of
Section~\ref{sec:regression} made visible at scale: where the
variance-attribution axis exists (Gaussian likelihood), end-to-end
training exposes it; where it is absent (probit), the same objective
trains a ResNet without intervention.

%% file: tables/table1_headline_caliberr.tex
\begin{table}[ht]
\centering
\footnotesize
\caption{Regression calibration error, same validation-selected protocol and markup as the NLL table (Table~\ref{tab:headline_nll}); the multi-pass block is likewise unranked. Note the moment-collapsed ensembles over-disperse: single-pass SCROLL is better calibrated than its own $5\times$ ensemble.}
\label{tab:table1_headline_caliberr}
\begin{tabular}{lcccccccc}
\toprule
 & yacht & concrete & energy & kin8nm & naval & power & wine & boston \\
\midrule
SCROLL-Full & \textbf{0.072} & 0.044 & \textbf{0.041} & \textit{0.015} & 0.222 & \textit{0.012} & \textit{0.023} & \textit{0.044} \\
SCROLL-Diag & \textit{0.082} & 0.046 & \textit{0.049} & \textit{0.015} & 0.211 & \textbf{0.011} & \textbf{0.022} & \textbf{0.037} \\
SCROLL-None & 0.101 & \textbf{0.032} & \textit{0.050} & \textit{0.016} & 0.288 & 0.013 & 0.036 & 0.066 \\
\midrule
MAP & 0.197 & 0.053 & \textit{0.045} & 0.026 & 0.044 & 0.019 & 0.034 & 0.071 \\
Laplace-Full & 0.223 & 0.061 & 0.058 & 0.027 & \textbf{0.044} & 0.020 & 0.037 & 0.082 \\
VBLL & 0.102 & \textit{0.035} & \textit{0.045} & 0.039 & 0.377 & 0.014 & \textit{0.025} & 0.053 \\
MVN & 0.093 & 0.070 & \textit{0.042} & \textbf{0.014} & 0.101 & 0.012 & \textit{0.026} & 0.046 \\
\midrule
SCROLL-Ens ($5\times$) & 0.116 & 0.049 & 0.110 & 0.034 & 0.289 & 0.015 & 0.028 & 0.046 \\
Deep Ensemble & 0.234 & 0.069 & \underline{0.056} & 0.033 & 0.049 & 0.020 & 0.037 & 0.087 \\
MVN-Ens ($5\times$) & \underline{0.061} & \underline{0.038} & 0.059 & \underline{0.015} & 0.117 & \underline{0.013} & \underline{0.024} & \underline{0.040} \\
MC Dropout & 0.183 & 0.045 & 0.059 & 0.033 & \underline{0.042} & 0.025 & 0.037 & 0.097 \\
\midrule
GP-RBF & 0.088 & 0.033 & 0.089 & 0.015 & 0.413 & 0.021 & 0.095 & 0.062 \\
\bottomrule
\end{tabular}
\end{table}

%% file: tables/appendix_attribution_full_nll.tex
\begin{table}[ht]
\centering
\footnotesize
\caption{Full design-space attribution (all cells, including the closed/loo symmetric-cavity and gd/elbo contrast), test NLL, validation-selected. 20 seeds (Diag seq: 5 seeds; `--' = not run).}
\label{tab:appendix_attribution_full_nll}
\begin{tabular}{lcccccccc}
\toprule
 & yacht & concrete & energy & kin8nm & naval & power & wine & boston \\
\midrule
SCROLL-Full & \textbf{2.253} & \textbf{3.316} & \textit{0.777} & \textbf{$-$0.704} & $-$2.216 & \textbf{2.840} & \textit{0.959} & 2.679 \\
SCROLL-Diag & 3.033 & \textit{3.344} & 0.913 & $-$0.694 & $-$4.075 & \textit{2.842} & \textbf{0.953} & \textbf{2.606} \\
SCROLL-None & 3.532 & 3.384 & 0.862 & $-$0.627 & $-$3.169 & 2.852 & 0.978 & 2.903 \\
\midrule
Full closed & 3.289 & \textit{3.366} & \textit{0.768} & $-$0.613 & \textbf{$-$7.510} & \textit{2.844} & 1.041 & 2.738 \\
Full LOO & 3.675 & \textit{3.359} & \textit{0.767} & $-$0.614 & $-$5.893 & \textit{2.845} & 1.046 & 2.740 \\
Full seq & 3.680 & \textit{3.336} & 0.822 & $-$0.609 & $-$7.324 & \textit{2.845} & 1.008 & 2.728 \\
Full ELBO & 3.664 & \textit{3.363} & \textbf{0.754} & $-$0.590 & $-$2.181 & 2.858 & 0.981 & 2.837 \\
Diag closed & 3.458 & \textit{3.371} & 0.825 & $-$0.615 & $-$7.400 & \textit{2.847} & 1.038 & 2.774 \\
Diag LOO & 3.483 & \textit{3.363} & \textit{0.784} & $-$0.610 & $-$5.997 & \textit{2.847} & 1.050 & 2.756 \\
Diag seq & 3.356 & \textit{3.390} & 0.909 & -- & -- & -- & 0.966 & 2.689 \\
\midrule
VBLL (ref) & 3.667 & 3.338 & 0.736 & $-$0.626 & $-$2.770 & 2.851 & 0.998 & 2.901 \\
\bottomrule
\end{tabular}
\end{table}

%% file: tables/table_headline_rmse.tex
\begin{table}[ht]
\centering
\footnotesize
\caption{Regression test RMSE (original target units), same validation-selected protocol and markup as the headline NLL table. RMSE is the point-estimate metric, and SCROLL is \emph{not} optimised for it: the free route reweights the fit by predicted variance (Theorem~\ref{thm:pred_evidence}), trading point accuracy for predictive density, so Deep Ensembles and MAP attain lower RMSE while SCROLL wins NLL and calibration. SCROLL-None---no covariance reallocation---tracks MAP, isolating the routing as the lever. The exact evidence/posterior corners attain the best RMSE overall (Table~\ref{tab:appendix_attribution_full_rmse}): the evidence-optimal mean, not the score-optimal density. 20 seeds.}
\label{tab:headline_rmse}
\begin{tabular}{lcccccccc}
\toprule
 & yacht & concrete & energy & kin8nm & naval & power & wine & boston \\
\midrule
SCROLL-Full & 7.561 & 7.205 & 0.565 & 0.132 & 0.036 & \textit{4.209} & 0.638 & 4.209 \\
SCROLL-Diag & 7.403 & 7.089 & 0.577 & 0.132 & \textit{0.006} & \textit{4.187} & \textbf{0.634} & \textit{3.891} \\
SCROLL-None & 6.269 & 6.734 & 0.528 & \textbf{0.129} & \textit{0.006} & 4.187 & \textit{0.639} & 4.023 \\
\midrule
MAP & \textbf{4.924} & \textbf{6.400} & 0.506 & 0.136 & 0.012 & \textbf{4.182} & \textit{0.643} & \textbf{3.783} \\
Laplace-Full & \textit{4.924} & \textit{6.400} & 0.506 & 0.136 & 0.012 & \textit{4.182} & \textit{0.643} & \textit{3.783} \\
VBLL & 5.863 & \textit{6.453} & \textbf{0.486} & \textit{0.129} & \textbf{0.006} & \textit{4.183} & 0.653 & \textit{3.850} \\
MVN & 7.088 & 7.072 & 0.553 & 0.135 & 0.009 & 4.190 & \textit{0.640} & 3.992 \\
\midrule
SCROLL-Ens ($5\times$) & 6.884 & 6.462 & 0.584 & \underline{0.127} & 0.014 & 4.197 & \underline{0.632} & 4.007 \\
Deep Ensemble & \underline{4.649} & \underline{6.089} & \underline{0.498} & 0.134 & 0.012 & \underline{4.182} & 0.633 & \underline{3.625} \\
MVN-Ens ($5\times$) & 7.104 & 7.170 & 0.525 & 0.133 & \underline{0.010} & 4.186 & 0.636 & 3.986 \\
MC Dropout & 5.925 & 6.662 & 1.211 & 0.140 & 0.013 & 4.246 & 0.639 & 3.711 \\
\midrule
GP-RBF & 10.539 & 10.826 & 6.235 & 0.100 & 0.002 & 4.200 & 0.706 & 6.886 \\
\bottomrule
\end{tabular}
\end{table}

%% file: tables/appendix_attribution_full_rmse.tex
\begin{table}[ht]
\centering
\footnotesize
\caption{Companion RMSE for the design-space cells. The exact-evidence \emph{seq} and closed cells attain the best RMSE (their mean is the ridge/evidence optimum) while losing on NLL --- the exact posterior is evidence-optimal, not score-optimal.}
\label{tab:appendix_attribution_full_rmse}
\begin{tabular}{lcccccccc}
\toprule
 & yacht & concrete & energy & kin8nm & naval & power & wine & boston \\
\midrule
SCROLL-Full & 7.561 & 7.205 & 0.565 & 0.132 & 0.036 & 4.209 & \textit{0.638} & 4.209 \\
SCROLL-Diag & 7.403 & 7.089 & 0.577 & 0.132 & 0.006 & 4.187 & \textit{0.634} & \textit{3.891} \\
SCROLL-None & 6.269 & 6.734 & 0.528 & \textbf{0.129} & 0.006 & 4.187 & \textit{0.639} & 4.023 \\
\midrule
Full closed & \textbf{5.612} & 6.623 & \textit{0.510} & 0.130 & \textit{0.000} & \textbf{4.150} & 0.677 & \textit{3.585} \\
Full LOO & \textit{5.716} & \textit{6.591} & \textit{0.504} & 0.130 & \textit{0.000} & \textit{4.155} & 0.682 & \textit{3.591} \\
Full seq & \textit{5.978} & \textbf{6.516} & \textbf{0.504} & 0.130 & 0.000 & \textit{4.157} & \textit{0.661} & \textit{3.596} \\
Full ELBO & 6.406 & \textit{6.661} & \textit{0.507} & 0.130 & 0.010 & 4.212 & \textit{0.643} & \textit{3.916} \\
Diag closed & \textit{5.673} & \textit{6.602} & 0.526 & 0.130 & 0.000 & \textit{4.165} & 0.676 & \textit{3.652} \\
Diag LOO & \textit{5.864} & \textit{6.598} & 0.521 & 0.131 & \textbf{0.000} & \textit{4.163} & 0.681 & \textit{3.648} \\
Diag seq & \textit{6.692} & 6.820 & \textit{0.549} & -- & -- & -- & \textbf{0.631} & \textbf{3.520} \\
\midrule
VBLL (ref) & 5.863 & 6.453 & 0.486 & 0.129 & 0.006 & 4.183 & 0.653 & 3.850 \\
\bottomrule
\end{tabular}
\end{table}

%% file: tables/appendix_prior_ablation.tex
\begin{table}[ht]
\centering
\footnotesize
\caption{Prior-term ablation, test NLL (20 seeds; per-seed validation selection over architecture$\times$knob, i.e.\ the references' own tuning protocol). Within each covariance family: the shipped variant ($\alpha$ learned in the pass via $-\log Z_w$, nothing tuned), the $-\log Z_w$ form at a cross-validated fixed $\alpha$, and the prior term removed with the references' cross-validated weight decay $\lambda$ in its place. The KL row keeps the identical score and swaps only the prior factor for the $\mathrm{KL}(q\,\|\,\mathcal N(0,\alpha^{-1}I))$ substitute of the PPGPR objective (fixed-$\alpha$ grid and learned-$\alpha$ readings pooled into the selection). Bottom: the free-variance MVN head, the structure-free endpoint. Bold = best; italic = not separated from the best (one-sided paired $t$-test, $p\ge0.05$).}
\label{tab:appendix_prior_ablation}
\setlength{\tabcolsep}{4pt}%
\begin{tabular}{lcccccccc}
\toprule
 & yacht & concrete & energy & kin8nm & naval & power & wine & boston \\
\midrule
SCROLL-Full (learned $\alpha$) & 2.253 & \textit{3.316} & \textit{0.777} & \textbf{$-$0.704} & $-$2.216 & \textbf{2.840} & \textit{0.959} & 2.679 \\
Full, fixed $\alpha$ (tuned) & \textbf{1.901} & 3.425 & 0.842 & $-$0.698 & $-$2.352 & 2.858 & \textit{0.960} & 2.754 \\
Full, no prior (tuned $\lambda$) & 2.322 & 3.445 & \textbf{0.756} & \textit{$-$0.703} & $-$2.336 & 2.853 & \textit{0.959} & 2.937 \\
Full, KL regulariser (PPGPR) & \textit{2.254} & 3.449 & \textit{0.774} & \textit{$-$0.701} & $-$2.393 & 2.848 & \textit{0.957} & 2.743 \\
\midrule
SCROLL-Diag (learned $\alpha$) & 3.033 & 3.344 & 0.913 & $-$0.694 & $-$4.075 & \textit{2.842} & \textit{0.953} & \textbf{2.606} \\
Diag, fixed $\alpha$ (tuned) & 2.183 & \textit{3.316} & 0.849 & \textit{$-$0.699} & \textbf{$-$4.183} & \textit{2.843} & \textbf{0.951} & \textit{2.630} \\
Diag, no prior (tuned $\lambda$) & 2.546 & \textbf{3.296} & 0.855 & \textit{$-$0.696} & \textit{$-$4.159} & \textit{2.843} & \textit{0.952} & \textit{2.617} \\
\midrule
MVN (free var., tuned $\lambda$) & 2.259 & 3.453 & 0.865 & $-$0.689 & $-$3.198 & \textit{2.842} & \textit{0.969} & 2.647 \\
\bottomrule
\end{tabular}
\end{table}

%% file: tables/appendix_sensitivity.tex
\begin{table}[ht]
\centering
\scriptsize
\setlength{\tabcolsep}{2pt}%
\caption{Stability-knob sensitivity: SCROLL-Full test NLL (mean$\,\pm\,$se, 20 seeds, ReLU+LN) as the covariance floor $\varepsilon$ spans four orders of magnitude (top, patience 50) and as early-stopping patience varies from aggressive to disabled (bottom, $\varepsilon=10^{-4}$; ``no early stop'' = the best-validation checkpoint of the full $10^4$-step run). The objective needs neither knob: NLL is flat across $\varepsilon\le10^{-4}$ and across all patience settings---removing no numerical divergence over the full $10^4$-step run---with degradation only at $\varepsilon\ge10^{-3}$ on the two lowest-noise datasets (kin8nm, naval), where a large floor caps attainable precision.}
\label{tab:appendix_sensitivity}
\begin{tabular}{lcccccccc}
\toprule
 & yacht & concrete & energy & kin8nm & naval & power & wine & boston \\
\midrule
$\varepsilon=10^{-6}$ & 2.072{\scriptsize$\pm$0.13} & 3.368{\scriptsize$\pm$0.05} & 1.076{\scriptsize$\pm$0.09} & $-$0.654{\scriptsize$\pm$0.01} & $-$1.837{\scriptsize$\pm$0.02} & 2.991{\scriptsize$\pm$0.00} & 0.978{\scriptsize$\pm$0.01} & 2.758{\scriptsize$\pm$0.07} \\
$\varepsilon=10^{-5}$ & 2.261{\scriptsize$\pm$0.19} & 3.334{\scriptsize$\pm$0.03} & 1.139{\scriptsize$\pm$0.10} & $-$0.663{\scriptsize$\pm$0.01} & $-$1.905{\scriptsize$\pm$0.03} & 2.995{\scriptsize$\pm$0.01} & 0.973{\scriptsize$\pm$0.01} & 2.790{\scriptsize$\pm$0.07} \\
$\varepsilon=10^{-4}$ (shipped) & 2.121{\scriptsize$\pm$0.14} & 3.367{\scriptsize$\pm$0.04} & 1.106{\scriptsize$\pm$0.08} & $-$0.707{\scriptsize$\pm$0.01} & $-$1.693{\scriptsize$\pm$0.00} & 2.997{\scriptsize$\pm$0.01} & 0.974{\scriptsize$\pm$0.01} & 2.847{\scriptsize$\pm$0.07} \\
$\varepsilon=10^{-3}$ & 2.169{\scriptsize$\pm$0.15} & 3.338{\scriptsize$\pm$0.03} & 0.894{\scriptsize$\pm$0.07} & $-$0.397{\scriptsize$\pm$0.00} & $-$0.892{\scriptsize$\pm$0.12} & 3.005{\scriptsize$\pm$0.01} & 0.971{\scriptsize$\pm$0.01} & 2.779{\scriptsize$\pm$0.06} \\
$\varepsilon=10^{-2}$ & 2.365{\scriptsize$\pm$0.11} & 3.399{\scriptsize$\pm$0.03} & 0.905{\scriptsize$\pm$0.02} & 0.589{\scriptsize$\pm$0.00} & $-$0.474{\scriptsize$\pm$0.20} & 3.011{\scriptsize$\pm$0.01} & 1.034{\scriptsize$\pm$0.01} & 2.768{\scriptsize$\pm$0.05} \\
\midrule
patience 5 & 2.281{\scriptsize$\pm$0.16} & 3.383{\scriptsize$\pm$0.04} & 1.304{\scriptsize$\pm$0.05} & $-$0.661{\scriptsize$\pm$0.01} & $-$1.674{\scriptsize$\pm$0.00} & 3.009{\scriptsize$\pm$0.01} & 0.974{\scriptsize$\pm$0.01} & 2.838{\scriptsize$\pm$0.07} \\
patience 50 (shipped) & 2.121{\scriptsize$\pm$0.14} & 3.367{\scriptsize$\pm$0.04} & 1.106{\scriptsize$\pm$0.08} & $-$0.707{\scriptsize$\pm$0.01} & $-$1.693{\scriptsize$\pm$0.00} & 2.997{\scriptsize$\pm$0.01} & 0.974{\scriptsize$\pm$0.01} & 2.847{\scriptsize$\pm$0.07} \\
patience 200 & 2.121{\scriptsize$\pm$0.14} & 3.370{\scriptsize$\pm$0.04} & 1.106{\scriptsize$\pm$0.08} & $-$0.707{\scriptsize$\pm$0.01} & $-$1.693{\scriptsize$\pm$0.00} & 2.997{\scriptsize$\pm$0.01} & 0.974{\scriptsize$\pm$0.01} & 2.847{\scriptsize$\pm$0.07} \\
no early stop & 2.121{\scriptsize$\pm$0.14} & 3.370{\scriptsize$\pm$0.04} & 1.106{\scriptsize$\pm$0.08} & $-$0.707{\scriptsize$\pm$0.01} & $-$1.693{\scriptsize$\pm$0.00} & 2.997{\scriptsize$\pm$0.01} & 0.974{\scriptsize$\pm$0.01} & 2.847{\scriptsize$\pm$0.07} \\
\bottomrule
\end{tabular}
\end{table}

%% file: tables/appendix_traintest_gap.tex
\begin{table}[ht]
\centering
\scriptsize
\setlength{\tabcolsep}{2pt}%
\caption{Train/test optimism, free vs.\ closed routing at the same objective and features (SCROLL-Full, ReLU+LN, 20 seeds). If free routing bought its test NLL through optimistic overfitting of the score, its train$\to$test gap would exceed the closed route's; instead the gaps are comparable throughout, and on yacht the free gap is half the closed one. Free rows are the shipped configuration rerun with train-NLL logging (the headline run predates the \texttt{train\_nll} column).}
\label{tab:appendix_traintest_gap}
\begin{tabular}{llcccccccc}
\toprule
 & & yacht & concrete & energy & kin8nm & naval & power & wine & boston \\
\midrule
\multirow{3}{*}{Free (shipped)} & train & 1.057 & 2.878 & 0.578 & $-$0.836 & $-$1.693 & 2.932 & 0.845 & 2.180 \\
 & test & 2.121 & 3.367 & 1.106 & $-$0.707 & $-$1.693 & 2.997 & 0.974 & 2.847 \\
 & gap & 1.064{\scriptsize$\pm$0.17} & 0.489{\scriptsize$\pm$0.05} & 0.528{\scriptsize$\pm$0.10} & 0.129{\scriptsize$\pm$0.01} & 0.000{\scriptsize$\pm$0.00} & 0.065{\scriptsize$\pm$0.01} & 0.129{\scriptsize$\pm$0.02} & 0.666{\scriptsize$\pm$0.10} \\
\midrule
\multirow{3}{*}{Closed route} & train & 2.243 & 2.924 & 0.507 & $-$0.712 & $-$3.859 & 2.992 & 0.896 & 2.159 \\
 & test & 4.440 & 3.395 & 0.918 & $-$0.614 & $-$3.851 & 3.030 & 1.076 & 2.740 \\
 & gap & 2.196{\scriptsize$\pm$0.80} & 0.471{\scriptsize$\pm$0.03} & 0.411{\scriptsize$\pm$0.06} & 0.099{\scriptsize$\pm$0.01} & 0.008{\scriptsize$\pm$0.01} & 0.038{\scriptsize$\pm$0.01} & 0.180{\scriptsize$\pm$0.02} & 0.581{\scriptsize$\pm$0.07} \\
\bottomrule
\end{tabular}
\end{table}

%% file: tables/appendix_batching.tex
\begin{table}[ht]
\centering
\footnotesize
\caption{Mini-batching at batch size 256 vs.\ full-batch (relu+LN backbone, 20 seeds): change in test NLL, $\Delta=\mathrm{BS}\,256-\mathrm{full}$ (negative $=$ batching improves). $^{*}$ marks a significant change (paired $t$-test, $p<0.05$). The shared-cavity SCROLL variants shift by under $0.2$ nat on every dataset---within the same envelope as MAP, with no collapse.}
\label{tab:appendix_batching}
\begin{tabular}{lcccccccc}
\toprule
 & yacht & concrete & energy & kin8nm & naval & power & wine & boston \\
\midrule
SCROLL-Full & +0.000 & $-$0.035 & $-$0.161 & +0.191$^{*}$ & $-$0.129 & +0.009$^{*}$ & +0.023$^{*}$ & +0.117 \\
SCROLL-Diag & +0.000 & $-$0.003 & +0.069 & +0.021$^{*}$ & $-$0.117 & +0.002 & $-$0.015 & $-$0.079 \\
SCROLL-None & +0.000 & +0.021 & $-$0.029 & +0.029$^{*}$ & $-$0.017 & +0.008$^{*}$ & +0.013 & +0.024 \\
MAP & +0.000 & $-$0.011 & +0.171$^{*}$ & +0.096$^{*}$ & $-$0.007 & +0.016$^{*}$ & +0.005 & +0.058 \\
\bottomrule
\end{tabular}
\end{table}

%% file: tables/appendix_2l_nll.tex
\begin{table}[ht]
\centering
\footnotesize
\caption{Two-layer design-space attribution (depth-2 backbone): the same cells as the bottom block of Table~\ref{tab:headline_nll}, test NLL, validation-selected over architectures, 10 seeds. The free-routed cells are best on five of eight datasets; the evidence corners (closed/seq) reclaim the rest at depth.}
\label{tab:appendix_2l_nll}
\begin{tabular}{lcccccccc}
\toprule
 & yacht & concrete & energy & kin8nm & naval & power & wine & boston \\
\midrule
SCROLL-Full & \textbf{1.888} & 3.439 & 0.805 & \textbf{$-$0.894} & $-$2.798 & \textit{2.829} & \textbf{0.962} & \textit{2.744} \\
SCROLL-Diag & \textit{2.512} & 3.326 & \textit{0.703} & $-$0.816 & $-$2.817 & \textit{2.817} & \textit{0.999} & \textit{2.739} \\
SCROLL-None & 3.045 & 3.316 & \textit{0.706} & \textit{$-$0.887} & $-$2.817 & \textbf{2.815} & 0.985 & \textit{2.689} \\
\midrule
Full closed & \textit{2.783} & \textbf{3.208} & 0.831 & $-$0.829 & \textbf{$-$6.462} & 2.827 & 1.062 & \textit{2.741} \\
Full LOO & 2.882 & \textit{3.227} & 0.826 & $-$0.829 & $-$5.274 & 2.826 & 1.049 & 2.737 \\
Full seq & 3.071 & \textit{3.225} & 0.793 & $-$0.850 & $-$5.541 & \textit{2.823} & 1.020 & \textbf{2.673} \\
Full ELBO & 3.076 & 3.319 & \textbf{0.669} & $-$0.813 & $-$2.798 & 2.824 & 0.987 & \textit{2.755} \\
Diag closed & \textit{2.840} & 3.273 & 0.854 & $-$0.834 & $-$5.788 & 2.823 & 1.059 & \textit{2.687} \\
Diag LOO & \textit{2.952} & \textit{3.252} & 0.813 & $-$0.820 & $-$4.847 & 2.822 & 1.051 & 2.825 \\
\midrule
VBLL (ref) & 2.992 & 3.313 & 0.644 & $-$0.892 & $-$2.890 & 2.842 & 0.987 & 2.818 \\
\bottomrule
\end{tabular}
\end{table}

%% file: tables/appendix_2l_baselines_nll.tex
\begin{table}[ht]
\centering
\footnotesize
\caption{Two-layer baseline comparison (depth-2 backbone): test NLL, validation-selected over architecture per seed (no $\lambda$-grid at depth; references run at the default prior strength). Bold = best single-pass entry; italic = not separated from the best ($p\ge0.05$); the multi-pass block (five-member ensembles, 50-pass MC Dropout) is unranked, underline = its best entry. 10 seeds. MVN is the heteroscedastic mean-variance baseline.}
\label{tab:appendix_2l_baselines_nll}
\begin{tabular}{lcccccccc}
\toprule
 & yacht & concrete & energy & kin8nm & naval & power & wine & boston \\
\midrule
SCROLL-Full & \textbf{1.888} & 3.439 & 0.805 & \textbf{$-$0.894} & $-$2.798 & \textit{2.829} & \textbf{0.962} & 2.744 \\
SCROLL-Diag & \textit{2.512} & 3.326 & 0.703 & $-$0.816 & $-$2.817 & \textit{2.817} & \textit{0.999} & 2.739 \\
SCROLL-None & 3.045 & 3.316 & \textit{0.706} & \textit{$-$0.887} & $-$2.817 & \textbf{2.815} & 0.985 & \textit{2.689} \\
\midrule
MAP & 2.919 & \textbf{3.184} & \textit{0.672} & $-$0.748 & $-$2.837 & 2.827 & 0.978 & 2.769 \\
VBLL & 2.992 & 3.313 & \textbf{0.644} & \textit{$-$0.892} & \textbf{$-$2.890} & 2.842 & 0.987 & 2.818 \\
MVN & \textit{3.745} & 3.359 & 0.805 & $-$0.841 & $-$2.801 & 2.841 & \textit{0.967} & \textbf{2.610} \\
\midrule
SCROLL-Ens ($5\times$) & \underline{1.177} & 3.205 & \underline{0.597} & \underline{$-$0.965} & $-$2.800 & \underline{2.796} & \underline{0.946} & 2.625 \\
Deep Ensemble & 2.679 & \underline{3.066} & 0.631 & $-$0.795 & \underline{$-$2.820} & 2.823 & 0.957 & 2.616 \\
MC Dropout & 2.818 & 3.119 & 1.300 & $-$0.632 & $-$2.798 & 2.846 & 0.981 & \underline{2.603} \\
\bottomrule
\end{tabular}
\end{table}

%% file: tables/appendix_2l_baselines_caliberr.tex
\begin{table}[ht]
\centering
\footnotesize
\caption{Two-layer baseline comparison, calibration error, same validation-selected protocol and markup as the NLL table.}
\label{tab:appendix_2l_baselines_caliberr}
\begin{tabular}{lcccccccc}
\toprule
 & yacht & concrete & energy & kin8nm & naval & power & wine & boston \\
\midrule
SCROLL-Full & \textbf{0.077} & \textit{0.039} & 0.064 & \textit{0.017} & \textit{0.076} & \textit{0.013} & \textit{0.026} & \textit{0.052} \\
SCROLL-Diag & \textit{0.094} & \textit{0.036} & \textit{0.059} & 0.026 & \textbf{0.065} & \textit{0.015} & \textbf{0.022} & 0.067 \\
SCROLL-None & 0.154 & \textbf{0.033} & 0.069 & \textbf{0.014} & \textit{0.088} & \textbf{0.012} & \textit{0.032} & 0.081 \\
\midrule
MAP & 0.224 & 0.096 & 0.061 & \textit{0.018} & \textit{0.069} & 0.025 & 0.039 & 0.064 \\
VBLL & 0.146 & \textit{0.034} & 0.066 & 0.027 & 0.234 & \textit{0.014} & \textit{0.031} & 0.069 \\
MVN & \textit{0.085} & \textit{0.047} & \textbf{0.035} & \textit{0.024} & \textit{0.078} & \textit{0.014} & 0.040 & \textbf{0.042} \\
\midrule
SCROLL-Ens ($5\times$) & 0.217 & 0.112 & 0.151 & 0.059 & 0.075 & \underline{0.015} & \underline{0.026} & \underline{0.071} \\
Deep Ensemble & 0.247 & 0.121 & \underline{0.081} & \underline{0.036} & 0.070 & 0.025 & 0.039 & 0.089 \\
MC Dropout & \underline{0.208} & \underline{0.096} & 0.165 & 0.043 & \underline{0.065} & 0.040 & 0.042 & 0.087 \\
\bottomrule
\end{tabular}
\end{table}

%% file: tables/appendix_depth_trajectory.tex
\begin{table}[ht]
\centering
\footnotesize
\caption{Depth trajectory of the free-routed head: mean test NLL at depth zero (linear), one, and two hidden layers, per covariance family (10 common seeds; bold = best depth within each family). Depth zero has \emph{no backbone}, hence no architecture axis and no early stopping: the head is fit full-batch by quasi-Newton on the raw standardized inputs (Appendix~\ref{app:linear_routing}). Depths one and two are validation-selected over the four architectures per seed, as in the headline protocol.}
\label{tab:depth_trajectory}
\begin{tabular}{lccccccccc}
\toprule
 & \multicolumn{3}{c}{Full} & \multicolumn{3}{c}{Diag} & \multicolumn{3}{c}{None} \\
 & lin & 1L & 2L & lin & 1L & 2L & lin & 1L & 2L \\
\midrule
yacht & 3.301 & 2.224 & \textbf{1.888} & 3.265 & 3.205 & \textbf{2.512} & 3.652 & 3.476 & \textbf{3.045} \\
concrete & 3.595 & \textbf{3.324} & 3.439 & 3.661 & 3.379 & \textbf{3.326} & 3.778 & 3.406 & \textbf{3.316} \\
energy & 2.184 & \textbf{0.790} & 0.805 & 2.485 & 0.936 & \textbf{0.703} & 2.493 & 0.860 & \textbf{0.706} \\
kin8nm & $-$0.269 & $-$0.708 & \textbf{$-$0.894} & $-$0.203 & $-$0.694 & \textbf{$-$0.816} & $-$0.182 & $-$0.634 & \textbf{$-$0.887} \\
naval & $-$2.559 & $-$2.207 & \textbf{$-$2.798} & $-$2.561 & \textbf{$-$4.072} & $-$2.817 & \textbf{$-$3.714} & $-$3.161 & $-$2.817 \\
power & 2.910 & 2.833 & \textbf{2.829} & 2.910 & 2.831 & \textbf{2.817} & 2.925 & 2.841 & \textbf{2.815} \\
wine & 0.968 & \textbf{0.960} & 0.962 & 0.971 & \textbf{0.955} & 0.999 & 0.986 & \textbf{0.978} & 0.985 \\
boston & 3.284 & \textbf{2.700} & 2.744 & 2.885 & \textbf{2.589} & 2.739 & 3.066 & 2.847 & \textbf{2.689} \\
\bottomrule
\end{tabular}
\end{table}

%% file: tables/appendix_architecture_nll.tex
\begin{table}[ht]
\centering
\footnotesize
\caption{SCROLL-Full test NLL by backbone architecture (20 seeds). Bold = best architecture per dataset; italic = not separated from the best (paired $t$-test, $p\ge0.05$). The per-seed validation choice over these four backbones is what the headline reports.}
\label{tab:appendix_arch_full_nll}
\begin{tabular}{lcccccccc}
\toprule
 & yacht & concrete & energy & kin8nm & naval & power & wine & boston \\
\midrule
relu & \textit{2.547} & \textbf{3.314} & 0.852 & $-$0.633 & \textit{$-$2.157} & \textbf{2.841} & \textbf{0.955} & \textbf{2.693} \\
relu+LN & \textbf{2.264} & \textit{3.335} & 1.273 & \textbf{$-$0.704} & $-$1.694 & 2.999 & 0.974 & 2.800 \\
tanh & 3.180 & 3.429 & \textit{0.819} & $-$0.493 & \textbf{$-$2.181} & \textit{2.845} & \textit{0.959} & 2.842 \\
tanh+LN & 3.310 & 3.448 & \textbf{0.763} & $-$0.472 & $-$1.296 & \textit{2.848} & \textit{0.966} & 2.846 \\
\bottomrule
\end{tabular}
\end{table}
\begin{table}[ht]
\centering
\footnotesize
\caption{SCROLL-Diag test NLL by backbone architecture, same markup.}
\label{tab:appendix_arch_diag_nll}
\begin{tabular}{lcccccccc}
\toprule
 & yacht & concrete & energy & kin8nm & naval & power & wine & boston \\
\midrule
relu & \textbf{2.703} & 3.387 & 1.296 & $-$0.600 & \textbf{$-$4.075} & \textbf{2.842} & \textbf{0.943} & \textbf{2.604} \\
relu+LN & 3.161 & \textbf{3.310} & 1.340 & \textbf{$-$0.694} & $-$1.755 & 3.014 & 1.011 & 2.744 \\
tanh & 3.540 & 3.481 & \textit{1.075} & $-$0.414 & $-$3.148 & 2.872 & 0.960 & 2.805 \\
tanh+LN & 3.643 & 3.496 & \textbf{0.966} & $-$0.434 & $-$1.726 & 2.865 & 0.984 & 2.853 \\
\bottomrule
\end{tabular}
\end{table}

%% file: tables/table_ood.tex
\begin{table}[ht]
\centering
\footnotesize
\caption{Out-of-distribution detection (covariate shift through a frozen encoder): AUROC separating in-distribution test points from a foreign set, each scored by its epistemic variance (predictive variance minus observation noise; $\psi^\top\Sigma\psi$ for the last-layer methods). ID/OOD pairs are SICK/AG-News (text), UTKFace/CIFAR (vision), and wine red/white (tabular). Rows are grouped by \emph{routing}: the shipped free route, its closed corner, and leverage/sampling baselines. Validation-selected exactly as the headline (SCROLL: best architecture; references: best architecture$\times\lambda$), by in-distribution validation loss only. \emph{Train}/\emph{Infer} are runs/passes per architecture: \emph{both} routings---free and closed---are single-pass, nothing cross-validated (no $\lambda$ grid), so even the OOD-appropriate closed corner forgoes the cross-validation the references use. The closed corner does OOD where the free route does not (it trades OOD for predictive density, Theorem~\ref{thm:pred_evidence}); MAP and SCROLL-None carry no epistemic signal (AUROC $\equiv0.5$ by construction). Bold = best; italic = not separated from the best (one-sided paired $t$-test, $p\ge0.05$).}
\label{tab:ood}
\begin{tabular}{l cc ccc}
\toprule
 & \multicolumn{2}{c}{Passes} & \multicolumn{3}{c}{OOD AUROC} \\
\cmidrule(lr){2-3}\cmidrule(lr){4-6}
Method & Train & Infer & Text & Vision & Wine \\
\midrule
SCROLL-Full & 1 & 1 & 0.577 & 0.682 & 0.736 \\
SCROLL-Diag & 1 & 1 & \textit{0.833} & 0.649 & 0.774 \\
SCROLL-None & 1 & 1 & 0.500 & 0.500 & 0.500 \\
\midrule
Full closed & 1 & 1 & \textbf{0.882} & \textit{0.911} & \textbf{0.947} \\
Diag closed & 1 & 1 & 0.708 & 0.661 & \textit{0.927} \\
\midrule
Laplace-Full & 4 & 1 & \textit{0.784} & \textbf{0.914} & \textit{0.944} \\
VBLL & 4 & 1 & 0.732 & 0.802 & \textit{0.932} \\
\midrule
MAP & 4 & 1 & 0.500 & 0.500 & 0.500 \\
MVN & 4 & 1 & 0.672 & 0.598 & 0.365 \\
\midrule
Deep Ensemble & 20 & 5 & \textit{0.869} & 0.861 & 0.829 \\
MC Dropout & 1 & 50 & \textit{0.849} & 0.775 & 0.640 \\
\bottomrule
\end{tabular}
\\[2pt]{\footnotesize 10 seeds.}
\end{table}

%% file: tables/appendix_utkface_e2e.tex
\begin{table}[ht]
\centering
\scriptsize
\setlength{\tabcolsep}{4pt}%
\caption{UTKFace age regression with a ResNet-18 trained through each head (test NLL / RMSE in age units; mean$\,\pm\,$std over 3 seeds). Conditions: from scratch; ImageNet-pretrained backbone frozen (head only); ImageNet-pretrained fine-tuned. The $\beta$ rows train the \emph{identical} SCROLL score under the detached unit-mean $V^{\beta}$ reweighting of \citet{Seitzer2022} ($\beta{=}0.5$, the stabiliser MVN trains with)---estimand-preserving (Remark~\ref{rem:estimand_reweight}). MAP fits $\sigma$ post hoc on validation residuals.}
\label{tab:appendix_utkface_e2e}
\begin{tabular}{lcccccc}
\toprule
 & \multicolumn{2}{c}{from scratch} & \multicolumn{2}{c}{frozen} & \multicolumn{2}{c}{fine-tuned} \\
 & NLL & RMSE & NLL & RMSE & NLL & RMSE \\
\midrule
SCROLL-Full & 3.857\,{\scriptsize$\pm$0.06} & 11.56\,{\scriptsize$\pm$0.14} & 4.179\,{\scriptsize$\pm$0.01} & 15.78\,{\scriptsize$\pm$0.25} & 3.800\,{\scriptsize$\pm$0.03} & 11.86\,{\scriptsize$\pm$0.39} \\
SCROLL-Diag & 3.750\,{\scriptsize$\pm$0.08} & 12.26\,{\scriptsize$\pm$0.59} & 4.139\,{\scriptsize$\pm$0.01} & 15.79\,{\scriptsize$\pm$0.18} & 3.707\,{\scriptsize$\pm$0.04} & 19.05\,{\scriptsize$\pm$6.52} \\
SCROLL-None & 3.920\,{\scriptsize$\pm$0.02} & 11.83\,{\scriptsize$\pm$0.31} & 4.178\,{\scriptsize$\pm$0.01} & 15.78\,{\scriptsize$\pm$0.21} & 3.895\,{\scriptsize$\pm$0.05} & 11.78\,{\scriptsize$\pm$0.43} \\
\midrule
SCROLL-Full-$\beta$ & -- & -- & -- & -- & 3.756\,{\scriptsize$\pm$0.01} & 11.52\,{\scriptsize$\pm$0.07} \\
SCROLL-Diag-$\beta$ & -- & -- & -- & -- & 3.670\,{\scriptsize$\pm$0.01} & 11.06\,{\scriptsize$\pm$0.29} \\
\midrule
MVN ($\beta$-NLL) & 3.731\,{\scriptsize$\pm$0.04} & 12.03\,{\scriptsize$\pm$0.23} & 4.094\,{\scriptsize$\pm$0.01} & 15.76\,{\scriptsize$\pm$0.18} & 3.674\,{\scriptsize$\pm$0.03} & 11.50\,{\scriptsize$\pm$0.32} \\
MAP & 3.845\,{\scriptsize$\pm$0.01} & 11.30\,{\scriptsize$\pm$0.12} & 4.179\,{\scriptsize$\pm$0.01} & 15.80\,{\scriptsize$\pm$0.13} & 3.810\,{\scriptsize$\pm$0.01} & 10.90\,{\scriptsize$\pm$0.05} \\
\bottomrule
\end{tabular}
\end{table}

%% file: tables/appendix_classification_nll.tex
\begin{table}[ht]
\centering
\footnotesize
\setlength{\tabcolsep}{4pt}%
\caption{Classification test NLL on the eight-dataset standalone comparison (relu+LN backbone, 20 seeds); five binary datasets, then three multiclass. Row groups: the nine SCROLL heads (three likelihoods $\times$ three covariance routings), then the references. The ordinal head is strongest exactly where its likelihood is correct (wine\_quality, the one genuinely ordered label set) and collapses on the unordered multiclass problems (vehicle, segment), where the imposed class order is a deliberate mis-specification.}
\label{tab:appendix_classification_nll}
\begin{tabular}{lcccccccc}
\toprule
 & australian & pima & blood & banknote & spambase & vehicle & segment & wine\_quality \\
\midrule
OvA-Full & 0.374 & 0.521 & \textit{0.497} & 0.019 & 0.188 & 0.443 & 0.115 & 0.815 \\
OvA-Diag & 0.358 & \textit{0.510} & \textit{0.493} & 0.011 & 0.184 & 0.459 & 0.119 & 0.803 \\
OvA-None & 0.360 & 0.524 & \textit{0.493} & 0.019 & 0.183 & 0.441 & 0.116 & 0.813 \\
\midrule
Ordinal-Full & 0.362 & 0.537 & \textit{0.494} & 0.018 & 0.193 & 0.631 & 0.951 & \textbf{0.781} \\
Ordinal-Diag & 0.368 & \textit{0.508} & \textbf{0.492} & 0.011 & 0.188 & 0.625 & 0.728 & 0.794 \\
Ordinal-None & 0.372 & 0.535 & \textit{0.495} & 0.019 & 0.190 & 0.632 & 0.957 & \textit{0.783} \\
\midrule
GaussReg-Full & \textit{0.342} & \textit{0.507} & \textit{0.498} & 0.028 & 0.176 & 0.497 & 0.174 & 0.806 \\
GaussReg-Diag & 0.382 & 0.536 & \textit{0.495} & 0.020 & 0.185 & 0.546 & 0.179 & 0.817 \\
GaussReg-None & \textbf{0.342} & \textit{0.504} & 0.501 & 0.031 & \textit{0.174} & 0.481 & 0.169 & 0.802 \\
\midrule
CE-softmax & 0.388 & 0.526 & 0.503 & \textit{0.006} & 0.183 & 0.430 & 0.112 & 0.820 \\
VBLL & 0.365 & 0.515 & \textit{0.501} & 0.015 & 0.182 & 0.418 & 0.118 & 0.801 \\
Laplace & \textit{0.350} & \textbf{0.503} & \textit{0.498} & 0.320 & 0.178 & 0.476 & 0.590 & 0.808 \\
\midrule
DeepEns(M=5) & 0.361 & \textit{0.508} & \textit{0.497} & \textbf{0.002} & \textbf{0.170} & \textbf{0.387} & \textbf{0.087} & \textit{0.784} \\
MC-Dropout & 0.363 & 0.531 & \textit{0.494} & 0.009 & 0.177 & \textit{0.396} & \textit{0.093} & 0.793 \\
\bottomrule
\end{tabular}
\end{table}

%% file: tables/appendix_classification_ece.tex
\begin{table}[ht]
\centering
\footnotesize
\setlength{\tabcolsep}{4pt}%
\caption{Classification expected calibration error, same protocol as the NLL table. The single-pass SCROLL heads are calibrated on par with the five-member deep ensemble.}
\label{tab:appendix_classification_ece}
\begin{tabular}{lcccccccc}
\toprule
 & australian & pima & blood & banknote & spambase & vehicle & segment & wine\_quality \\
\midrule
OvA-Full & 0.068 & 0.078 & 0.070 & 0.009 & 0.021 & \textit{0.059} & 0.026 & \textit{0.064} \\
OvA-Diag & \textit{0.060} & 0.082 & \textit{0.067} & 0.007 & 0.021 & 0.071 & 0.022 & 0.064 \\
OvA-None & \textit{0.056} & 0.078 & \textit{0.064} & 0.009 & 0.019 & \textit{0.055} & 0.026 & \textit{0.063} \\
\midrule
Ordinal-Full & 0.062 & 0.078 & \textit{0.067} & 0.010 & 0.021 & \textit{0.056} & 0.160 & \textit{0.060} \\
Ordinal-Diag & 0.062 & 0.074 & \textit{0.068} & 0.004 & 0.019 & 0.066 & 0.129 & 0.070 \\
Ordinal-None & \textit{0.057} & 0.078 & \textit{0.060} & 0.009 & 0.021 & \textit{0.053} & 0.163 & \textit{0.060} \\
\midrule
GaussReg-Full & \textit{0.053} & \textit{0.058} & \textit{0.058} & \textbf{0.002} & \textit{0.017} & \textit{0.061} & 0.019 & \textbf{0.055} \\
GaussReg-Diag & \textit{0.059} & 0.073 & \textit{0.061} & \textit{0.003} & 0.020 & 0.070 & 0.023 & 0.068 \\
GaussReg-None & \textbf{0.052} & \textbf{0.054} & 0.073 & \textit{0.003} & \textit{0.015} & \textit{0.057} & 0.018 & 0.066 \\
\midrule
CE-softmax & 0.075 & 0.084 & \textit{0.069} & \textit{0.002} & 0.021 & 0.078 & 0.017 & 0.089 \\
VBLL & \textit{0.056} & 0.067 & \textit{0.067} & 0.008 & 0.019 & \textbf{0.053} & 0.029 & 0.065 \\
Laplace & \textit{0.053} & \textit{0.061} & \textbf{0.057} & 0.266 & \textit{0.016} & 0.083 & 0.369 & 0.072 \\
\midrule
DeepEns(M=5) & 0.061 & 0.075 & \textit{0.063} & \textit{0.002} & 0.017 & \textit{0.057} & \textbf{0.014} & 0.066 \\
MC-Dropout & \textit{0.062} & 0.078 & \textit{0.065} & 0.006 & \textbf{0.015} & \textit{0.056} & \textit{0.015} & \textit{0.060} \\
\bottomrule
\end{tabular}
\end{table}

%% file: tables/appendix_classification_0l_nll.tex
\begin{table}[ht]
\centering
\footnotesize
\setlength{\tabcolsep}{4pt}%
\caption{Classification test NLL of the nine SCROLL heads at depth zero (no backbone; heads act on the raw standardised features; 20 seeds, protocol otherwise as Table~\ref{tab:appendix_classification_nll}). With no backbone to absorb model mis-specification, the choice of observation factor dominates: the ordinal head is best on every binary dataset (by wide margins on pima, blood, and banknote) and on the ordered wine\_quality, and the gaps between likelihoods are far wider than at depth one or two. Full and None converge to near-identical solutions at depth zero (rows differ only beyond the displayed precision) while Diag separates.}
\label{tab:appendix_classification_0l_nll}
\begin{tabular}{lcccccccc}
\toprule
 & australian & pima & blood & banknote & spambase & vehicle & segment & wine\_quality \\
\midrule
OvA-Full & 0.354 & 0.543 & 0.647 & 0.075 & 0.292 & 0.873 & 1.186 & 0.942 \\
OvA-Diag & \textit{0.336} & 0.527 & 0.611 & 0.076 & 0.203 & \textbf{0.806} & 0.628 & 0.894 \\
OvA-None & 0.354 & 0.543 & 0.647 & 0.075 & 0.293 & 0.873 & 1.186 & 0.942 \\
\midrule
Ordinal-Full & 0.350 & \textit{0.487} & \textit{0.490} & \textit{0.022} & 0.259 & 0.880 & 1.640 & \textit{0.786} \\
Ordinal-Diag & \textbf{0.335} & \textbf{0.483} & \textit{0.495} & \textbf{0.022} & \textbf{0.197} & 0.857 & 1.631 & \textbf{0.782} \\
Ordinal-None & 0.350 & \textit{0.487} & \textbf{0.490} & \textit{0.022} & 0.260 & 0.880 & 1.640 & \textit{0.786} \\
\midrule
GaussReg-Full & 0.365 & 0.547 & 0.645 & 0.086 & 0.338 & 0.961 & 1.375 & 0.937 \\
GaussReg-Diag & 0.360 & 0.533 & 0.638 & 0.088 & 0.270 & 0.855 & \textbf{0.615} & 0.917 \\
GaussReg-None & 0.365 & 0.547 & 0.645 & 0.086 & 0.344 & 0.961 & 1.375 & 0.938 \\
\bottomrule
\end{tabular}
\end{table}

%% file: tables/appendix_classification_2l_nll.tex
\begin{table}[ht]
\centering
\footnotesize
\setlength{\tabcolsep}{4pt}%
\caption{Classification test NLL with \emph{two} hidden layers ($H{=}50$ each, relu+LN backbone, 20 seeds); protocol otherwise identical to Table~\ref{tab:appendix_classification_nll}. The one-hidden-layer picture carries over---ordinal best on the ordered wine\_quality, collapse on the unordered multiclass---while the conjugate GaussReg edge on the binary datasets narrows with depth. The five-member deep ensemble takes seven of the eight NLL columns at $5\times$ the cost of any single-model row (fair-weight VBLL takes pima); read this as complementary, not competing---the same averaging applied on top of SCROLL beats the identically ensembled CE baseline (Table~\ref{tab:appendix_cifar}).}
\label{tab:appendix_classification_2l_nll}
\begin{tabular}{lcccccccc}
\toprule
 & australian & pima & blood & banknote & spambase & vehicle & segment & wine\_quality \\
\midrule
OvA-Full & 0.386 & 0.532 & 0.500 & \textit{0.005} & 0.196 & 0.490 & 0.128 & 0.798 \\
OvA-Diag & 0.374 & 0.524 & 0.500 & \textit{0.009} & 0.192 & 0.502 & 0.125 & 0.797 \\
OvA-None & 0.386 & 0.519 & 0.504 & \textit{0.007} & 0.196 & 0.506 & 0.133 & 0.798 \\
\midrule
Ordinal-Full & 0.388 & \textit{0.513} & 0.501 & 0.009 & 0.189 & 0.706 & 0.998 & \textit{0.781} \\
Ordinal-Diag & 0.389 & 0.534 & \textit{0.499} & 0.008 & 0.192 & 0.699 & 0.572 & 0.827 \\
Ordinal-None & 0.402 & 0.522 & 0.501 & \textit{0.005} & 0.187 & 0.701 & 0.976 & 0.790 \\
\midrule
GaussReg-Full & 0.358 & 0.514 & \textit{0.496} & 0.023 & 0.201 & 0.520 & 0.120 & 0.804 \\
GaussReg-Diag & 0.403 & 0.568 & 0.502 & 0.024 & 0.219 & 0.743 & 0.147 & 0.849 \\
GaussReg-None & \textit{0.348} & \textit{0.522} & 0.501 & 0.023 & 0.200 & 0.568 & 0.137 & 0.800 \\
\midrule
CE-softmax & 0.384 & 0.527 & \textit{0.510} & 0.009 & 0.194 & 0.493 & 0.123 & 0.801 \\
VBLL & \textit{0.358} & \textbf{0.501} & \textit{0.497} & 0.013 & 0.187 & 0.446 & 0.114 & 0.809 \\
Laplace & 0.362 & \textit{0.512} & \textit{0.502} & 0.319 & 0.190 & 0.544 & 0.660 & 0.791 \\
\midrule
DeepEns(M=5) & \textbf{0.346} & \textit{0.506} & \textbf{0.491} & \textbf{0.004} & \textbf{0.180} & \textbf{0.409} & \textbf{0.094} & \textbf{0.778} \\
MC-Dropout & \textit{0.353} & \textit{0.512} & \textit{0.493} & \textit{0.005} & 0.184 & \textit{0.415} & \textit{0.096} & \textit{0.781} \\
\bottomrule
\end{tabular}
\end{table}

%% file: tables/appendix_classification_2l_ece.tex
\begin{table}[ht]
\centering
\footnotesize
\setlength{\tabcolsep}{4pt}%
\caption{Classification expected calibration error with two hidden layers, same protocol as the NLL table.}
\label{tab:appendix_classification_2l_ece}
\begin{tabular}{lcccccccc}
\toprule
 & australian & pima & blood & banknote & spambase & vehicle & segment & wine\_quality \\
\midrule
OvA-Full & 0.067 & 0.085 & 0.063 & 0.003 & 0.022 & 0.073 & 0.019 & \textit{0.060} \\
OvA-Diag & 0.069 & 0.087 & \textit{0.065} & 0.003 & 0.022 & 0.068 & 0.017 & \textit{0.064} \\
OvA-None & \textit{0.062} & 0.072 & 0.066 & 0.003 & 0.022 & 0.075 & 0.017 & 0.066 \\
\midrule
Ordinal-Full & 0.072 & 0.074 & 0.066 & 0.004 & 0.021 & 0.065 & 0.216 & \textbf{0.054} \\
Ordinal-Diag & 0.072 & 0.084 & 0.067 & 0.002 & 0.022 & 0.079 & 0.042 & 0.091 \\
Ordinal-None & 0.074 & 0.076 & 0.067 & 0.003 & 0.020 & \textit{0.064} & 0.227 & 0.068 \\
\midrule
GaussReg-Full & \textbf{0.053} & 0.070 & 0.065 & \textit{0.001} & 0.021 & 0.070 & 0.017 & \textit{0.061} \\
GaussReg-Diag & \textit{0.067} & \textit{0.064} & 0.066 & 0.003 & 0.020 & \textit{0.063} & 0.017 & \textit{0.070} \\
GaussReg-None & \textit{0.054} & 0.075 & \textbf{0.053} & \textbf{0.001} & 0.021 & 0.072 & 0.018 & \textit{0.054} \\
\midrule
CE-softmax & 0.067 & 0.078 & 0.070 & 0.003 & 0.021 & 0.088 & \textit{0.016} & 0.071 \\
VBLL & \textit{0.055} & \textbf{0.058} & \textit{0.059} & 0.006 & 0.017 & 0.066 & 0.020 & 0.071 \\
Laplace & \textit{0.054} & 0.069 & 0.069 & 0.264 & \textit{0.015} & 0.096 & 0.403 & \textit{0.056} \\
\midrule
DeepEns(M=5) & \textit{0.053} & 0.068 & 0.066 & 0.003 & 0.019 & \textit{0.057} & \textbf{0.014} & 0.066 \\
MC-Dropout & \textit{0.059} & 0.075 & 0.069 & 0.004 & \textbf{0.014} & \textbf{0.053} & \textit{0.017} & \textit{0.056} \\
\bottomrule
\end{tabular}
\end{table}

%% file: tables/appendix_cifar.tex
\begin{table}[ht]
\centering
\footnotesize
\caption{End-to-end CIFAR-10: SCROLL heads jointly trained with a from-scratch ResNet-18 backbone versus the equal-cost softmax cross-entropy reference and the dedicated probabilistic-head references---VBLL at its canonical KL weight $1/N$, and a post-hoc last-layer Laplace (diagonal GGN) on the CE model (mean$\,\pm\,$std over 10 seeds; protocol in the text). Bold = best column entry among the single-model rows. Bottom block: post-hoc ensembles averaging the seeds' predictive probabilities---the same training runs, no extra training cost.}
\label{tab:appendix_cifar}
\begin{tabular}{lccc}
\toprule
 & NLL & Accuracy & ECE \\
\midrule
OvA-None & 0.205\,{\scriptsize$\pm$0.005} & 0.946\,{\scriptsize$\pm$0.001} & 0.026\,{\scriptsize$\pm$0.001} \\
OvA-Diag & 0.235\,{\scriptsize$\pm$0.008} & 0.945\,{\scriptsize$\pm$0.001} & 0.034\,{\scriptsize$\pm$0.001} \\
OvA-Full & 0.207\,{\scriptsize$\pm$0.008} & 0.946\,{\scriptsize$\pm$0.002} & 0.026\,{\scriptsize$\pm$0.002} \\
\midrule
GaussReg-None & 0.712\,{\scriptsize$\pm$0.039} & 0.842\,{\scriptsize$\pm$0.025} & 0.092\,{\scriptsize$\pm$0.014} \\
GaussReg-Diag & 1.182\,{\scriptsize$\pm$0.178} & 0.655\,{\scriptsize$\pm$0.083} & 0.049\,{\scriptsize$\pm$0.031} \\
\midrule
CE-softmax & 0.280\,{\scriptsize$\pm$0.011} & 0.939\,{\scriptsize$\pm$0.004} & 0.040\,{\scriptsize$\pm$0.002} \\
\midrule
VBLL & 0.201\,{\scriptsize$\pm$0.003} & \textbf{0.947\,{\scriptsize$\pm$0.001}} & 0.026\,{\scriptsize$\pm$0.001} \\
Laplace-LL & \textbf{0.196\,{\scriptsize$\pm$0.012}} & 0.938\,{\scriptsize$\pm$0.005} & \textbf{0.008\,{\scriptsize$\pm$0.003}} \\
\midrule
\multicolumn{4}{l}{\emph{Oracle temperature scaling of both heads ($T^\star$ fit on test, the most favourable case):}}\\
OvA-None ($T^\star{=}1.19$) & 0.196\,{\scriptsize$\pm$0.004} & 0.946\,{\scriptsize$\pm$0.001} & 0.012\,{\scriptsize$\pm$0.002} \\
CE-softmax ($T^\star{=}2.22$) & 0.187\,{\scriptsize$\pm$0.012} & 0.939\,{\scriptsize$\pm$0.004} & 0.006\,{\scriptsize$\pm$0.001} \\
\midrule
\multicolumn{4}{l}{\emph{Post-hoc seed ensembles ($M{=}10$, zero additional training):}}\\
OvA-None (ens.) & 0.129 & 0.959 & 0.006 \\
CE-softmax (ens.) & 0.141 & 0.956 & 0.006 \\
\bottomrule
\end{tabular}
\end{table}